\documentclass[conference,onecolumn]{IEEEtran}
\usepackage{amssymb}
\usepackage{bm}
\usepackage{amsmath}
\usepackage{amsthm}
\usepackage[]{algorithm2e}
\newtheorem{theorem}{Theorem}
\newtheorem{lemma}{Lemma}
\newtheorem{definition}{Definition}
\newtheorem{proposition}{Proposition}
\usepackage{subfig}
\usepackage[pdftex]{graphicx}
\DeclareGraphicsExtensions{.pdf,.jpeg,.png}


\begin{document}
\title{Rational Kernels: A survey\\{\large Literature Survey Report}}

\author{\IEEEauthorblockN{Abhishek Ghose}
\IEEEauthorblockA{Department of Computer Science and Engineering, IIT Madras\\E-mail: abhishek.ghose.82@gmail.com}
}

\maketitle


\IEEEpeerreviewmaketitle

\section{Introduction}


Many kinds of data are naturally amenable to being treated as sequences. An example is text data, where a text may be seen as a sequence of words. Another example is clickstream data, where a data instance is a sequence of clicks made by a visitor to a website. This is also common for data originating in the domains of speech processing and computational biology. Using such data with statistical learning techniques can often prove to be cumbersome since most of them only allow fixed-length feature vectors as input. In casting the data to fixed-length feature vectors to suit these techniques, we lose the convenience, and possibly information, a good sequence-based representation can offer.

The framework of \emph{rational kernels} partly addresses this problem by providing an elegant representation for sequences, for algorithms that use \emph{kernel functions}.

In this report, we take a comprehensive look at this framework, its various extensions and applications. We start with an overview of the `core' ideas in Section \ref{sec:ratk_theory_algo}, where we look at the characterization of rational kernels, their properties and efficient ways to compute them. 


In section \ref{sec:relation_to_graph} we discuss the equivalence of \emph{graph kernels} and rational kernels under certain circumstances. 
Section \ref{sec:ratk_classify_pt} notes an interesting application - that of being able to classify sequences as to belonging to a formal language, using \emph{Support Vector Machines (SVMs)} with a specific rational kernel. The theory around this rather surprising relationship between the membership test for a formal language and rational kernels is discussed in considerable detail. In section \ref{sec:identify_metabolic}, we discuss yet another application, this time in the domain of computational biology: that of identifying metabolic pathways.

In a world with ever-increasing quantities of available data, techniques to perform large-scale training of algorithms are immensely valuable; we focus on this objective in Section \ref{sec:large_scale}. Rational kernels represent a \emph{family} of kernels (as we shall see in Section \ref{sec:ratk_theory_algo}), and thus,  \emph{learning} an appropriate rational kernel instead of picking one, suggests a convenient way to use them; we explore this idea in our concluding section, Section \ref{sec:learning_kernels}. 

Rational kernels are not as popular as the many other learning techniques in use today; however, we hope that this summary effectively shows that not only is their theory well-developed, but also that various practical aspects have been carefully studied over time.

\section{Rational Kernels: Theory and Algorithms}
\label{sec:ratk_theory_algo}




The strength of rational kernels lies in the fact that they rely on a powerful, compact and general representation - \emph{weighted transducers}, a richer version of the more popular automata - that makes it simple to reason about sequences for a variety of problems. 
Much like any other effective abstraction, they eliminate the need to study sequence data individually in a lot of cases; they ``shift the burden'' to the abstraction: if we can show the data is representable by certain transducers, results applicable to the abstraction are readily applied to the data.


Rational kernels were introduced in \cite{Cortes02rationalkernels}, and the theory was further developed in \cite{Cortes04rationalkernels:}, \cite{Cortes2003}, \cite{1198859}, \cite{Cortes03weightedautomata}. In this section, we borrow our arguments and notation primarily from \cite{Cortes04rationalkernels:}, since it provides a comprehensive overview of the theory and algorithms. We begin by presenting definitions and notations that are needed to understand rational kernels.

\begin{definition}
A system $(K,\odot,e)$ is a monoid if it is closed under $\odot: a\odot b \in \mathbb{K}$ for all $a,b \in \mathbb{K}$; $\odot$ is associative: $(a \odot b) \odot c = a \odot (b \odot c)$ for all $a,b,c \in \mathbb{K}$; and $e$ is an identity for $\odot: a \odot e = e \odot a = a$, for all $a \in \mathbb{K}$. When additionally $\odot$ is commutative: $a \odot b = b\odot a$ for all $a,b \in \mathbb{K}$, then $(\mathbb{K},\odot,e)$ is said to be a commutative monoid.
\end{definition}

\begin{definition} A system $(\mathbb{K},\oplus,\otimes,\overline{0},\overline{1})$ is a semiring if: $(\mathbb{K},\oplus,\overline{0})$ is a commutative monoid with identity element $\overline{0}$; $(\mathbb{K},\otimes,\overline{1})$ is a monoid with identity element $\overline{1}$; $\otimes$ distributes over $\oplus$; and $\overline{0}$ is an annihilator for $\otimes$: for all $a \in \mathbb{K}, a \otimes \overline{0} = \overline{0} \otimes a = \overline{0}$.
\end{definition}

\begin{table}[!t]
\centering
\begin{tabular}{l|c|c|c|c|c}
Semiring &Field&$\oplus$&$\otimes$&$\overline{0}$&$\overline{1}$\\
\hline \hline
Boolean & $\{0,1\}$ & $\vee$ & $\wedge$ & $0$ & $1$\\
\hline
Probability & $\mathbb{R}_+$ & $+$ & $\times$ & $0$ & $1$\\
\hline
Log & $\mathbb{R} \cup \{-\infty, +\infty\}$  & $\oplus_\text{log}$ & $+$ & $+\infty$ & $0$\\
\hline
Tropical & $\mathbb{R} \cup \{-\infty, +\infty\}$ & $\min$ & $+$ & $+\infty$ & $0$\\
\hline
\end{tabular}
\caption{Examples of semirings. $\oplus_\text{log}$ is defined by $x\oplus_\text{log} y = -\log(e^{-x} + e^{-y})$.}
\label{tab:semirings}
\end{table}

Table \ref{tab:semirings} lists some semirings. We would mostly use the probability and tropical semirings. We are now ready to define the \emph{weighted finite-state transducer} - the representation underlying rational kernels.

\begin{definition}
A weighted finite-state transducer $T$ over a semiring $\mathbb{K}$ is an 8-tuple $T = (\Sigma,\Delta,Q,I,F,E,\lambda,\rho)$ where $\Sigma$ is the finite input alphabet of the transducer; $\Delta$ is the finite output alphabet; Q is a finite set of states; $I \subseteq Q$ the set of initial states; $F \subseteq Q$ the set of final states; $E \subseteq Q \times (\Sigma \cup \{\epsilon\})\times(\Delta \cup \{\epsilon\}) \times \mathbb{K} \times Q$ a finite set of transitions; $\lambda: I \rightarrow \mathbb{K}$ the initial weight function; and $\rho: F \rightarrow \mathbb{K}$ the final weight function mapping $F$ to $\mathbb{K}$.

Weighted automata can be formally defined in a similar way by simply omitting the input or output labels.
\end{definition}

\begin{figure*}[!t]
\centering
\includegraphics[width=2.0 in]{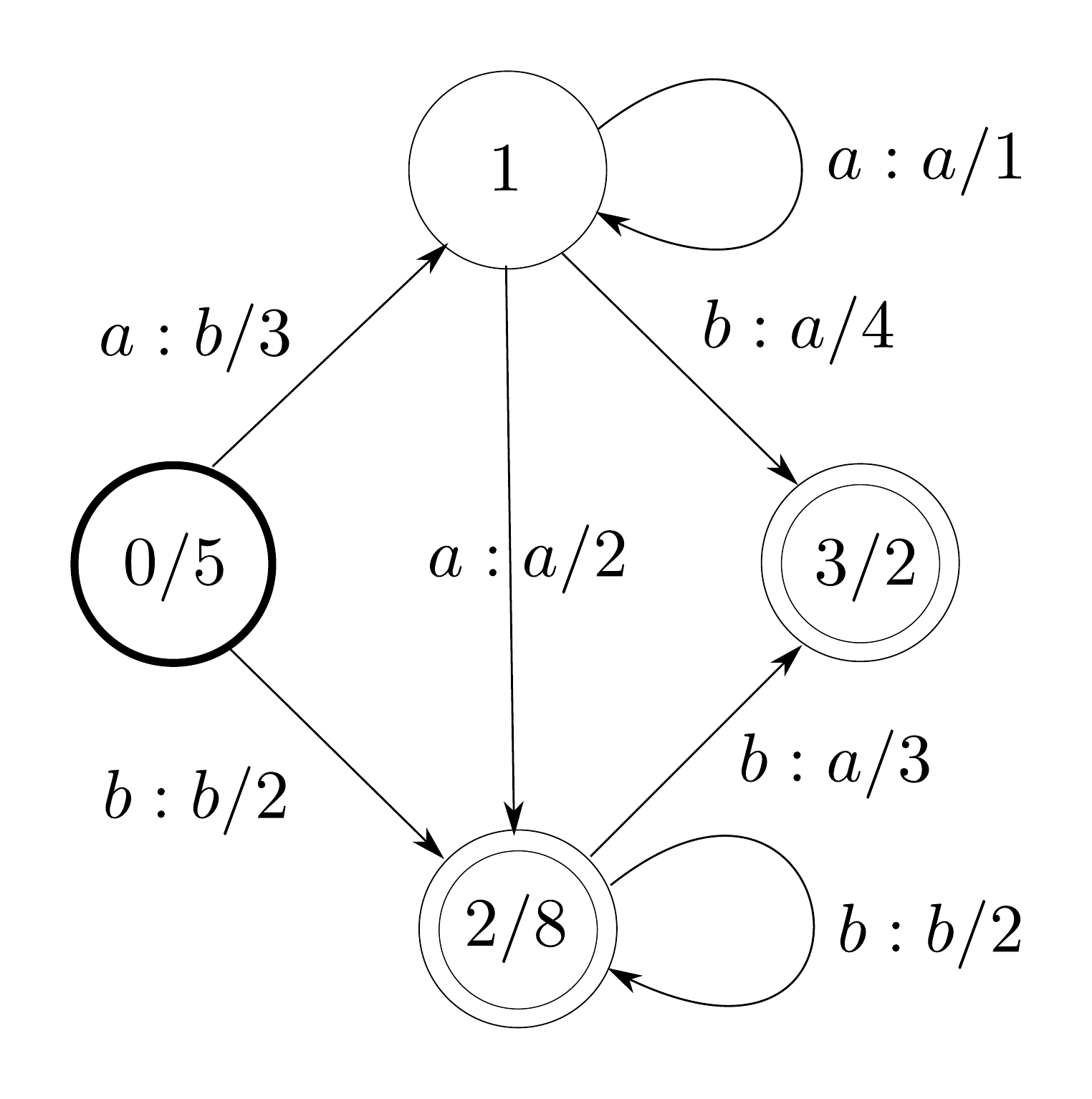}
\caption{Weighted finite-state transducer defined over the probability semiring.} 
\label{fig:awt}
\end{figure*}

Figure \ref{fig:awt} shows an example of such a weighted transducer, defined over the probability semiring. The states are denoted by nodes. The initial state, $0$, is shown in bold. The final states, $2$ and $3$, are shown in two concentric circles. Every transition has a label in the format \emph{input symbol:output symbol/weight}. The final state is labeled in the format \emph{state number/weight}.

Some relevant terminology:
\begin{itemize}
\item Given a transition $e \in E$, we denote by $p[e]$ its origin or previous state, $n[e]$ its
destination state or next state, and $w[e]$ its weight. A path $\pi = e_1 e_2 ... e_k$ is an element
of $E^*$, with consecutive transitions, such that $n[e_{i - 1}] = p[e_i], i = 2, . . . , k$. 

We extend $n$ and $p$ to paths by setting: $n[\pi] = n[e_k]$ and $p[\pi] = p[e_1]$.

\item For a path $\pi$, its \emph{input label} and \emph{output label} are the concatenation of the input symbols and the output symbols respectively on the path, e.g., for path $\pi_1$: (0-1, 1-3) in the transducer shown in Figure \ref{fig:awt}, the input label is $ab$ and the output label is $ba$.

\item We denote by $P(q, q')$ the set of paths
from $q$ to $q'$ and by $P(q, x, y, q')$ the set of paths from $q$ to $q'$ with input label $x \in \Sigma^*$
and output label $y \in \Delta^*$. These definitions can be extended to subsets $R,R' \subset Q$, by:
$P(R, x, y, R') = \cup_{q \in R, q' \in R'} P(q, x, y, q')$.

\item An \emph{accepting path} is a path starting at an initial state and ending at a final state. $\pi_1$ is an accepting path, while $\pi_2$: (0-1, 1-1) is not. 

$P(I, x, y, F)$ is the set of all accepting paths with input label $x$ and output label $y$.

\item The \emph{weight of a path}, denoted  by $w[\pi]$, is the product of the weights of all transitions on the path. For $\pi= e_1 e_2 ... e_k$, $w[\pi]=w[e_1]\otimes w[e_2]\otimes ... \otimes  w[e_k]$. For our example path, $w(\pi_1$) = $3 \cdot 4  = 12$. 

\item A weighted transducer $T$ associates a weight $T(x,y)$ to a pair of sequences $(x,y) \in \Sigma^* \times \Delta^*$. This weight is obtained by summing the weight of all accepting paths in $T$, whose input label is $x$ and output label is $y$, multiplied by the weight of the final state. This can be seen as the mapping, $T: \Sigma^* \times \Delta^* \rightarrow \mathbb{R}$: 
\begin{equation*}
T(x,y)=\bigoplus\limits_{\pi \in P(I,x,y,F)} \lambda(p[\pi]) \otimes w[\pi] \otimes \rho(n[\pi])
\end{equation*}
We define $T(x,y)=\overline{0}$, if $P(I,x,y,F)=\phi$.\\

Here's how we calculate $T(aab,baa)$, for the transducer in Figure \ref{fig:awt}: \\
Paths:
\begin{enumerate}
\item $w((0-1, 1-1, 1-3)) = 12 $
\item $w((0-1, 1-2, 2-3)) = 18 $  
\end{enumerate}
\begin{equation*}
T(aab,baa)= 5 \cdot 12 \cdot 2+5 \cdot 18 \cdot 2= 300
\end{equation*}
\item A transducer $T$ is \emph{regulated} if $T(x,y)$ for any $x \in \Sigma^*, y \in \Delta^*$ is well-defined and in $\mathbb{K}$. In particular, when $T$ does not have any $\epsilon$-cycle, that is a cycle labeled with $\epsilon$ (both input and output labels), it is regulated. We will only consider regulated transducers here.
\item The inverse of a transducer $T$, denoted by $T^{-1}$ is the original transducer with the input and output symbols swapped on all transitions i.e. if $E$ and $E^{-1}$ are set of transitions for $T$ and $T^{-1}$ respectively, then $(p, a, b, w, q) \in E \iff (p, b, a, w, q) \in E^{-1}, \text{ where } p,q \in Q, a \in \Sigma, b \in \Delta, w \in \mathbb{K}$.
\end{itemize}
\vspace{2mm}
Regulated weighted transducers are closed under the \emph{rational} operations: $\oplus$-sum, $\otimes$-product and Kleene-closure which are defined as follows for all transducers $T_1$ and $T_2$ and $(x,y) \in \Sigma^* \times \Delta^*$:
\begin{enumerate}
\item $[\![T_1 \oplus T_2]\!](x,y) = [\![T_1]\!](x,y) \oplus [\![T_2]\!](x,y)$
\item $[\![T_1 \otimes T_2]\!](x,y) = \bigoplus\limits_{x=x_1x_2,y=y_1y_2} [\![T_1]\!](x_1,y_1) \otimes [\![T_2]\!](x_2,y_2)$
\item $[\![T^*]\!](x,y) = \bigoplus\limits_{n=0}^\infty T^n(x,y)$
\end{enumerate}
where $T^n$ stands for the $(n - 1)$-$\otimes$-product of $T$ with itself.

\emph{Composition} is a fundamental operation on weighted transducers that can be used in many applications to create complex weighted transducers from simpler ones. Let $T_1 = (\Sigma,\Delta,Q_1,I_1,F_1,E_1,\lambda_1,\rho_1)$ and $T_2 = (\Delta,\Omega,Q_2,I_2,F_2,E_2,\lambda_2,\rho_2)$ be two weighted transducers defined over a commutative semiring $\mathbb{K}$ such that $\Delta$, the output alphabet of $T_1$, coincides with the input alphabet of $T_2$. Then, the result of the composition of $T_1$ and $T_2$ is a weighted transducer $T_1 \circ T_2$ which, when it is regulated, is defined for all $x,y$ by:
\begin{equation}
\label{eqn:transducer_composition}
[\![T_1 \circ T_2]\!](x,y) = \bigoplus\limits_{z\in \Delta^*} [\![T_1]\!](x,z) \otimes[\![T_2]\!](z,y)
\end{equation}
The definition of composition extends naturally to weighted automata since a weighted automaton can be viewed as a weighted transducer with identical input and output labels for each transition. The corresponding transducer associates $[\![A]\!](x)$ to a pair $(x,x)$, and $0$ to all other pairs. Thus, the composition of a weighted automaton $A_1 = (\Delta,Q_1,I_1,F_1,E_1,\lambda_1,\rho_1)$ and a weighted transducer $T_2 = (\delta,\Omega,Q_2,I_2,F_2,E_2,\lambda_2,\rho_2)$ is simply defined for all $x,y \in \Delta^* \times \Omega^*$ by:
\begin{equation}
\label{eqn:automata_transducer_composition}
[\![A_1 \circ T_2]\!](x,y) = \bigoplus\limits_{x\in \Delta^*}[\![A_1]\!](x)\otimes[\![T_2]\!](x,y)
\end{equation}
when these sums are well-defined and in $\mathbb{K}$. \emph{Intersection} of two weighted automata is the special case of composition where both operands are weighted automata, or equivalently weighted transducers with identical input and output labels for each transition.

Although the expression for composition given in eqns (\ref{eqn:transducer_composition}), (\ref{eqn:automata_transducer_composition}), suffice for most discussions, we will see how to construct the transducer corresponding to a composition in Section \ref{sec:computing_kernel_values}.

We are now ready to define \emph{rational kernels}.

\begin{definition}
A kernel $K$ over $\Sigma^* \times \Delta^*$ is said to be rational if there exist a weighted transducer $T = (\Sigma,\Delta,Q,I,F,E,\lambda,\rho)$ over the semiring $K$ and a function $\Psi: K \rightarrow \mathbb{R}$ such that for all $x \in \Sigma^*$ and $y \in \Delta^*$: 
\begin{equation}
K(x,y) = \Psi([\![T]\!](x,y))
\end{equation}
$K$ is then said to be defined by the pair $(\Psi,T)$.
\end{definition}

Rational kernels can be naturally extended to kernels over weighted automata. Let $A$ be a weighted automaton defined over the semiring $\mathbb{K}$ and the alphabet $\Sigma$ and $B$ a weighted automaton defined over the semiring $\mathbb{K}$  and the alphabet $\Delta$, $K(A,B)$ is defined by:
\begin{equation}
\label{eqn:kernel_automata_transducer}
K(A,B) = \Psi\bigg( \bigoplus\limits_{ (x,y) \in \Sigma^* \times \Delta^*} [\![A]\!](x) \otimes [\![T]\!](x,y)\otimes[\![B]\!](y) \bigg)
\end{equation}

Rational kernels subsume a number of well known similarity measures since transducers happen to be an extremely versatile tool for representation. We list some of these transducers next. These also demonstrate the potentially broad scope of use of rational kernels.
\begin{enumerate}
\item Bigram counter: As the name suggests, this transducer counts the number of occurrences of a bigram. The count is an overlapping count and the bigram to be matched is denoted by the output sequence. For ex, with $x=abbb$, $T(x, bb)=2$.

\begin{figure*}[!t]
\centering
\includegraphics[width=2.5 in]{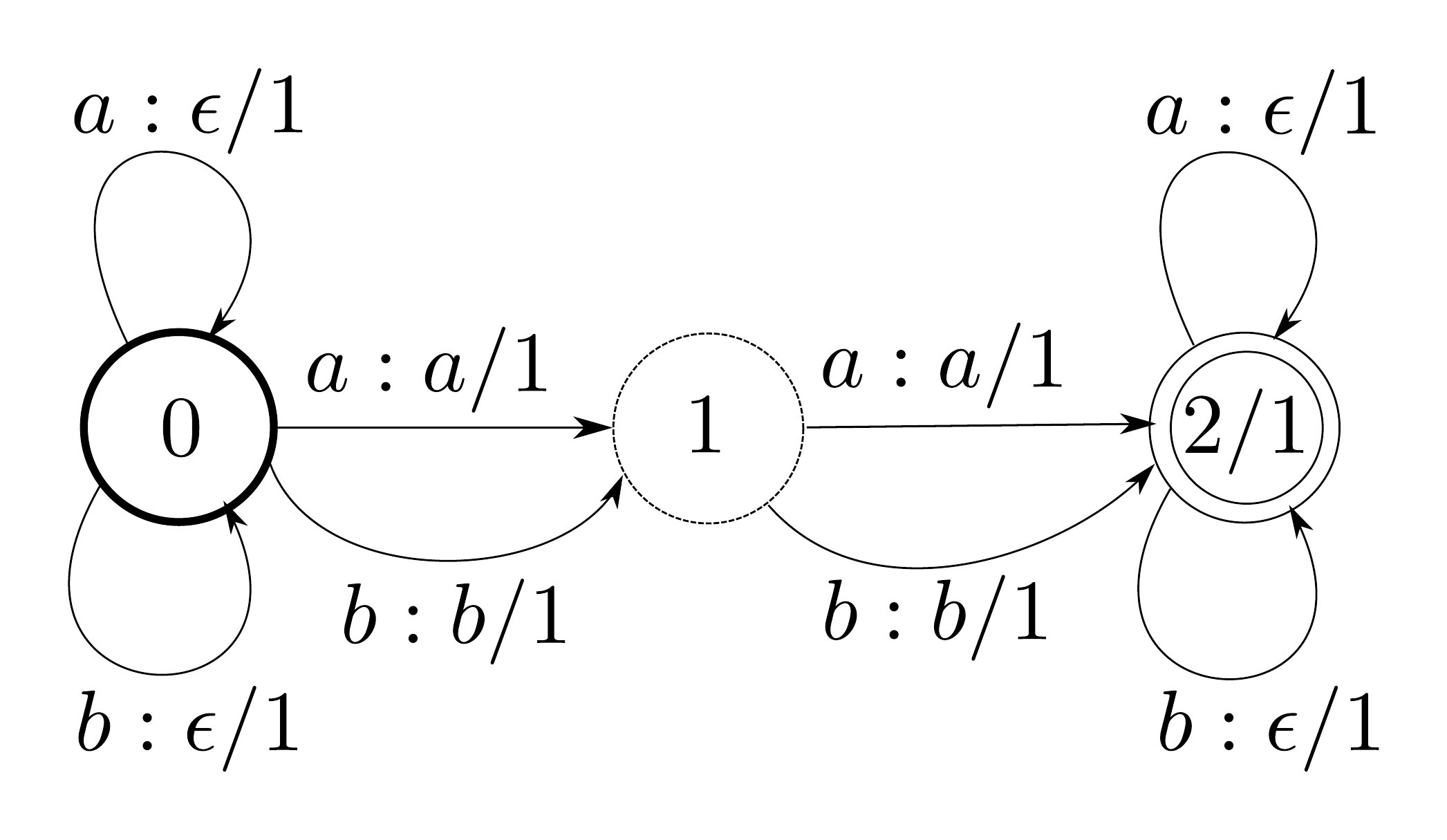}
\caption{Transducer that counts the number of bigrams from $\Sigma= \{a,b\}$.} 
\label{fig:bigram_counter}
\end{figure*}

Fig \ref{fig:bigram_counter} shows the transducer for the probability semiring. The key structure is that for an input sequence to end up in the final state, it has to pass through state $1$. In doing so, it must match the output sequence bigram (output symbols on transitions not entering/exiting state $1$ are $\epsilon$). Each such accepting path has a weight of $1$, and there are as many accepting paths as overlapping occurrences of the bigram in the input sequence.

Let $c(x,z)$ represent the overlapping count of occurrences of $z$ in $x$. Then, $T(x,z)=c(x,z)$. Consider the composition, $T \circ T^{-1}(x,y)=\sum_{|z|=2}c(x,z)c(y,z)$. Thus, $T \circ T^{-1}$ evaluates the similarity of two sequences based on co-occurring bigrams.

\item Gappy bigram transducer: The gappy bigram transducer also counts occurrences of bigrams, with the key difference that the bigrams need not be contiguous i.e. they may have ``gaps''. The more spread out a bigram is, higher the penalty associated with it. This penalty is denoted by $\lambda \in (0,1)$. For ex, for $x=abcd,\; T(x, ab) = 1, \text{ whereas } T(x, ac) = \lambda \text{ and } T(x,ad)=\lambda^2$ (this version differs slightly, not in a significant way, from the original discussed in \cite{Lodhi:2002:TCU:944790.944799}).

\begin{figure*}[!t]
\centering
\includegraphics[width=2.5 in]{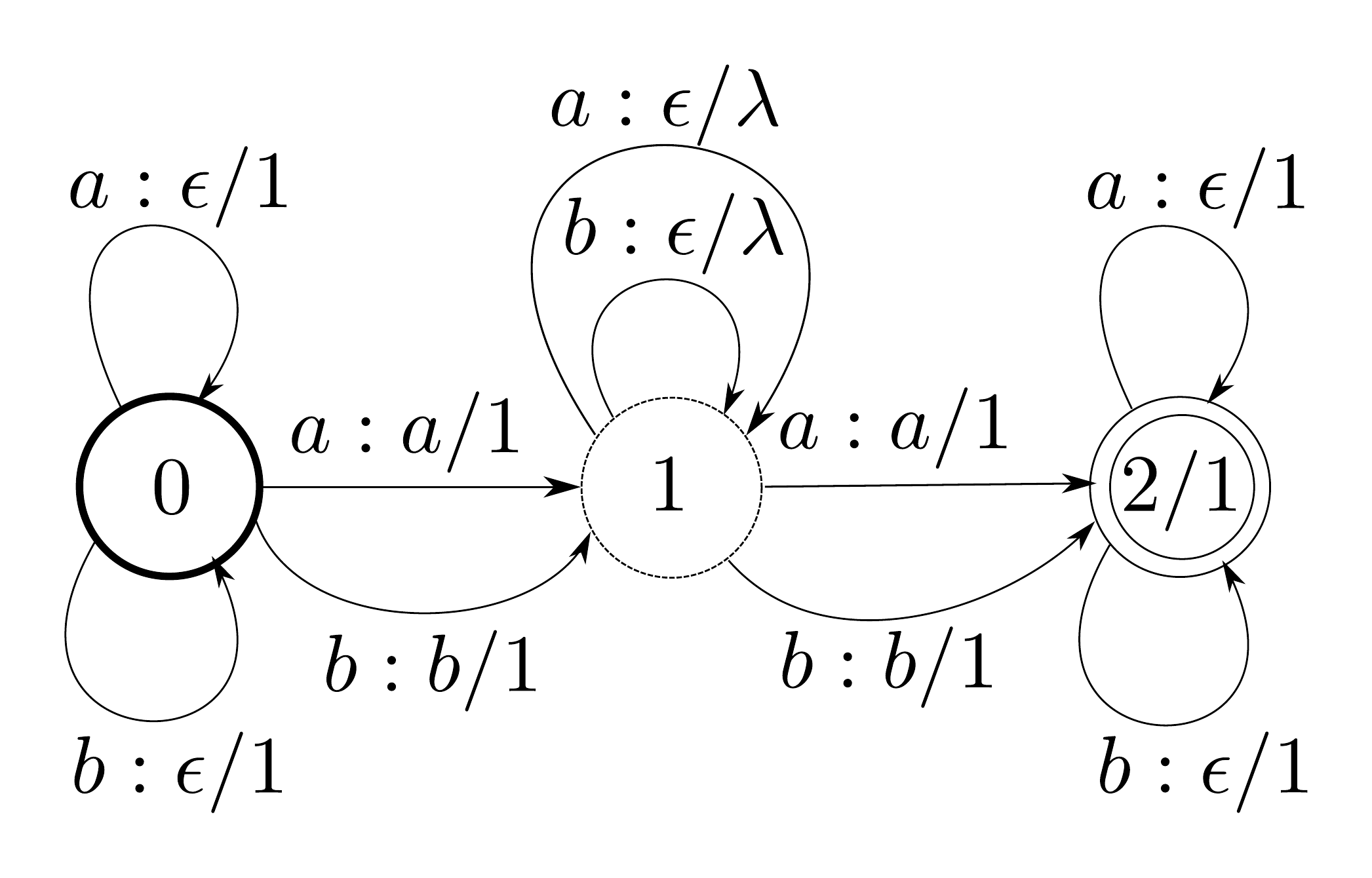}
\caption{Gappy bigram transducer with decay $\lambda$. Output sequence is the bigram.} 
\label{fig:gappy_bigram_simple}
\end{figure*}

\begin{figure*}[!t]
\centering
\includegraphics[width=6 in]{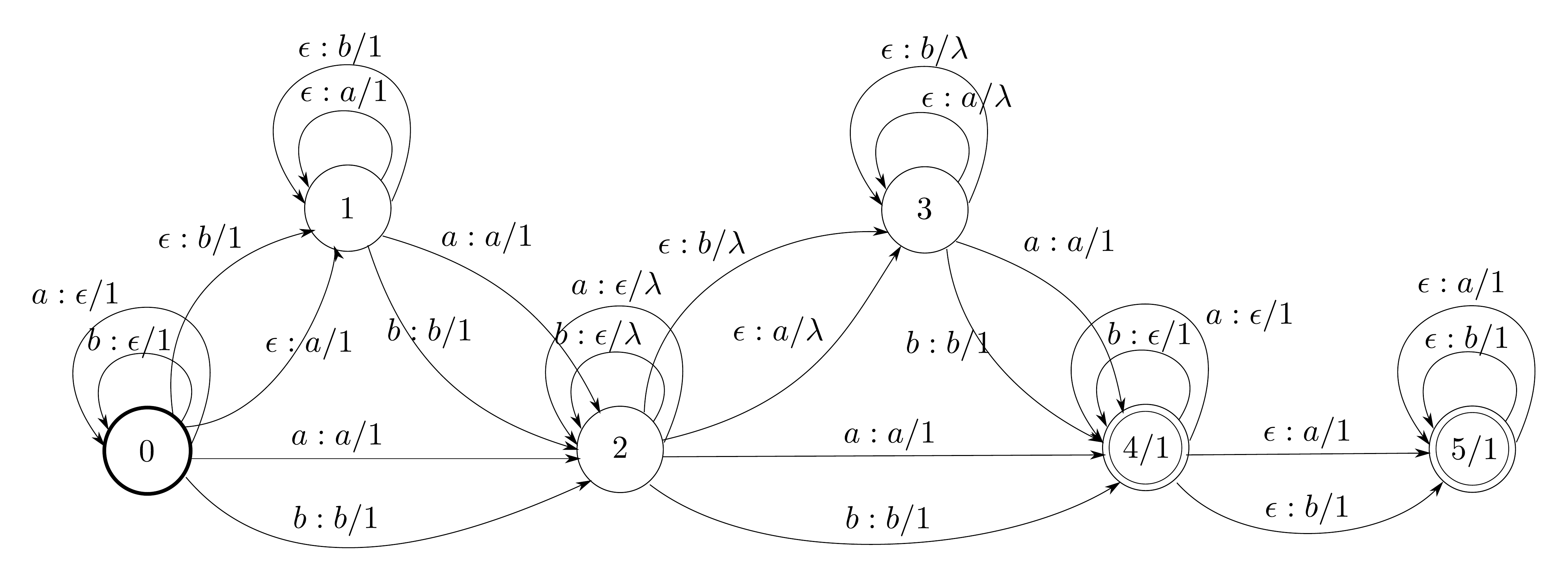}
\caption{Gappy bigram transducer with decay $\lambda$ for arbitrary sequences.} 
\label{fig:gappy_bigram}
\end{figure*}

Fig \ref{fig:gappy_bigram_simple} (from \cite{Mohri:2012:FML:2371238}) shows the transducer constructed on the lines of the bigram counter. For each self-transition on state $1$, a multiplicative penalty of $\lambda$ is accumulated. Fig \ref{fig:gappy_bigram} (from \cite{Cortes02rationalkernels}) shows yet another gappy bigram counter which can accept arbitrary sequences as its second argument i.e. it doesn't require composition to compare sequences. Both transducers are defined  over the probability semiring.

\item Mismatch string kernels: These kernels are used in computational biology for discriminative protein classification (\cite{2055}). Let $\Sigma$ be a finite alphabet, typically that of amino acids for protein sequences. For any two sequences $z_1,z_2 \in \Sigma^*$ of same length $(|z_1| = |z_2|)$, we denote by $d(z_1,z_2)$ the total number of mismatching symbols between these sequences. For all $m \in \mathbb{N}$, we define the bounded distance $d_m$ between two sequences of same length by:
\[
    d_m(z_1, z_2)= 
\begin{cases}
    1 \text{ if } d(z_1, z_2) \leq m\\
    0 \text{ otherwise. }
\end{cases}
\]
We define the projection $\Phi_{(k,m)}$ of a string $u, |u|=k$, to the $\Sigma^k$ space, indexed by all $k$-length strings from $\Sigma$, in the following way:
\begin{equation*}
\Phi_{(k,m)}(u) = (d_m(u,z))_{z \in \Sigma^k} 
\end{equation*}
The intuition here is to identify $u$ in the projected space by all $k$-length strings that are no more than $m$ mismatches away. We extend this definition to a string $x$ of arbitrary length, by first defining the set of $k$-length \emph{factors} of the string as:
\begin{equation*}
F_k(x) = \{z: \text{substring of } x, |z|=k\}
\end{equation*}
and then defining the projection as:
\begin{equation*}
\Phi_{(k,m)}(x) =  \sum\limits_{u \in F_k(x)} \Phi_{(k,m)}(u)
\end{equation*}
Note that the factor set only includes \emph{contiguous} substrings. For ex, $F_2(abc) = \{ab, bc\}$. Specifically, $ac \notin F_2(abc)$. 

For any $k,m \in \mathbb{N}$ with $m \leq k$, the $(k,m)$-mismatch kernel $K_{(k,m)} : \Sigma^* \times \Sigma^* \rightarrow \mathbb{R}$ is defined over sequences $x,y \in \Sigma^*$ by:
\begin{equation}
K_{(k,m)}(x,y) = \langle \Phi_{(k,m)}(x), \Phi_{(k,m)}(y)\rangle 
\end{equation}
This may be equivalently written as:
\begin{equation}
K_{(k,m)}(x,y) = \sum\limits_{z_1\in F_k(x),z_2\in F_k(y), z \in \Sigma^k} d_m (z_1,z) d_m(z,z_2)
\end{equation}
Although we don't prove it here (see \cite{Cortes04rationalkernels:}), $K_{(k,m)}(x,y) =T_{(k,m)} \circ T_{(k,m)}^{-1} (x,y)$ where $T_{(k,m)}(x,y) = \sum_{u \in F_k(x)}d_m(y, u)$. Fig \ref{fig:mismatch_kernel} shows such a transducer, $T_{(3,2)}$ defined on the probability semiring. Every accepting path has exactly three edges that don't have an $\epsilon$ output symbol. A mismatch is signified by ``traveling down'' from a level, and for every such descent, the distance from the accessible final states, in terms of non-$\epsilon$ edges, decreases by one. Since we can go down only two levels, it is ensured that we can have only up to two mismatches, while there are exactly three non-$\epsilon$ edges in the path.
\begin{figure*}[!t]
\centering
\includegraphics[width=3 in]{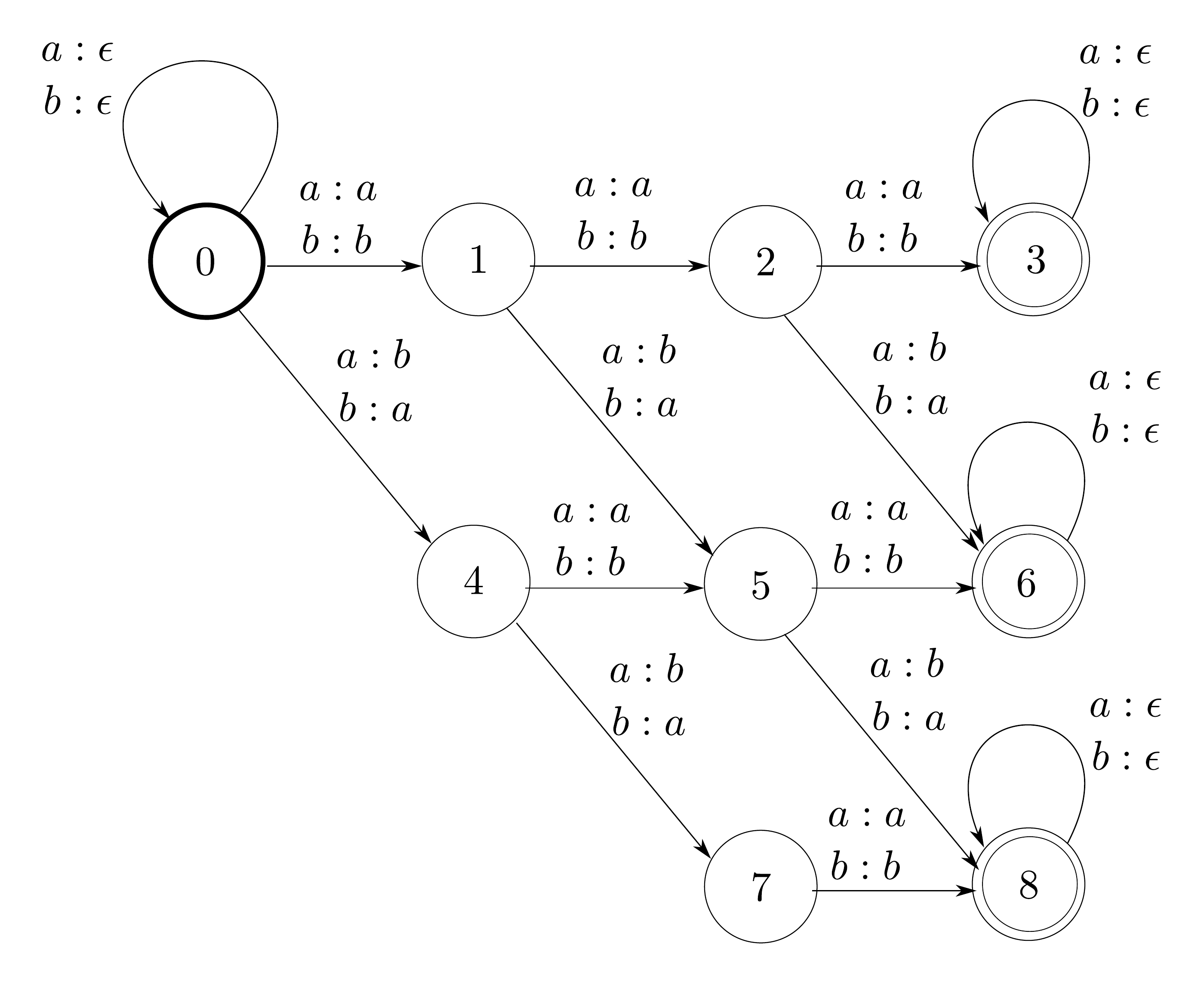}
\caption{$T_{3,2}$ corresponding to the mismatch kernel $K_{3,2} = T_{3,2} \circ T_{3,2}^{-1}$, over the probability semiring. All transitions and final states have a weight of $1$.} 
\label{fig:mismatch_kernel}
\end{figure*}

\end{enumerate}

We don't provide any more examples in the interest of brevity but we would encounter some more transducers representing similarity/distance measures in later sections: the \emph{edit-distance} transducer in Section \ref{sec: properties_rational_kernels}, and a transducer corresponding to the \emph{subsequence kernel} in Section \ref{sec:subsequence_kernels_PT_test}.

\subsection{Properties}
\label{sec: properties_rational_kernels}
In this section we look at various properties of rational kernels. Of special interest is the property that the kernel be \emph{positive definite symmetric (PDS)}. A kernel is eventually utilized by learning techniques like \emph{SVMs}, \emph{kernel Principal Component Analysis}, \emph{kernel ridge regression} etc, where this property ensures an optimal solution. We also look at the related property of kernel being \emph{negative definite symmetric (NDS)}; such kernels can be used to construct PDS kernels. 
\begin{definition}
A kernel $K: X \times X \rightarrow \mathbb{R}$ is said to be positive definite symmetric (PDS) if for any $\{x_1, ... , x_m\} \subseteq X$, the matrix $\boldsymbol{K} = [K(x_i, x_j)]_{ij} \in \mathbb{R}^{m\times m}$ is symmetric and positive semidefinite. (from  \cite{Mohri:2012:FML:2371238})
\end{definition}
Although we have made a distinction between the kernel \emph{function} $K$ and the kernel \emph{matrix} $\boldsymbol{K}$ above, we would loosely use the function notation $K$ for both, leaving it to the context to convey the intended meaning. Also note that while we need the kernel function $K$ to be positive definite, we need $\boldsymbol{K}$ to be positive \textbf{semi}-definite. This is also known as \emph{Mercer's condition}.

We begin by looking at \emph{closure} properties of PDS rational kernels. These properties provide us with tools to piece together a PDS rational kernel from other PDS rational kernels. Closure here implies that not only do we need to prove that the resulting kernel is PDS, but we also must show that it is rational.

First, we summarize the closure properties of general PDS kernels, since these would be referred to later.
\begin{theorem} Let $X$ and $Y$ be two non-empty sets.
\label{theorem:pds_kernels}
\begin{enumerate}
\item Closure under sum: Let $K_1,K_2: X \times X \rightarrow \mathbb{R}$ be PDS kernels, then $K_1 + K_2: X \times X \rightarrow \mathbb{R}$ is a PDS kernel.
\item Closure under product: Let $K_1,K_2: X \times X \rightarrow \mathbb{R}$ be PDS kernels, then $K_1 \cdot K_2: X \times X \rightarrow \mathbb{R}$ is a PDS kernel.
\item Closure under tensor product: Let $K_1: X \times X \rightarrow \mathbb{R}$ and $K_2: Y \times Y \rightarrow \mathbb{R}$ be PDS kernels, then their tensor product $K_1 \odot K_2: (X \times Y) \times (X \times Y) \rightarrow \mathbb{R}$, where $K_1 \odot K_2((x_1,y_1),(x_2,y_2)) = K_1 (x_1,x_2) \cdot K_2(y_1,y_2)$ is a PDS kernel.
\item Closure under pointwise limit: Let $K_n: X \times X \rightarrow \mathbb{R}$ be a PDS kernel for all $n \in \mathbb{N}$ and assume that $\lim\limits_{n\rightarrow \infty} K_n(x_1,x_2)$ exists for all $x_1,x_2 \in X$, then $K$ defined by $K(x_1,x_2) = \lim\limits_{n \rightarrow \infty} K_n(x_1,x_2)$ is a PDS kernel.
\end{enumerate}
\end{theorem}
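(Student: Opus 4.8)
The plan is to handle each of the four closure properties by reducing the PDS condition to a statement about the corresponding Gram matrices on an arbitrary finite sample. Fix a finite set of points $\{x_1,\dots,x_m\}$ (or $\{(x_1,y_1),\dots,(x_m,y_m)\}$ in part 3). Symmetry of the resulting matrix is immediate in all four cases, since it is preserved under sums, Hadamard products and pointwise limits, so the work is entirely in verifying positive semidefiniteness, i.e. that $c^\top \boldsymbol{K} c \ge 0$ for every $c \in \mathbb{R}^m$.

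For part 1, the matrix of $K_1 + K_2$ is $\boldsymbol{K}_1 + \boldsymbol{K}_2$, and $c^\top(\boldsymbol{K}_1 + \boldsymbol{K}_2)c = c^\top\boldsymbol{K}_1 c + c^\top\boldsymbol{K}_2 c \ge 0$ since each summand is nonnegative. Part 4 is equally direct: the matrix of $K$ is the entrywise limit of the matrices $\boldsymbol{K}_n$, and since each $\boldsymbol{K}_n$ is symmetric PSD we get $c^\top\boldsymbol{K} c = \lim_{n\to\infty} c^\top\boldsymbol{K}_n c \ge 0$ as a limit of nonnegative reals (and $K(x_i,x_j) = \lim_n K_n(x_i,x_j) = \lim_n K_n(x_j,x_i) = K(x_j,x_i)$ gives symmetry).

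The crux is part 2, the Hadamard (Schur) product. Here I would write each PSD Gram matrix as $\boldsymbol{K}_1 = BB^\top$ and $\boldsymbol{K}_2 = CC^\top$, so that $(\boldsymbol{K}_1)_{ij} = \langle b_i, b_j\rangle$ and $(\boldsymbol{K}_2)_{ij} = \langle c_i, c_j\rangle$ for the row vectors $b_i, c_i$. Then the matrix of $K_1 \cdot K_2$ has entries $\langle b_i, b_j\rangle\langle c_i, c_j\rangle = \langle b_i \otimes c_i,\, b_j \otimes c_j\rangle$, exhibiting it as a Gram matrix, hence PSD. (An alternative argument: if $U \sim N(0,\boldsymbol{K}_1)$ and $V \sim N(0,\boldsymbol{K}_2)$ are independent Gaussian vectors, then the vector with coordinates $U_i V_i$ has covariance equal to the Hadamard product $\boldsymbol{K}_1 \circ \boldsymbol{K}_2$, which is therefore PSD.) Part 3 then follows from the same idea applied to the sample $\{(x_1,y_1),\dots,(x_m,y_m)\}$: the matrix of $K_1 \odot K_2$ has entries $K_1(x_i,x_j)K_2(y_i,y_j)$, which is the Hadamard product of $[K_1(x_i,x_j)]_{ij}$ and $[K_2(y_i,y_j)]_{ij}$ — each PSD because $K_1$, resp. $K_2$, is PDS — so the Gram-matrix argument of part 2 applies verbatim; restricting part 3 to the diagonal in fact recovers part 2.

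I expect no genuine obstacle beyond invoking the Schur product theorem cleanly. The one subtlety worth flagging is that the points of the finite sample may repeat (the $x_i$, or the $y_i$, need not be distinct), which does no harm provided one argues throughout with positive \emph{semi}definiteness rather than definiteness, exactly as the PDS definition requires.
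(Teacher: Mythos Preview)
Your proof is correct and follows the standard route (quadratic-form argument for the sum and limit, Schur product theorem via a Gram-matrix factorization for the Hadamard and tensor products). The paper itself does not prove this theorem: it is stated there only as a summary of known closure properties of general PDS kernels, to be cited later when proving the analogous closure results for PDS \emph{rational} kernels. So there is nothing to compare against; your argument simply fills in what the paper takes for granted.
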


We now prove the following closure properties of PDS rational kernels:
\begin{theorem} \textbf{Closure Properties of PDS rational kernels}

Let $\Sigma$ be a non-empty alphabet and $\Psi:K \rightarrow \mathbb{R}$ be a function used to define rational kernels e.g. $K(x,y)=\Psi([\![T]\!](x,y))$. Then the following properties for PDS rational kernels hold:
\begin{enumerate}
\item Closure under $\oplus$-sum: Assume that $\Psi: (K,\oplus,\overline{0}) \rightarrow (\mathbb{R},+,0)$ is a monoid morphism. Let $K_{T_1}, K_{T_2}: \Sigma^* \times \Sigma^*\rightarrow \mathbb{R}$ be PDS rational kernels, then $K_{T_1 \oplus T_2}: \Sigma^* \times \Sigma^* \rightarrow \mathbb{R}$ is a PDS rational kernel and $K_{T_1 \oplus T_2} =  K_{T_1} +K_{T_2}$.
\item Closure under $\otimes$-product: Assume that $\Psi: (K,\oplus,\otimes,0,1) \rightarrow (\mathbb{R},+,\times,0,1)$ is a semiring morphism. Let $K_{T_1},K_{T_2}: \Sigma^* \times \Sigma^* \rightarrow \mathbb{R}$ be PDS rational kernels, then $K_{T_1 \otimes T_2}: \Sigma^* \times \Sigma^* \rightarrow \mathbb{R}$ is a PDS rational kernel.
\item Closure under Kleene-closure: Assume that $\Psi: (K,\oplus,\otimes,0,1) \rightarrow (\mathbb{R},+,\times,0,1)$ is a continuous semiring morphism. Let $K_T: \Sigma^* \times \Sigma^* \rightarrow \mathbb{R}$ be a PDS rational kernel, then $K_{T^*}: \Sigma^* \times \Sigma^* \rightarrow \mathbb{R}$ is a PDS rational kernel.
\end{enumerate}
\begin{proof}

\begin{enumerate}
\item A \emph{monoid morphism}  is a function $\Psi: (K,\oplus,\overline{0}) \rightarrow (\mathbb{R},+,0)$ that satisfies $\Psi(x \oplus y) = \Psi(x)+ \Psi(y)$ for all $x,y \in K$, and $\Psi(\overline{0}) = 0$. Thus,
\begin{equation} 
\Psi([\![T_1]\!](x,y) \oplus [\![T_2]\!](x,y)) = \Psi([\![T_1]\!](x,y))+ \Psi([\![T_2]\!](x,y))
\end{equation}
Since PDS kernels are closed under sum (Property 1, Theorem \ref{theorem:pds_kernels}), the above $\oplus$-sum is a PDS kernel. The corresponding transducer is simply the two transducers $T_1, T_2$ considered together (with no new connections between them), so that $I_{T_1 \oplus T_2} = I_{T_1} \cup I_{T_2}$ and $F_{T_1 \oplus T_2}= F_{T_1} \cup F_{T_2}$, where $I_X$ and $F_X$ are the set of initial and final states of transducer $X$ respectively. Thus, the $\oplus$-sum is also defines rational kernel.
\item A \emph{semiring morphism} $\Psi$ is a function $\Psi : (K,\oplus,\otimes,\overline{0},\overline{1}) \rightarrow (\mathbb{R},+,\times,0,1)$ that is a monoid morphism and additionally satisfies $\Psi(x \otimes y) =
\Psi(x) \cdot \Psi(y)$ for all $x,y \in K$, and $\Psi(\overline{1}) = 1$. We have,
\begin{align}
\Psi([\![T_1 \otimes T_2]\!](x,y)) &= \Psi \bigg( \bigoplus_{x_1 x_2=x,y_1 y_2=y} [\![T_1]\!](x_1,y_1) \otimes [\![T_2]\!](x_2,y_2) \bigg) \nonumber \\
&=\sum \limits_{x_1 x_2=x,y_1 y_2=y} \Psi([\![T_1]\!](x_1,y_1))·\cdot \Psi([\![T_2]\!](x_2,y_2)) \nonumber \\
& = \sum\limits_{x_1 x_2=x,y_1 y_2=y} K_{T_1} \odot K_{T_2}((x_1,x_2),(y_1,y_2)).
\end{align}
By Theorem \ref{theorem:pds_kernels}, since $K_{T_1}$ and $K_{T_2}$ are PDS kernels, their tensor product $K_{T_1} \odot K_{T_2}$ is a PDS kernel\footnote{note that we cannot use the closure property of the sum here, since each term in the sum computes the kernel value for a different $(x_1, x_2),(y_1, y_2)$ pair.} and there exists a Hilbert space $H \subseteq \mathbb{R}^{\Sigma^*}$ and a mapping $u \rightarrow \phi_u$ such that $K_{T_1} \odot K_{T_2}(u,v) = \langle \phi_u,\phi_v \rangle$.
\begin{align}
\Psi([\![T_1 \otimes T_2]\!](x,y)) &= \sum \limits_{x_1 x_2=x,y_1 y_2=y} \langle \phi_{(x_1,x_2)},\phi_{(y_1,y_2)}\rangle \nonumber \\
&=\bigg \langle \sum \limits_{x_1 x_2=x} \phi_{(x_1,x_2)} , \sum \limits_{y_1 y_2=y} \phi_{(y_1,y_2)}   \bigg \rangle
\end{align}
Since a dot product is positive definite, $K_{T_1\otimes T_2}$ is a PDS kernel. The corresponding transducer can be thought of as $T_1$ and $T_2$ lined up next to each other, so that the final states of $T_1$ lead into the initial states of $T_2$, via newly introduced transitions. The weights from the final states of $T_1$ become the weights of these new transitions. Fig \ref{fig:transducer_prod} illustrates this.
\begin{figure*}[!t]
\centering
\includegraphics[width=3 in]{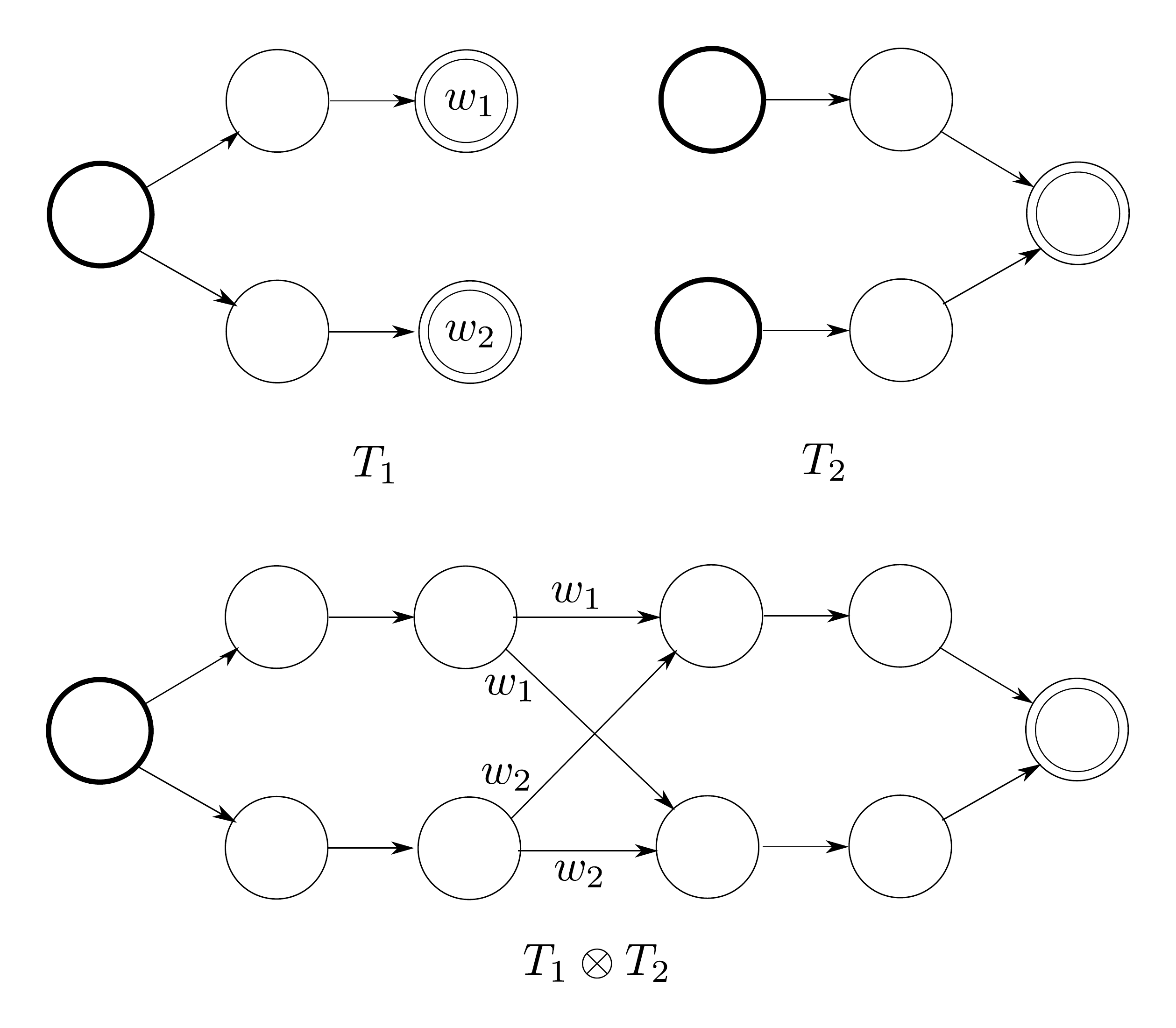}
\caption{$T_1 \otimes T_2$ is shown. The final states of $T_1$ are connected to the initial states of $T_2$. Weights $w_1, w_2$ are moved out onto the new edges.} 
\label{fig:transducer_prod}
\end{figure*}

\item The closure under Kleene-closure is a direct consequence of the closure under $\oplus$-sum and $\otimes$-product of PDS rational kernels and the closure under pointwise limit of PDS kernels.
\end{enumerate}

\end{proof}
\end{theorem}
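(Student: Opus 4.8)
The plan is to treat the three parts in the order listed, since the Kleene‑closure case will reduce to the first two together with the pointwise‑limit closure of Theorem \ref{theorem:pds_kernels}. For each operation I must check \emph{two} things: that the resulting kernel is again PDS, and that it is again rational, i.e.\ realized by some weighted transducer paired with $\Psi$. For the $\oplus$‑sum, I would first use the hypothesis that $\Psi$ is a monoid morphism $(K,\oplus,\overline 0)\to(\mathbb{R},+,0)$ to push $\Psi$ through the $\oplus$ in $[\![T_1\oplus T_2]\!](x,y)=[\![T_1]\!](x,y)\oplus[\![T_2]\!](x,y)$, obtaining $K_{T_1\oplus T_2}=K_{T_1}+K_{T_2}$ pointwise; being a sum of PDS kernels this is PDS by Theorem \ref{theorem:pds_kernels}(1). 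Rationality is witnessed by the disjoint union of $T_1$ and $T_2$ (no new transitions) with initial states $I_{T_1}\cup I_{T_2}$ and final states $F_{T_1}\cup F_{T_2}$: an accepting path for $(x,y)$ there is exactly an accepting path in $T_1$ or in $T_2$, so its total weight is $[\![T_1]\!](x,y)\oplus[\![T_2]\!](x,y)$.

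The $\otimes$‑product is the step I expect to be the real obstacle. Applying the semiring morphism $\Psi$ to $[\![T_1\otimes T_2]\!](x,y)=\bigoplus_{x=x_1x_2,\,y=y_1y_2}[\![T_1]\!](x_1,y_1)\otimes[\![T_2]\!](x_2,y_2)$ turns it into an ordinary finite sum $\sum \Psi([\![T_1]\!](x_1,y_1))\cdot\Psi([\![T_2]\!](x_2,y_2))=\sum K_{T_1}\odot K_{T_2}((x_1,x_2),(y_1,y_2))$. The naive move is to invoke closure under sum, but that fails: the terms are the tensor‑product kernel evaluated at \emph{different} argument pairs, one per decomposition, not repeated evaluations at a fixed pair. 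The fix is to use that $K_{T_1}\odot K_{T_2}$, being PDS by Theorem \ref{theorem:pds_kernels}(3), admits a feature map $u\mapsto\phi_u$ into a common Hilbert space with $K_{T_1}\odot K_{T_2}(u,v)=\langle\phi_u,\phi_v\rangle$; then by bilinearity the double sum rewrites as $\langle \sum_{x=x_1x_2}\phi_{(x_1,x_2)},\ \sum_{y=y_1y_2}\phi_{(y_1,y_2)}\rangle$, exhibiting $K_{T_1\otimes T_2}$ itself as an inner product of feature vectors, hence PDS. The care point is that all $\phi_{(x_i,x_j)}$ lie in the \emph{same} Hilbert space so the finite sums are legitimate. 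For rationality I would exhibit the concatenation transducer: add $\epsilon{:}\epsilon$ transitions from each final state of $T_1$ to each initial state of $T_2$ carrying the appropriate product of $T_1$'s old final weight and $T_2$'s old initial weight, then make $T_1$'s old finals non‑final and $T_2$'s old initials non‑initial; accepting paths now factor uniquely as a $T_1$‑path followed by a $T_2$‑path, reproducing the $\otimes$‑product weight.

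For Kleene‑closure, since $T^n$ is by definition an iterated $\otimes$‑product of $T$ with itself, part 2 gives that each $K_{T^n}$ is a PDS rational kernel, and then part 1 gives that each partial sum $K_{\bigoplus_{n=0}^N T^n}=\sum_{n=0}^N K_{T^n}$ is a PDS rational kernel. Using continuity of $\Psi$ to interchange it with the infinite $\bigoplus$, one gets $K_{T^*}(x,y)=\lim_{N\to\infty}\sum_{n=0}^N K_{T^n}(x,y)$ pointwise, so Theorem \ref{theorem:pds_kernels}(4) yields that $K_{T^*}$ is PDS, while rationality is immediate since $T^*$ is itself a (regulated) weighted transducer. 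The subtlety to flag here is that both the interchange of $\Psi$ with the limit and the convergence of the partial sums rest on the regulatedness and continuity assumptions, so I would name those explicitly at the point of use.
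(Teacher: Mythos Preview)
Your proposal is correct and follows essentially the same approach as the paper: the monoid morphism plus Theorem~\ref{theorem:pds_kernels}(1) for the $\oplus$-sum, the tensor-product feature-map trick (with the same caveat that naive closure under sum does not apply) for the $\otimes$-product, and the combination of parts 1, 2, and pointwise limit for Kleene-closure. If anything you are slightly more explicit than the paper about the role of continuity of $\Psi$ in the last step and about the weights on the new $\epsilon{:}\epsilon$ transitions in the concatenation transducer.
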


Unfortunately, many transducers do not naturally define a kernel that is PDS (for ex, the bigram counter) - which renders them of little practical value. However, as the following proposition shows, this shortcoming is readily addressed with a simple construction.
\begin{proposition}
Let $T = (\Sigma,\Delta,Q,I,F,E,\lambda,\rho)$ be a weighted finite-state transducer defined over the semiring $(K,\oplus,\otimes,\overline{0},\overline{1})$. Assume that the weighted transducer $T \circ T^{-1}$ is regulated, then $(\Psi,T \circ T^{-1})$ defines a PDS rational kernel over $\Sigma^* \times \Sigma^*$.
\begin{proof}
Let $S$ denote the transducer $T \circ T^{-1}$, and $K$ denote the rational kernel $S$ defines. We have
\begin{equation*}
K(x,y) = \Psi([\![S]\!](x,y)) = \Psi \bigg( \bigoplus_{z \in \Delta^*}[\![T]\!](x,z)\otimes[\![T]\!](y,z) \bigg)
\end{equation*}
for all $x,y \in \Sigma^*$. Since $\Psi$ is a continuous semiring morphism, the above is equivalent to:
\begin{equation*}
K(x,y) = \sum\limits_{z \in \Delta^*} \Psi ([\![T]\!](x,z))\cdot \Psi ([\![T]\!](y,z))
\end{equation*}
For all $n \in \mathbb{N}$ and $x,y \in \Sigma^*$, define $K_n(x,y)$ by
\begin{equation*}
K_n(x,y) = \sum\limits_{|z|\leq n}\Psi([\![T]\!](x,z))\cdot \Psi([\![T]\!](y,z))
\end{equation*}
For any $l \geq 1$ and any $x_1,...,x_l \in \Sigma^*$, define the matrix $M_n$ by $M_n =(K_n(x_i,x_j)), i \leq l, j\leq l$. Let $z_1,z_2,...,z_m$ be an arbitrary ordering of the strings of length less than or equal to $n$. We define the matrix $A$ by
\begin{equation}
A = (\Psi([\![T]\!](x_i,z_j)))_{i \leq l; j\leq m}
\end{equation}
We observe that $M_n = AA^T$. 
Since, $(AA^T)^T = (A^T)^TA^T= AA^T$, $M_n$ is symmetric. $M_n$ is also positive semidefinite since $p^T(AA^T)p =(A^Tp)^T(A^Tp) \geq 0$, for a non-zero column-vector $p$. This proves $K_n(x,y)$ is a PDS kernel.

Since $K$ is a pointwise limit of $K_n$ i.e. $K(x,y) = \lim_{n\rightarrow \infty} K_n(x,y)$, by Property 4, Theorem \ref{theorem:pds_kernels}, $K$ is a PDS kernel too.
\end{proof}
\end{proposition}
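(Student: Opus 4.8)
The plan is to exhibit $K := \Psi\bigl([\![T \circ T^{-1}]\!](\cdot,\cdot)\bigr)$ as a pointwise limit of finite-dimensional Gram matrices, and then invoke closure under pointwise limit. Rationality is essentially free: by hypothesis $S := T \circ T^{-1}$ is a regulated weighted transducer over the same semiring, so $(\Psi, S)$ defines a rational kernel directly from the definition. The real content is showing this kernel is PDS, and for that I expect $\Psi$ to be a continuous semiring morphism, exactly as assumed in the closure theorem above.

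First I would unwind the composition. Using the definition of composition (equation (\ref{eqn:transducer_composition})) together with the identity $[\![T^{-1}]\!](z,y) = [\![T]\!](y,z)$, I obtain
\[
[\![S]\!](x,y) = \bigoplus_{z \in \Delta^*} [\![T]\!](x,z) \otimes [\![T]\!](y,z).
\]
Applying the continuous semiring morphism $\Psi$ and pushing it through the $\bigoplus$ and $\otimes$ yields $K(x,y) = \sum_{z \in \Delta^*} \Psi([\![T]\!](x,z)) \cdot \Psi([\![T]\!](y,z))$. Morally this is an inner product of the feature vectors $\phi_x := \bigl(\Psi([\![T]\!](x,z))\bigr)_{z \in \Delta^*}$, but since the index set $\Delta^*$ is only countably infinite I cannot simply cite ``a dot product is PDS''; I need to pass through finite truncations.

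Next I would define, for each $n \in \mathbb{N}$, the truncated kernel $K_n(x,y) = \sum_{|z| \le n} \Psi([\![T]\!](x,z)) \cdot \Psi([\![T]\!](y,z))$, now a genuinely finite sum. To see $K_n$ is PDS, fix arbitrary $x_1,\dots,x_l \in \Sigma^*$, enumerate the finitely many strings $z_1,\dots,z_m$ of length at most $n$, and form the real matrix $A = \bigl(\Psi([\![T]\!](x_i,z_j))\bigr)_{i \le l,\, j \le m}$. Then the Gram matrix $M_n = \bigl(K_n(x_i,x_j)\bigr)_{ij}$ equals $A A^T$, which is symmetric and positive semidefinite because $p^T A A^T p = \|A^T p\|^2 \ge 0$ for every column vector $p$. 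Hence each $K_n$ is PDS. Finally, since $K(x,y) = \lim_{n\to\infty} K_n(x,y)$ pointwise, Property 4 of Theorem \ref{theorem:pds_kernels} gives that $K$ itself is PDS, which completes the proof.

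The step I expect to require the most care is the interchange of $\Psi$ with the infinite $\bigoplus$ over $z \in \Delta^*$, together with the claim that $\lim_n K_n(x,y)$ exists. This is precisely where the two hypotheses are spent: regulatedness of $T \circ T^{-1}$ guarantees that $[\![S]\!](x,y)$ is well-defined and lies in $\mathbb{K}$ (no $\epsilon$-cycle pathologies inflate the sum), and continuity of the morphism $\Psi$ lets it commute with the countable sum so that the finite truncations $K_n$ actually exhaust $K$. Everything else — the rewriting via $T^{-1}$ and the $A A^T$ factorization — is routine.
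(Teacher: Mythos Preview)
Your proposal is correct and follows essentially the same route as the paper: unwind $T\circ T^{-1}$ via composition and the identity $[\![T^{-1}]\!](z,y)=[\![T]\!](y,z)$, push the continuous semiring morphism $\Psi$ through, truncate to $|z|\le n$, exhibit each $K_n$ Gram matrix as $AA^T$, and conclude by closure under pointwise limit. Your added remarks on exactly where regulatedness and continuity of $\Psi$ are spent are accurate and go slightly beyond what the paper makes explicit.
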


Interestingly, it can be shown that the converse of the above proposition is also true. A PDS rational kernel defined by a transducer $S$ can \emph{always} be decomposed into an equivalent composition of transducers $T \circ T^{-1}$. This is particularly useful in light of the fact that the scope of some proofs are increased due to this equivalence (for example, see eqn (\ref{eqn:large_rational_derivation_1}), Section \ref{sec:large_scale} and eqn (\ref{eqn:count_kernel}), Section \ref{sec:learning_kernels}).

\begin{proposition} Let $S = (\Sigma,\Sigma,Q,I,F,E,\lambda,\rho)$ be an acyclic weighted finite-state transducer over $(K,\oplus,\otimes,\overline{0},\overline{1})$ such that $(\Psi,S)$ defines a PDS rational kernel on $\Sigma^* \times \Sigma^*$. Then there exists a weighted transducer $T$ over the probability semiring such that $(Id_\mathbb{R},T \circ T^{-1})$ defines the same rational kernel. $Id_\mathbb{R}$ denotes the identity function over $\mathbb{R}$.
\begin{proof}
Since $S$ is symmetric, if it accepts the pair $(x,y), x,y \in \Sigma^*$, it also accepts $(y, x)$. Hence, the set $X$, defined as the set of accepted input sequences, contains all sequences $S$ accepts either as input \emph{or} output. Let $\{x_1, x_2, ..., x_n\}$ be an arbitrary numbering of elements in $X$. Define the matrix $M$ as:
\begin{equation}
M = (\Psi([\![S]\!](x_i, x_j)))_{1 \leq i \leq n, 1 \leq j \leq n}
\end{equation}
Since $S$ defines a PDS kernel, $M$ is symmetric and positive semidefinite. The \emph{Cholesky decomposition} extends to this case\footnote{the standard case is when a matrix is positive definite.} (see \cite{doi:10.1137/1.9781611971811.ch8}) and $M = RR^T$, where $R=(R_{ij})$ is an upper triangular matrix with non-zero diagonal elements. Let $Y =\{y_1, ... ,y_n\}$ be an arbitrary subset of $n$ distinct strings of $\Sigma^*$. Define the weighted transducer $T$ over the $X \times Y$ by
\begin{equation}
[\![T]\!](x_i, y_j) = R_{ij}
\end{equation}
By definition of the composition operator, $T \circ T^{-1}(x_i,x_j) =(RR^T)_{i,j} = (M)_{i,j} = \Psi([\![S]\!](x_i, x_j)) $ for all $i, j, 1 \leq i, j \leq n$. Thus, $T \circ T^{-1} = \Psi(S)$.

Note that since $M$ is symmetric and positive \emph{semidefinite}, a unique $R$ may not always exist. A symmetric and positive \emph{definite} $M$ results in a unique $R$. Also note that although we know what the transition function for $T$ looks like, this proof doesn't provide us with a simple/minimal transducer representation.
\end{proof}
\end{proposition}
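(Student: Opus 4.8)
The plan is to run the argument behind the preceding proposition in reverse. There a composition $T\circ T^{-1}$ was shown to compute a Gram matrix, hence a symmetric positive semidefinite one; so to go the other way I would extract from $S$ the symmetric PSD matrix it induces, factor that matrix, and then build a transducer $T$ whose weight function realizes the factor. The first step is to isolate a finite support. Since $S$ is acyclic it has only finitely many accepting paths, so $[\![S]\!](x,y)\neq\overline{0}$ for only finitely many pairs; and since $(\Psi,S)$ is a symmetric kernel, $[\![S]\!](x,y)=[\![S]\!](y,x)$, so the (finite) set $X$ of strings occurring as the input label of an accepting path coincides with the set of strings occurring as an output label, and off $X\times X$ the kernel is identically $0$. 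Fixing an enumeration $X=\{x_1,\dots,x_n\}$, I would form $M=\bigl(\Psi([\![S]\!](x_i,x_j))\bigr)_{i,j\le n}$, which by hypothesis is symmetric and positive semidefinite.

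Next I would factor $M=RR^{T}$. For positive \emph{definite} $M$ this is the classical Cholesky decomposition; for the merely semidefinite case I would appeal to the extension in \cite{doi:10.1137/1.9781611971811.ch8}, which still yields an upper-triangular $R$ (with possibly-zero diagonal and no uniqueness). Then I would realize $R$ as a transducer: choose $n$ pairwise-distinct strings $y_1,\dots,y_n$ and build $T$ so that $[\![T]\!](x_i,y_j)=R_{ij}$ and $[\![T]\!](x,y)=0$ for all other pairs $(x,y)$ --- concretely, for each $(i,j)$ with $R_{ij}\neq0$ take a single linear chain of transitions whose input label is $x_i$, whose output label is $y_j$, and whose total path weight is $R_{ij}$, and let $T$ be the $\oplus$-sum of these chains. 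Because the $x_i$ are distinct and the $y_j$ are distinct, each pair $(x_i,y_j)$ is carried by exactly one accepting path, so $[\![T]\!]$ takes exactly the intended values; moreover $T$ is acyclic, so $T\circ T^{-1}$ is regulated.

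Finally I would verify the claim using the composition identity~(\ref{eqn:transducer_composition}): for all $i,j$ one gets $[\![T\circ T^{-1}]\!](x_i,x_j)=\bigoplus_{z}[\![T]\!](x_i,z)\otimes[\![T]\!](x_j,z)=\sum_{k=1}^{n}R_{ik}R_{jk}=(RR^{T})_{ij}=M_{ij}=\Psi([\![S]\!](x_i,x_j))$, while off $X\times X$ both $[\![T\circ T^{-1}]\!]$ and $\Psi([\![S]\!])$ vanish. Taking $\Psi=Id_{\mathbb{R}}$, the pair $(Id_{\mathbb{R}},T\circ T^{-1})$ therefore defines the same rational kernel as $(\Psi,S)$.

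I expect the genuine friction to sit in the factorization-and-realization step rather than in the bookkeeping. First, the decomposition $M=RR^{T}$ is being invoked for a positive \emph{semi}definite matrix, which is outside the scope of the textbook Cholesky routine and leaves $R$ (hence $T$) non-canonical and far from minimal. Second, $R$ will in general have negative off-diagonal entries, so ``probability semiring'' has to be read loosely as the real semiring $(\mathbb{R},+,\times,0,1)$ --- which is precisely why $\Psi=Id_{\mathbb{R}}$ appears in the statement instead of a morphism into $\mathbb{R}_{+}$. Everything else (acyclicity giving a finite support, symmetry giving a symmetric support, and the composition formula producing exactly an entrywise $RR^{T}$) is routine once these two points are granted.
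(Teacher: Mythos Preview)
Your proposal is correct and follows essentially the same route as the paper: exploit acyclicity and symmetry to reduce to a finite support $X$, form the Gram matrix $M$, factor $M=RR^{T}$ via (extended) Cholesky, and realize $R$ as a transducer $T$ over $X\times\{y_1,\dots,y_n\}$ so that $T\circ T^{-1}$ reproduces $M$. Your version is in fact more careful on two points the paper glosses over --- the explicit off-$X\times X$ vanishing and the observation that $R$ may have zero diagonal entries and negative off-diagonals, forcing ``probability semiring'' to be read as $(\mathbb{R},+,\times,0,1)$ --- but the underlying argument is the same.
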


We now move away from the property of PDS and look at the related property of a kernel being \emph{negative definite symmetric (NDS)}. Intuitively, such kernels represent \emph{distances}, compared to PDS kernels representing similarities. The essential idea is, for certain kinds of data it is easier to think in terms of the distance between two points. NDS kernels help concretize such measures. We still have to adhere to the requirement for SVMs etc that the input kernel be PDS; results around NDS kernels help here too: they show how NDS kernels may be used to construct PDS kernels. We visit this idea only briefly, to study a very commonly used distance measure for sequences, the \emph{edit-distance}.

NDS kernels were first studied  in \cite{Berg84}; many of the results are discussed in \cite{NIPS2000_1862} (where the term \emph{conditionally positive definite} is used instead of NDS). 

\begin{definition}
\label{defn:nds}
Let $X$ be a non-empty set. A function $K: X \times X \rightarrow \mathbb{R}$ is said to be a negative definite symmetric kernel (NDS kernel) if it is symmetric $(K(x,y) = K(y,x)$ for all $x,y \in X)$ and
\begin{equation}
\sum\limits_{i, j=1}^n c_i c_jK(x_i,x_j) \leq 0
\end{equation}
for all $n \geq 1, \{x_1,...,x_n\} \subseteq X$ and $\{c_1,...,c_n\} \subseteq R$ with $\sum_{i=1}^{n} c_i = 0$
\end{definition}
Clearly, if $K$ is a PDS kernel then $-K$ is a NDS kernel; however the converse does not hold in general. 

We had mentioned that NDS kernels may be used to construct PDS kernels.The following theorem shows a couple of ways to do this.
\begin{theorem}
\label{theorem:nds_construction}
Let $X$ be a non-empty set, $x_o \in X$, and let $K: X \times X \rightarrow \mathbb{R}$ be a symmetric kernel. Then,
\begin{enumerate}
\item $K$ is negative definite iff $\exp({-tK})$ is positive definite for all $t > 0$. (ref \cite{Berg84})
\item Let $K'$ be the function defined by
\begin{equation}
K'(x,y) = K(x,x_0)+ K(y,x_0)- K(x,y) - K(x_0,x_0).
\end{equation}
Then $K$ is negative definite iff $K'$ is positive definite. (ref \cite{NIPS2000_1862})
\end{enumerate}
\end{theorem}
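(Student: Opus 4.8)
The plan is to prove part (2) first, since the forward direction of part (1) reuses it, and then dispatch part (1) in its two directions.

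\textbf{Part (2).} This reduces to a single algebraic identity obtained by ``adjoining'' the base point. Fix $\{x_1,\dots,x_n\}\subseteq X$ and $\{c_1,\dots,c_n\}\subseteq\mathbb R$, put $c:=\sum_{i=1}^n c_i$, and consider the enlarged family $x_0,x_1,\dots,x_n$ with coefficients $c_0:=-c,c_1,\dots,c_n$, so that $\sum_{i=0}^n c_i=0$ (merging coefficients if $x_0$ already occurs). Expanding $K'$ and using symmetry of $K$, I would check the identity
\[
\sum_{i,j=1}^n c_i c_j K'(x_i,x_j) \;=\; -\sum_{i,j=0}^n c_i c_j K(x_i,x_j).
\]
If $K$ is negative definite, the right-hand side is $\ge 0$ because the enlarged coefficients sum to zero; since $K'$ is symmetric (inherited from $K$), $K'$ is PDS. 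Conversely, if $K'$ is positive definite and $\sum_{i=1}^n c_i=0$, then $c_0=0$, the adjoined point contributes nothing, and the identity gives $\sum_{i,j=1}^n c_i c_j K(x_i,x_j)=-\sum_{i,j=1}^n c_i c_j K'(x_i,x_j)\le 0$, i.e. $K$ is negative definite.

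\textbf{Part (1), forward direction.} Assume $K$ is negative definite. By part (2), the kernel $K'$ associated with the given base point $x_0$ is PDS, and a short rearrangement yields the decomposition $K(x,y)=f(x)+f(y)-K'(x,y)$ with $f(x):=K(x,x_0)-\tfrac12 K(x_0,x_0)$. Hence, for any $t>0$, $e^{-tK(x,y)}=e^{-tf(x)}e^{-tf(y)}\cdot e^{tK'(x,y)}$. The first factor is a rank-one PDS kernel (its Gram matrix is $gg^T$ with $g_i=e^{-tf(x_i)}$), and $e^{tK'}$ is PDS: $tK'$ is PDS, each term $\tfrac{t^k}{k!}(K')^{k}$ of the exponential series is PDS (a product of a nonnegative constant kernel with a $k$-fold product of $K'$, using closure under product in Theorem \ref{theorem:pds_kernels}), the partial sums are PDS (closure under sum), and the pointwise limit is PDS (closure under pointwise limit). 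Therefore $e^{-tK}$, being a product of two PDS kernels, is PDS by Theorem \ref{theorem:pds_kernels}.

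\textbf{Part (1), converse, and the main obstacle.} Assume $e^{-tK}$ is PDS for every $t>0$. Fix $\{x_1,\dots,x_n\}$ and $\{c_1,\dots,c_n\}$ with $\sum_i c_i=0$, and set $\phi(t):=\sum_{i,j}c_i c_j e^{-tK(x_i,x_j)}$, a finite sum of exponentials, hence smooth. By hypothesis $\phi(t)\ge 0$ for all $t>0$, while $\phi(0)=(\sum_i c_i)^2=0$; since $\phi$ attains its minimum over $[0,\infty)$ at $0$, we get $\phi'(0)\ge 0$, that is $-\sum_{i,j}c_i c_j K(x_i,x_j)\ge 0$, which with symmetry of $K$ is exactly negative definiteness. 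The genuinely substantive step is the forward direction of part (1): it rests entirely on the fact that the entrywise exponential of a PDS kernel is PDS (the Schur product theorem packaged as a power series), and on arranging the decomposition $K=f+f-K'$ from part (2) so that the exponential splits into a rank-one factor times $e^{tK'}$ — without that splitting one would be left trying to exponentiate $-tK$ directly, which need not be PDS.
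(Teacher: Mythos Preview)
The paper does not supply its own proof of this theorem; it merely states the result with references to \cite{Berg84} and \cite{NIPS2000_1862} and moves on. Your argument is correct and is essentially the classical Schoenberg--Berg proof found in those references: the adjoined-base-point identity for part~(2), the decomposition $K(x,y)=f(x)+f(y)-K'(x,y)$ combined with the Schur-product/power-series argument for the forward direction of part~(1), and the derivative-at-zero trick $\phi'(0^+)\ge 0$ for the converse. There is nothing in the paper itself to compare against, but your write-up stands on its own.
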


NDS kernels are also closed under certain operations (\cite{Berg84}). These are listed below.
\begin{theorem}
\label{theorem:nds_closure}
Let $X$ be a non-empty set.
\begin{enumerate}
\item Closure under sum: Let $K_1,K_2: X \times X \rightarrow \mathbb{R}$ be NDS kernels, then $K_1+ K_2: X \times X \rightarrow \mathbb{R}$ is a NDS kernel.
\item Closure under log and exponentiation: Let $K: X \times X \rightarrow \mathbb{R}$ be a NDS kernel with $K \geq 0$, and $\alpha$ a real number with $0 < \alpha < 1$, then $log(1+ K),K_\alpha: X \times X \rightarrow \mathbb{R}$ are NDS kernels.
\item Closure under pointwise limit: Let $K_n: X \times X \rightarrow \mathbb{R}$ be a NDS kernel for all $n \in N$, then $K$ defined by $K(x,y) = \lim_{n \rightarrow \infty} K_n(x,y)$ is a NDS kernel.
\end{enumerate}
\end{theorem}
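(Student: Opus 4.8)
The plan is to treat the three parts separately, in increasing order of difficulty; parts 1 and 3 are one-line linearity/limit arguments, and part 2 is where the real work lies. For \textbf{closure under sum}, I would note that symmetry is immediate and that for any admissible $n$, points $x_1,\dots,x_n$ and coefficients $c_1,\dots,c_n$ with $\sum_i c_i = 0$,
$$\sum_{i,j=1}^n c_i c_j (K_1 + K_2)(x_i, x_j) = \sum_{i,j} c_i c_j K_1(x_i,x_j) + \sum_{i,j} c_i c_j K_2(x_i,x_j) \le 0,$$
each summand being $\le 0$ by hypothesis. For \textbf{closure under pointwise limit}, symmetry passes to the limit, and since the defining double sum is \emph{finite} one may interchange $\lim$ and $\sum$ without any convergence subtlety:
$$\sum_{i,j} c_i c_j K(x_i,x_j) = \lim_{n\to\infty} \sum_{i,j} c_i c_j K_n(x_i,x_j) \le 0,$$
as a limit of non-positive reals.

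The crux is \textbf{closure under log and exponentiation}, and the idea is to pass through the PDS world via Theorem \ref{theorem:nds_construction}(1). I would use the L\'evy--Khinchine-type integral representations, valid for $\lambda \ge 0$ and $0 < \alpha < 1$,
$$\lambda^\alpha = \frac{\alpha}{\Gamma(1-\alpha)} \int_0^\infty \big(1 - e^{-t\lambda}\big)\, \frac{dt}{t^{1+\alpha}}, \qquad \log(1+\lambda) = \int_0^\infty \big(1 - e^{-t\lambda}\big)\, \frac{e^{-t}}{t}\, dt,$$
the second being a Frullani integral. Fix $n$, points $x_i$, and $c_i$ with $\sum_i c_i = 0$. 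For each fixed $t>0$, Theorem \ref{theorem:nds_construction}(1) gives that $e^{-tK}$ is PDS, so $\sum_{i,j} c_i c_j e^{-tK(x_i,x_j)} \ge 0$; combining this with $\sum_{i,j} c_i c_j = \big(\sum_i c_i\big)^2 = 0$ yields
$$\sum_{i,j} c_i c_j \big(1 - e^{-tK(x_i,x_j)}\big) = -\sum_{i,j} c_i c_j e^{-tK(x_i,x_j)} \le 0.$$
Integrating this non-positive quantity against the positive measures above, and using $K \ge 0$ so that every $K(x_i,x_j)$ lies in the domain of the representations, gives $\sum_{i,j} c_i c_j K(x_i,x_j)^\alpha \le 0$ and $\sum_{i,j} c_i c_j \log(1 + K(x_i,x_j)) \le 0$; symmetry of $K^\alpha$ and $\log(1+K)$ is inherited from that of $K$.

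I expect the only obstacles to be bookkeeping rather than ideas: verifying that the two integral representations hold and converge (the substitution $u = t\lambda$ reduces the first to a Gamma integral, and the second is classical), and observing that interchanging the finite sum with the integral needs no dominated-convergence argument. The one conceptual point worth stating explicitly is that Theorem \ref{theorem:nds_construction}(1) is exactly the device that converts the hypothesis ``$K$ is NDS'' into the sign information $\sum_{i,j} c_i c_j e^{-tK(x_i,x_j)} \ge 0$ that the integrand requires — without it, parts 2 would not follow from the bare definition of NDS.
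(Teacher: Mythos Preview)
The paper does not actually prove this theorem: it states the three closure properties and attributes them to \cite{Berg84} without giving an argument, so there is no proof in the paper to compare against. Your proposal is correct and is in fact the standard proof one finds in that reference; parts 1 and 3 are immediate from the definition, and your treatment of part 2 via the integral representations $\lambda^\alpha = \frac{\alpha}{\Gamma(1-\alpha)}\int_0^\infty (1-e^{-t\lambda})\,t^{-1-\alpha}\,dt$ and $\log(1+\lambda)=\int_0^\infty (1-e^{-t\lambda})\,e^{-t}t^{-1}\,dt$, combined with Theorem~\ref{theorem:nds_construction}(1) to control the sign of the integrand, is exactly the classical route.
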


We now look at the \emph{edit-distance} rational kernel. The edit-distance is a commonly used distance measure between two sequences. Given sequences $A$ and $B$, and the operations \emph{insert}, \emph{delete} and \emph{replace} (each applied to a symbol), and costs corresponding to each of these operations (assumed to be 1 here), the edit-distance is the minimum total cost of a series of operations needed to transform $A$ to $B$. For ex if $A=abb$ and $B=bb$, then two possible ways of transforming $A$ to $B$ are:
\begin{enumerate}
\item Replace the first symbol in $A$ with $b$, delete the last symbol in $A$. Total cost = 2.
\item Delete the first symbol in $A$. Total cost = 1.
\end{enumerate}
A total cost of $1$ is the minimum across all possible transformations. Hence, the edit-distance between $A$ and $B$ is $1$, and is denoted by $d_e(A,B)=1$.

Fig \ref{fig:transducer_string_edit} shows rational kernels that evaluate edit-distance over the tropical and probability semirings. The standard notion of edit-distance aligns with the former. In this case, transitions with the same input and output character contribute a weight of $0$, while for all other cases a weight of $1$ is added (recall that $\otimes$ is defined as ``$+$'' for the tropical semiring). The weight of an accepting path measures the total cost of the transformation corresponding to the path. The $\oplus$ operator, which is defined as ``$\min$'' here, picks the minimum such cost giving us the edit-distance.

\begin{figure*}[!t]
\centering
\subfloat[Edit-distance over the tropical semiring.]{\includegraphics[width=0.85 in]{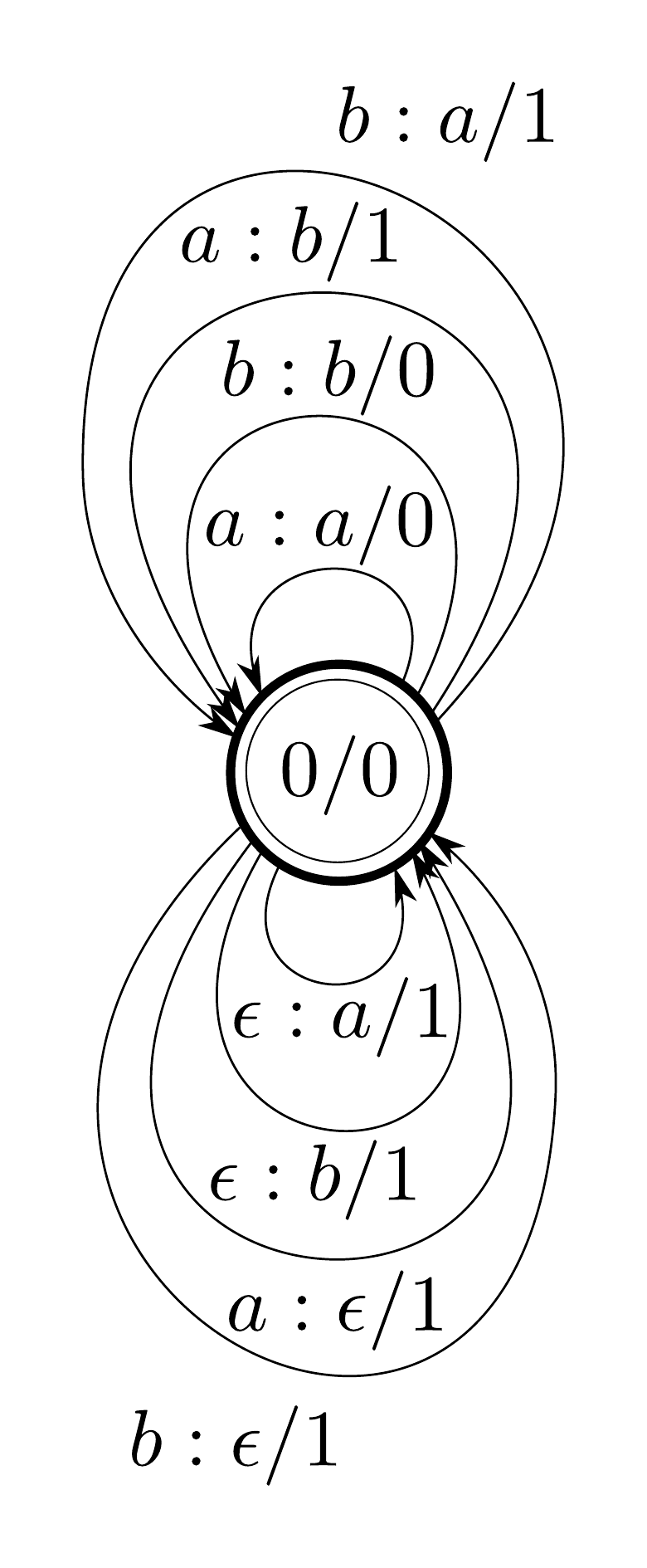}%
\label{fig_edit_dist_tropical}}
\hfil
\subfloat[Edit-distance over the probability semiring.]{\includegraphics[width=3in]{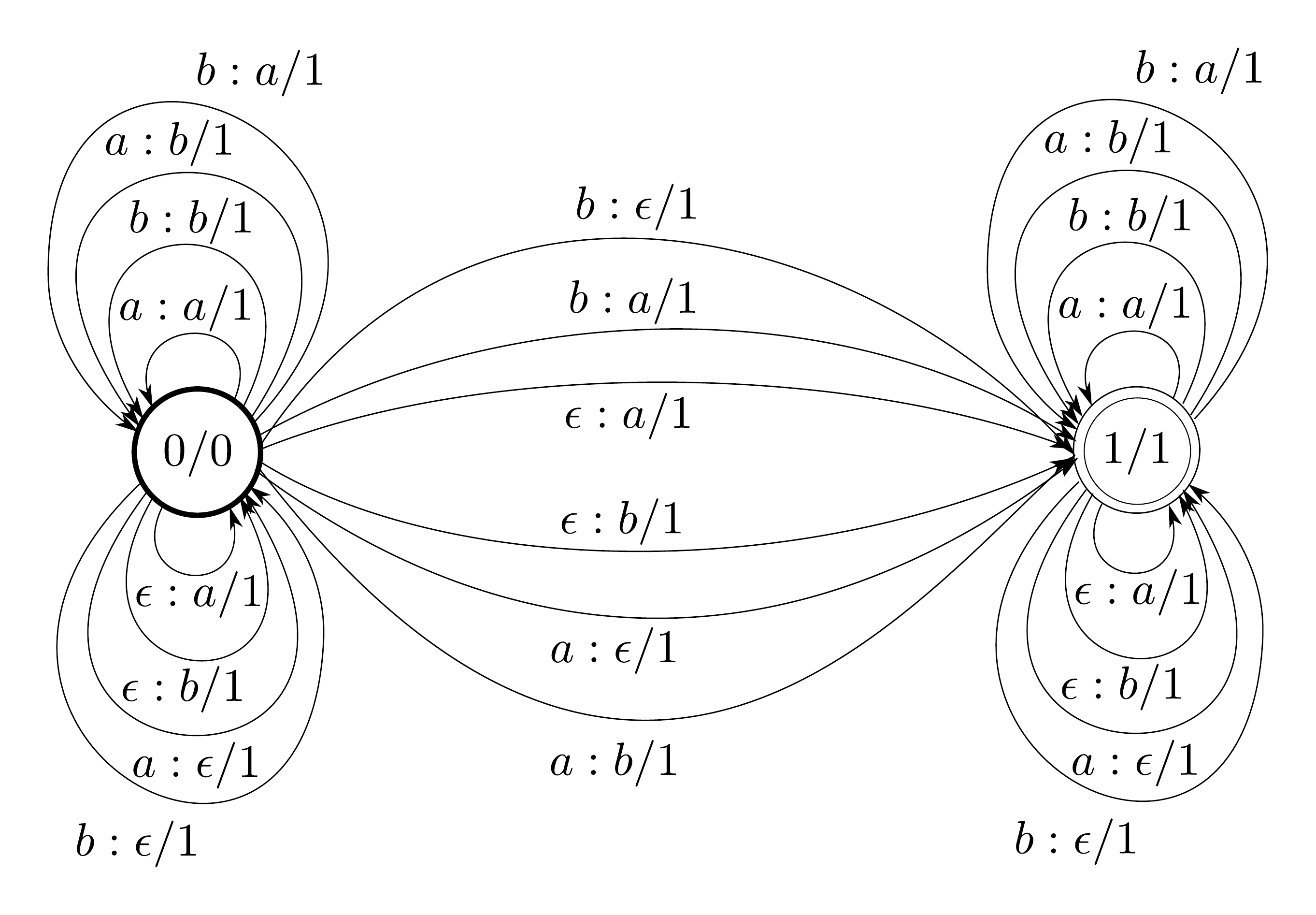}%
\label{fig_edit_dist_probability}}
\caption{Transducers representing edit-distance.} 
\label{fig:transducer_string_edit}
\end{figure*}

Given the popularity of this measure, and the fact that it is representable as a rational kernel, we are interested in knowing its properties. The following proposition looks at them.

\begin{proposition}
Let $\Sigma$ be a non-empty finite alphabet and let $d_e$ be the edit-distance over $\Sigma$, then $d_e$ is a symmetric rational kernel. Furthermore, (1): $d_e$ is not a PDS kernel, and (2): $d_e$ is a NDS kernel iff $\lvert \Sigma \rvert = 1$.
\begin{proof}
We have already seen the transducers corresponding to edit-distance in Fig \ref{fig:transducer_string_edit}.

Let $a \in \Sigma$. Consider the matrix $(d_e(x_i,x_j))_{1 \leq i, j \leq 2}$ with $x_1 = \epsilon$ and $x_2 = a$:
\[\begin{bmatrix}
    0       & 1 \\
    1       & 0
\end{bmatrix}
\]
The above matrix has the eigenvalues $(1,-1)$. For a kernel to be PDS, a kernel matrix determined by \emph{any} number of samples must be positive semidefinite, or equivalently, must have non-negative eigenvalues. Hence, edit-distance is not PDS.

It turns out that the edit-distance defines a NDS kernel only when $\lvert \Sigma \rvert=1$. In this case, the edit-distance is simply the difference in the length of the sequences. Thus, for sequences $x,y \in \Sigma^*$, we have $d_e(x,y)=\lvert \lvert x \rvert - \lvert y \rvert \rvert$. Note the symbol overloading here: $\lvert x \rvert, \lvert y \rvert$ denote the lengths of $x,y$ respectively, whereas the outer ``$\lvert \rvert$'' denotes absolute value of a number.

Let $\sum_{i}c_i=0$. Consider\footnote{\cite{Cortes04rationalkernels:} uses a different proof}:
\begin{align*}
\sum_{i,j}c_i c_j d_e(x_i, x_j)^2&=\sum_{i,j}c_i c_j \lvert\lvert x_i \rvert -\lvert x_j \rvert \rvert^2\\
&= \sum_{i,j}c_i c_j (\lvert x_i \rvert -\lvert x_j \rvert)(\lvert x_i \rvert -\lvert x_j \rvert) \\
&=\sum_{i,j}c_i c_j (\lvert x_i \rvert^2 + \lvert x_j \rvert^2 - 2 \lvert x_i \rvert \lvert x_j \rvert) \\
&= \sum_{i,j}c_i c_j \lvert x_i \rvert^2 + \sum_{i,j}c_i c_j \lvert x_j \rvert^2 - 2 \sum_{i,j}c_i c_j \lvert x_i \rvert \lvert x_j \rvert \\
&= \sum_{j} c_j \sum_{i} c_i \lvert x_i \rvert^2  + \sum_{i} c_i \sum_{j} c_j \lvert x_j \rvert^2 - 2 \sum_{i,j}c_i c_j \lvert x_i \rvert \lvert x_j \rvert \\
&=- 2 \sum_{i,j}c_i c_j \lvert x_i \rvert \lvert x_j \rvert  = -2 \bigg(\sum_{i} c_i \lvert x_i \rvert \bigg)\bigg(\sum_{j} c_j \lvert x_j \rvert \bigg)\\
&= -2 \bigg(\sum_{j} c_j \lvert x_j \rvert \bigg)^2 \leq 0
\end{align*}
By definition \ref{defn:nds}, we have $d_e^2$ is NDS, when $|\Sigma|=1$. By property 2, theorem \ref{theorem:nds_closure}, $(d_e^2)^\frac{1}{2} = d_e$ is also NDS.

Lets consider the case $|\Sigma|>1$. Assume $a,b \in \Sigma$. Let $x_1,...,x_{2^n}$ be any ordering of the strings of length $n$ consisting only of the symbols $a,b$. Define the matrix $M_n$ by:
\begin{equation*}
M_n= (\exp(-d_e(x_i,x_j)))_{1\leq i, j,\leq2^n}
\end{equation*}
Fig \ref{fig:edit_distance_eigenvalues} shows a plot of the smallest eigenvalue of $M_n$ as a function of $n$. We see that the smallest value becomes negative for $n \geq 6$. Since $M_n$ can be seen as a sample kernel matrix for any $|\Sigma|>1$, we conclude that $\exp(-d_e)$ is not PDS. By property 1, theorem \ref{theorem:nds_construction}, it follows that $d_e$ is not NDS when $|\Sigma|>1$.
\end{proof}

\end{proposition}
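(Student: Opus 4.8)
The plan is to dispatch the three assertions separately, relying throughout on the closure and construction theorems for PDS and NDS kernels recorded above, and using the edit-distance transducers already exhibited in Fig.~\ref{fig:transducer_string_edit}.

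\emph{Symmetric and rational.} Symmetry is immediate from the definition: insertion and deletion carry equal unit cost and replacement is its own inverse, so any cost-$c$ transformation of $A$ into $B$ reverses to a cost-$c$ transformation of $B$ into $A$, whence $d_e(x,y)=d_e(y,x)$. Rationality is witnessed by the tropical-semiring transducer of Fig.~\ref{fig_edit_dist_tropical} together with $\Psi=Id_\mathbb{R}$: the $\otimes=+$ accumulates the cost of an alignment along an accepting path, and the $\oplus=\min$ over accepting paths selects the minimum-cost alignment, which is precisely $d_e$.

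\emph{Not PDS.} I would exhibit a $2\times 2$ counterexample valid for any non-empty $\Sigma$. Fix $a\in\Sigma$ and take $x_1=\epsilon$, $x_2=a$; then $d_e(x_1,x_1)=d_e(x_2,x_2)=0$ and $d_e(x_1,x_2)=1$, so the sample matrix is $\left(\begin{smallmatrix}0&1\\1&0\end{smallmatrix}\right)$, with eigenvalues $\pm1$. A negative eigenvalue means this sample matrix is not positive semidefinite, so the Mercer condition fails and $d_e$ is not PDS.

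\emph{NDS iff $|\Sigma|=1$.} For the ``if'' direction, when $|\Sigma|=1$ the edit-distance collapses to $d_e(x,y)=\bigl|\,|x|-|y|\,\bigr|$. I would first show $d_e^2$ is NDS: for $c_1,\dots,c_n$ with $\sum_i c_i=0$, expand $\sum_{i,j}c_ic_j(|x_i|-|x_j|)^2$, use $\sum_i c_i=0$ to annihilate the $\sum_{i,j}c_ic_j|x_i|^2$ and $\sum_{i,j}c_ic_j|x_j|^2$ terms, and obtain $-2\bigl(\sum_i c_i|x_i|\bigr)^2\le 0$, which is exactly Definition~\ref{defn:nds} for $d_e^2$. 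Then, since $d_e^2\ge 0$, apply the power closure of NDS kernels (Theorem~\ref{theorem:nds_closure}, item 2) with $\alpha=\tfrac12$ to conclude $d_e=(d_e^2)^{1/2}$ is NDS. For the ``only if'' direction I would contrapose: by Theorem~\ref{theorem:nds_construction}(1), $d_e$ is NDS iff $\exp(-t\,d_e)$ is PDS for all $t>0$, so it suffices to find a single $t$ (take $t=1$) and a single sample set on which positive semidefiniteness fails. Choosing $a,b\in\Sigma$, letting $x_1,\dots,x_{2^n}$ enumerate the length-$n$ strings over $\{a,b\}$, and forming $M_n=\bigl(\exp(-d_e(x_i,x_j))\bigr)$, I would argue that the least eigenvalue of $M_n$ becomes negative once $n$ is sufficiently large, ruling out NDS whenever $|\Sigma|>1$.

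The main obstacle is precisely this last step. In contrast with the clean $|\Sigma|=1$ identity, the spectrum of $M_n$ has no obvious closed form, and since $M_n$ has size $2^n$ a hand verification is infeasible; a fully rigorous argument would require either a structural decomposition of $M_n$ exploiting its near-tensor behaviour across the coordinates of the strings, or an explicit test vector $c$ with $c^T M_n c<0$ for a concrete $n$. The route I would actually take mirrors the standard treatment: compute the smallest eigenvalue of $M_n$ numerically for $n=1,2,\dots$, observe that it crosses zero (around $n=6$), and thereby exhibit a non-PSD sample matrix for $\exp(-d_e)$, which via Theorem~\ref{theorem:nds_construction}(1) completes the ``only if'' direction.
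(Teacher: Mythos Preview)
Your proposal is correct and follows essentially the same route as the paper: the identical $2\times2$ counterexample for non-PDS, the same $d_e^2$-then-$\alpha=\tfrac12$ argument for $|\Sigma|=1$, and the same numerical eigenvalue check on $M_n=\bigl(\exp(-d_e(x_i,x_j))\bigr)$ over length-$n$ binary strings (with negativity appearing at $n\ge 6$) for $|\Sigma|>1$. Your explicit justification of symmetry and your candid acknowledgment that the ``only if'' step rests on a numerical observation rather than a closed-form spectral argument are additions, but the underlying strategy is the same as the paper's.
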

\begin{figure*}[!t]
\centering
\includegraphics[width=3.0 in]{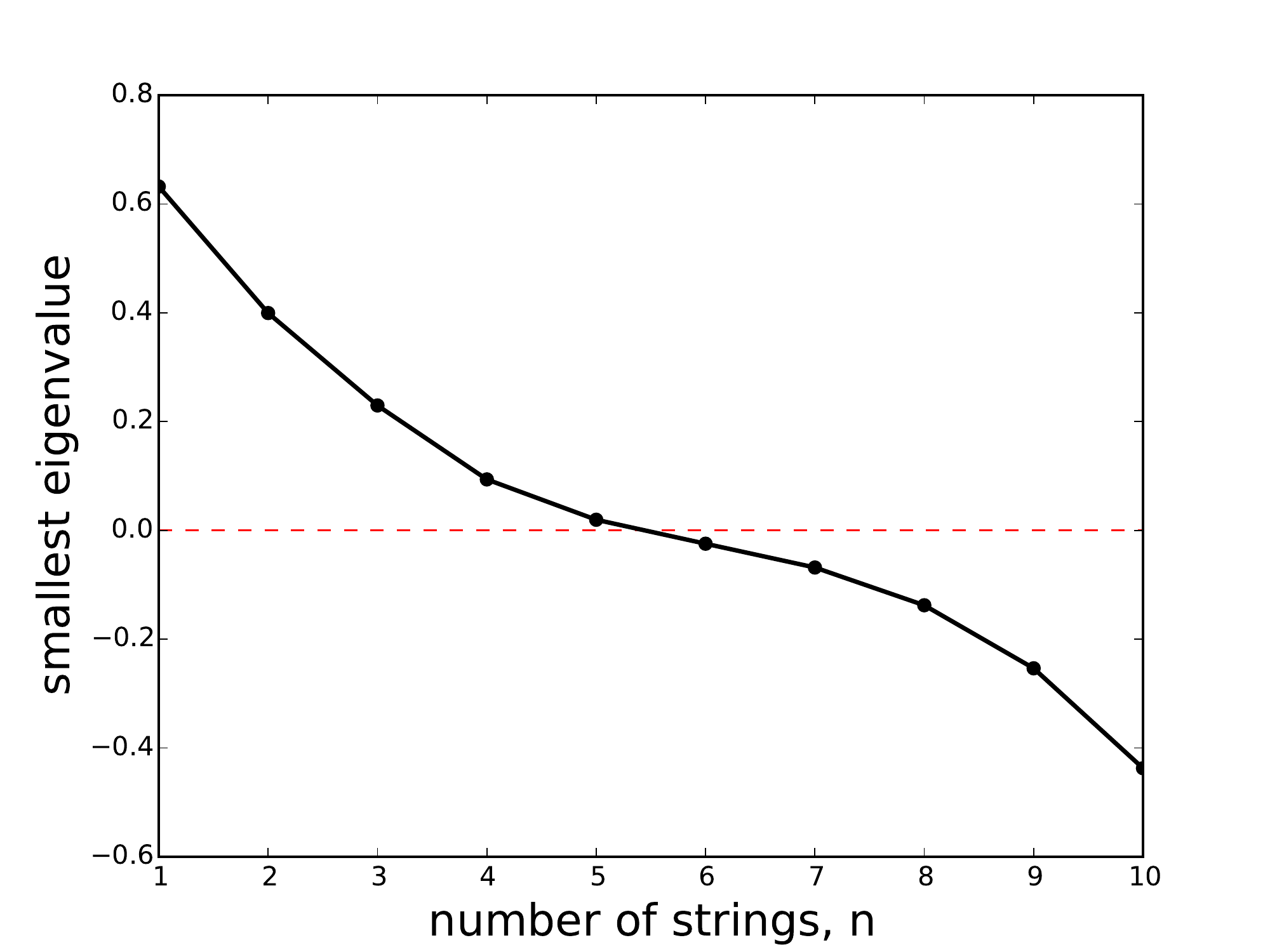}
\caption{Smallest eigenvalue of the matrix $M_n = (\exp(-d_e(x_i,x_j)))_{1 \leq i, j, \leq 2^n}$ as a function of $n$} 
\label{fig:edit_distance_eigenvalues}
\end{figure*}

\subsection{Computing Kernel Values}
\label{sec:computing_kernel_values}

To use rational kernels, in practice we need to concern ourselves with two kinds of computation:
\begin{enumerate}
\item Calculating the composition of two transducers $T_1, T_2$ - this is a common requirement given that our objective is to procure PDS kernels. States in the composition $T_1 \circ T_2$ are a subset of the cartesian product of the set of states from $T_1$ and $T_2$. Ignoring transitions with $\epsilon$ inputs or outputs, the transitions in $T_1 \circ T_2$ are identified by:
\begin{equation}
(q_1,a,b,w_1,q_2) \in E_1 \text{ and } (q'_1 ,b,c,w_2,q'_2) \in E_2\implies ((q_1,q'_1 ),a,c,w_1 \otimes w_2,(q_2,q'_2)) \in E
\end{equation}
where $E_1, E_2, E$ are the set of transitions for $T_1, T_2 \text{ and } T_1\circ T_2$ respectively. Fig \ref{fig:transducer_composition} shows an example of composition.

In the worst case, all transitions from state $q_1$ match all transitions from $q'_1$ making the space and time complexity of the algorithm quadratic: $O((|Q_1|+ |E_1|)(|Q_2|+ |E_2|))$.

Dealing with transitions with the $\epsilon$ either as the input label or output label requires special care since it might lead to transitions with $\epsilon$ as \emph{both} input and output symbols in $T_1 \circ T_2$. This makes the final transducer non-regulated. 

See \cite{citeulike:212964}, \cite{Mohri:2000:DPW:325997.326000} for details around implementing composition. \cite{DBLP:journals/ijfcs/AllauzenM09} looks at fast composition algorithms when more than two transducers are involved - known as \emph{N-way composition}. 
\item Calculating the kernel value itself, $K(x,y)$, for any sequences $x,y$ - A generalized form of the \emph{shortest distance} algorithm that applies to 
semirings (\cite{Mohri:2002:SFA:639508.639512}), is used here. For a transducer $M$, the shortest distance from a state $q$ to a set of final states is defined as:
\begin{equation}
d[q] = \bigoplus\limits_{\pi\in P(q,F)} w[\pi] \otimes \rho[n[\pi]]
\end{equation}
This notion of shortest distance matches our standard notion of shortest distance when we consider the tropical semiring:
\begin{equation*}
d[q] = \min\limits_{\pi\in P(q,F)} w[\pi] \cdot \rho[n[\pi]]
\end{equation*}
In this case, there exists a path $\pi^*$ where $d[q] = w[\pi^*] \cdot \rho[n[\pi^*]]$, and hence the term \emph{shortest path} makes sense. In the general case however, there may not be a path corresponding to $d[q]$, and we use the term shortest distance. When $M$ is acyclic\footnote{although the transducers we use have self-loops, this property still applies to them since they are regulated, and hence path lengths are bounded by $\max(|x|,|y|)$ for $M(x,y)$}, the complexity of the algorithm is linear: $O(|Q|+ (T_\oplus + T_\otimes)|E|)$, where $T_\oplus$ denotes the maximum time to compute $\oplus$ and $T_\otimes$ the time to compute $\otimes$.

Let $K$ be a rational kernel with $T$ as its associated weighted transducer. Let $A$, $B$ be automata that only accept the sequences $x,y \in \Sigma^*$ respectively. $K(x,y)$ can be computed by:
\begin{enumerate} 
\item Construct the composition $N=A \circ T \circ B$. By eqn (\ref{eqn:automata_transducer_composition}):
\begin{equation*}
N(s,t) =\bigoplus\limits_{s,t \in \Sigma^*}[\![A]\!](s) \otimes [\![T]\!](s, t)  \otimes  [\![B]\!](t)
\end{equation*}
Since, $A(s)=0 \text{ if } s \neq x$ and $B(t)=0 \text{ if } t \neq y$, the \emph{only} accepting paths $N$ has, have the input sequence as $x$ and the output sequence as $y$.
\item Compute $w[N]$, the shortest distance from the initial states of $N$ to its final
states using the generalized shortest distance algorithm.
\item Compute $\Psi(w[N])$. 
\end{enumerate}
The worst case complexity for constructing $N$ in Step a) is $O(|A||T||B|)$, and worst complexity for computing the shortest distance in Step b) is $O(|Q_N|+ (T_\oplus + T_\otimes)|E_N|)$, where $Q_N$ and $E_N$ are the number of states and edges in $N$ respectively. The latter term may be approximated with the size of $N$, and hence the worst case complexity for Step b) may be written as $O(|A||T||B|)$ (since the space complexity of $N$ is $O(|A||T||B|)$). If $\Phi$ is the worst case complexity for computing $\Psi(x)$, then the overall worst case complexity is $O(|A||T||B| + \Phi)$. This simplifies to $O(|A||T||B|)$ if $\Phi$ is a constant, which is true in most cases.
\end{enumerate}

\begin{figure*}[!t]
\centering
\subfloat[$T_1$]{\includegraphics[width=2.75 in]{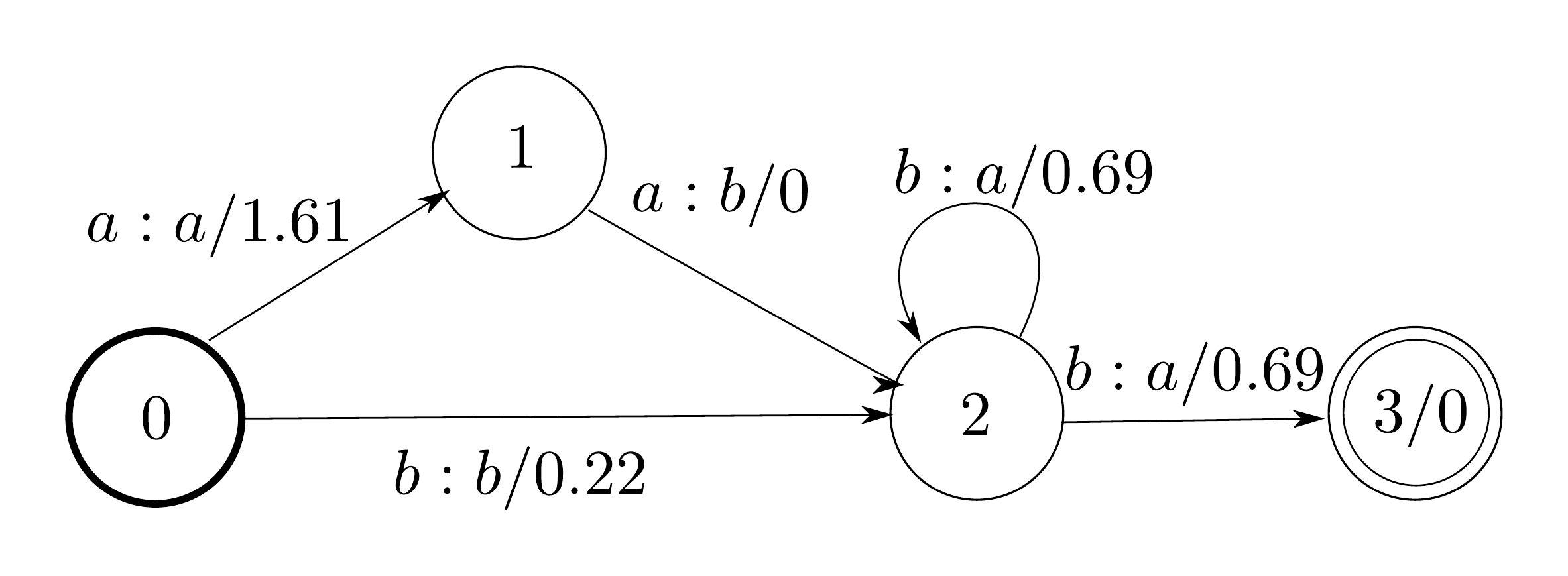}%
\label{fig_indv_graphs}}
\hfil
\subfloat[$T_2$]{\includegraphics[width=2.75in]{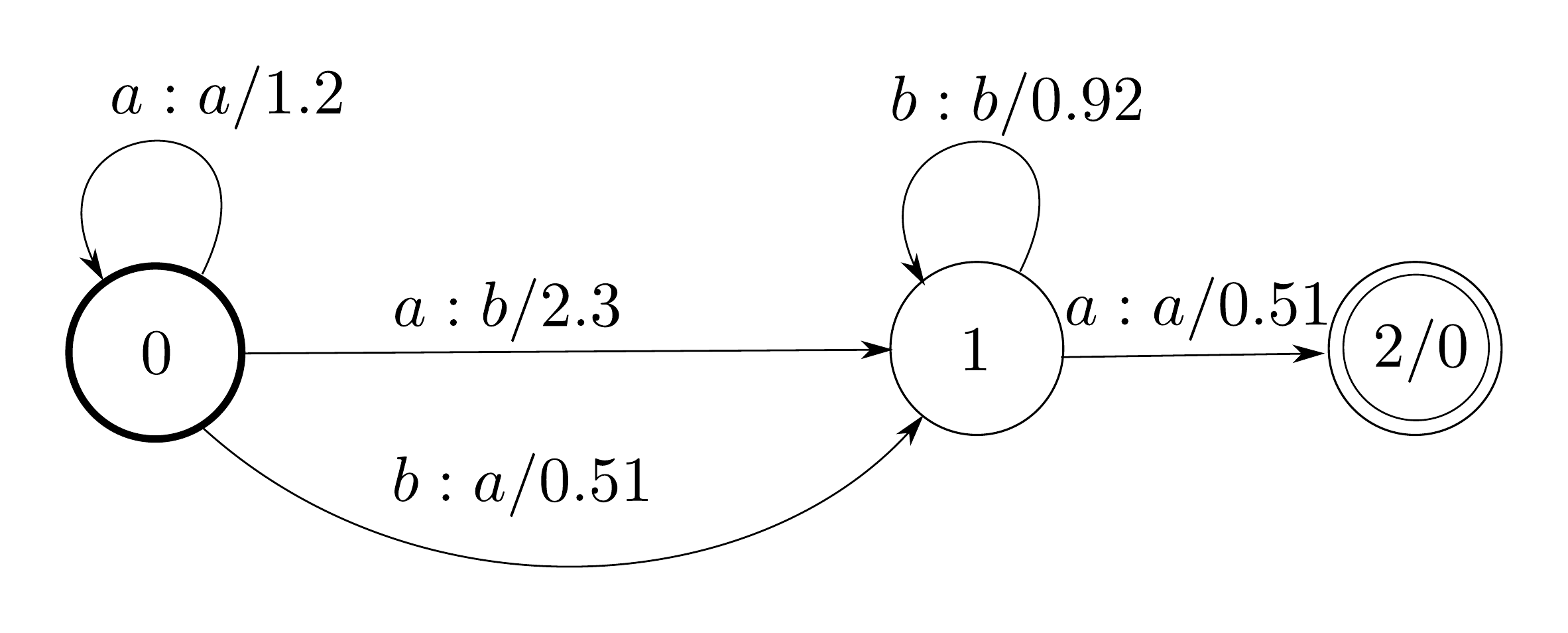}%
\label{fig_graph_product}}
\hfil
\subfloat[$T_1 \circ T_2$]{\includegraphics[width=2.85in]{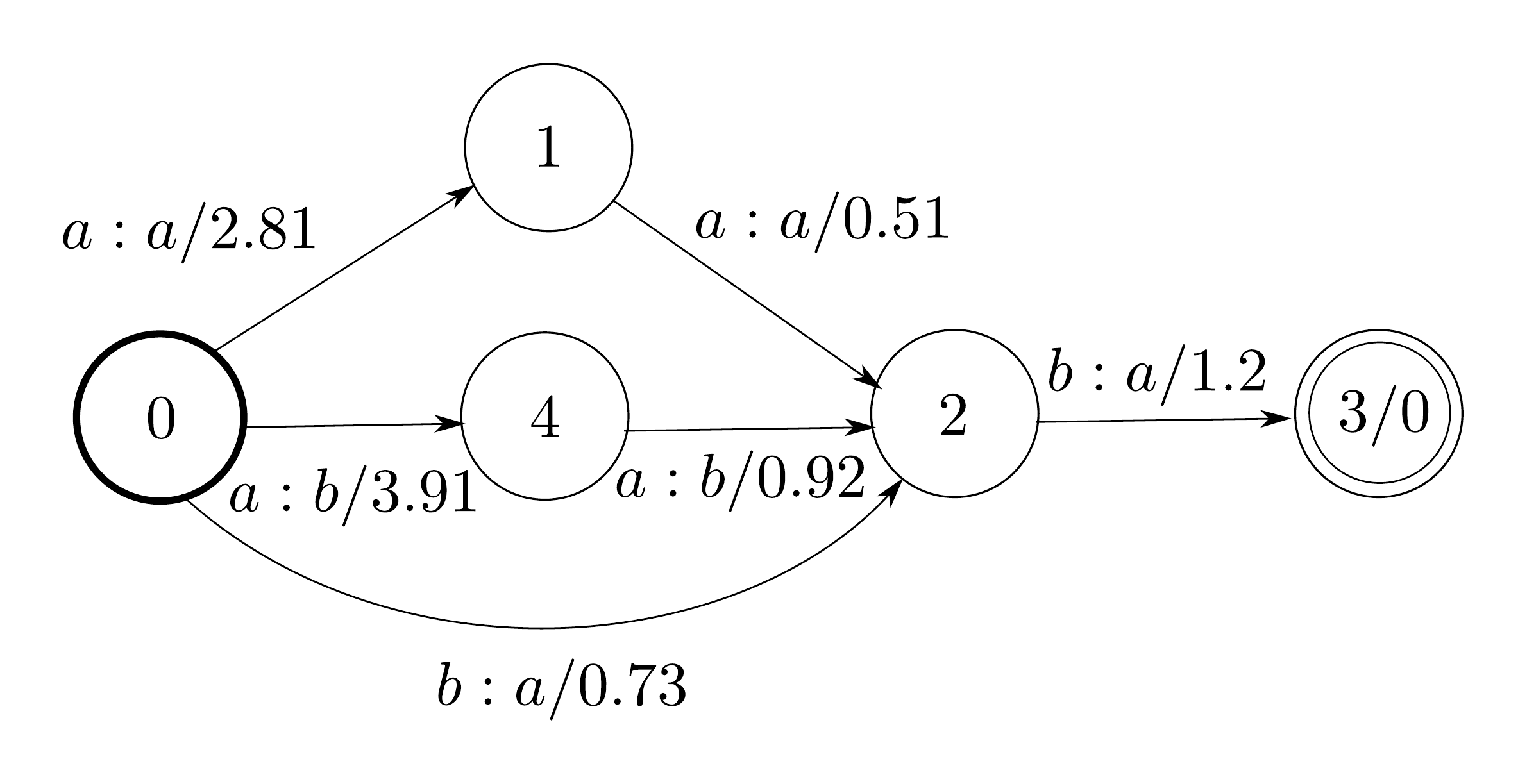}%
\label{fig_composition}}
\caption{Transducer composition.} 
\label{fig:transducer_composition}
\end{figure*}

\section{Relationship to Graph Kernels}
\label{sec:relation_to_graph}
We see an interesting relationship between \emph{random walk graph kernels} and rational kernels in \cite{Vishwanathan:2010:GK:1756006.1859891}. Informally, a random walk graph kernel gives us a measure of similarity between two graphs, by performing multiple random walks on each of them and counting how many of them are similar in some sense. The connection to rational kernels is, when we compose transducers: $T_1 \circ T_2(x,y)= \sum\limits_{z \in \Sigma^*}T_1(x,z)T_2(z,y)$, we are , in a way, counting similar paths by forcing paths from $T_1$ to have the same output sequence $z$, as the input sequence for paths on $T_2$. 

This relationship would be made more concrete in the following sections. Before we can describe random walk graph kernels, we quickly acquaint ourselves with the notation and terminology needed.

\begin{enumerate}
\item A graph $G$ consists of an ordered set of $n$ vertices $V = \{v_1,v_2,...,v_n\}$, and a set of directed edges $E \subset V \times V$. $v_i$ and $v_j$ are termed \emph{neighbors}, denoted by $v_i \sim v_j$, if $(v_i,v_j) \in E$. We do not allow self-loops, that is, $(v_i,v_i) \notin E$ for any $i$.
\item A walk of length $k$ on $G$ is a sequence of indices $i_0,i_1,...i_k$ such that $v_{i_{r-1}} \sim v_{i_r}$ for all $1 \leq r \leq k$. 
\item A graph is said to be undirected if $(v_i,v_j) \in E \iff (v_j,v_i) \in E$.
\item In a \emph{weighted} graph, for every edge $v_i \sim v_j$, we have a \emph{weight} $w_{ij} > 0$ associated with it. If $v_i \not\sim v_j$, $w_{ij}=0$. In an \emph{undirected weighted} graph, we have $w_{ij} = w_{ji}$.
\item The adjacency matrix $\widetilde{A}$ is a $n \times n$ matrix, where $\widetilde{A}_{ij}=w_{ji}$. Note that this definition is the transposed version of what other sources describe as the adjacency matrix, but we would use this in keeping with the notation of \cite{Vishwanathan:2010:GK:1756006.1859891}.
\item The \emph{normalized} adjacency matrix is defined as $A=\widetilde{A}D^{-1}$, where $D$ is a diagonal matrix with $D_{ii}=d_i=\sum\limits_{j}\widetilde{A}_{ij}$. $A$ has the property that each of its columns sums to one, and it can therefore serve as the \emph{transition matrix} for a stochastic process.
\item A random walk on $G$ is a process generating sequences of vertices $v_{i_1},v_{i_2},v_{i_3},...$ according to $P(i_{k+1}|i_1,..., i_k) = A_{i_{k+1},i_k}$,
that is, the probability at $v_{i_k}$ of picking $v_{i_{k+1}}$ next is proportional to the weight of the edge $(v_{i_k},v_{i_{k+1}})$. The $t^{th}$ power of $A$ thus describes t-length walks, that is, $(A^t)_{ij}$ is the probability of a transition from vertex $v_j$ to vertex $v_i$ via a walk of length $t$.
\item We can define an initial probability distribution of the vertices, $p_0$, as well as a stopping probability distribution over vertices, $q$. Consider the quantity $p_t = A^tp_0$, where $p_t \in \mathbb{R}^{n \times 1}$. The $j^{th}$ component of $p_t$ denotes the probability of finishing a t-length walk at vertex $v_j$. The scalar $q^Tp_t$ denotes the overall probability of stopping after $t$ steps. The quantities $p_0$ and $q$ will show up while defining the random-walk kernel. These are typically used to embed prior knowledge while designing the kernel.
\item Let $\mathcal{X}$ be a set of labels, including a special label $\zeta$. If a graph $G$ is edge-labeled then it may be associated with a \emph{label matrix} $X \in \mathcal{X}^{n \times n}$, where $X_{ij}$ is the label of the edge $v_j \sim v_i$. $X_{ij}=\zeta$ if $(v_j, v_i) \notin E$.
\item Let $\mathcal{H}$ be the \emph{Reproducing Kernel Hilbert Space} (RKHS) induced by the positive-definite and symmetric (PDS) kernel $\kappa: \mathcal{X} \times \mathcal{X} \rightarrow \mathbb{R}, \kappa(x_1,x_2)=\langle \phi(x_1), \phi(x_2)\rangle_\mathcal{H}$. Let $\phi : \mathcal{X} \rightarrow \mathcal{H}$ denote the corresponding feature map, which we assume maps $\zeta$ to the zero element of $\mathcal{H}$ . We use $\Phi(X)$ to denote the feature matrix of $G$, where  $[\Phi(X)]_{ij} = \phi(X_{ij})$.

\end{enumerate}

We also define the following operations.
\begin{definition}
\label{defn:kronecker}
Given real matrices $A \in \mathbb{R}^{n \times m}$ and $B \in \mathbb{R}^{p \times q}$, the Kronecker product $A \otimes B \in \mathbb{R}^{np \times mq}$ and column-stacking operator $vec(A) \in \mathbb{R}^{nm}$ are defined as:

\begin{align*}
&A \otimes B = \begin{bmatrix}
    A_{11}B       & A_{12}B  \dots & A_{1m}B \\
    \hdotsfor{3} \\
    A_{n1}B       & A_{n2}B  \dots & A_{nm}B
\end{bmatrix};
vec(A) = \begin{bmatrix}
A_{*1}\\
\vdots\\
A_{*m}
\end{bmatrix},\\
&\text{where} A_{*j} \text{ denotes the } j^{th} \text{ column of } A. 
\end{align*}
\end{definition}

\begin{definition}
Let $A \in \mathcal{X}^{n \times m}$ and $B \in \mathcal{X}^{p \times q}$. The Kronecker product $\Phi(A) \otimes \Phi(B) \in \mathbb{R}^{np \times mq}$ is defined as:
\begin{equation}
[\Phi(A)\otimes \Phi(B)]_{(i-1)p+k,( j-1)q+l} =  \langle \phi(A_{ij}),\phi(B_{kl}) \rangle_\mathcal{H}
\end{equation}
\end{definition}

\subsubsection{Random Walk Graph Kernels}
We are now ready to define random walk graph kernels. As mentioned before, the general idea is to perform random walks on graphs $G$ and $G'$ and count paths that are similar in some sense. Instead of performing random walks on two different graphs and matching paths, we define a \emph{direct product}  graph $G_\times$, which makes the procedure convenient.

Given two graphs $G(V,E)$ and $G'(V',E')$, their direct product $G_\times$ is a graph with vertex set:
\begin{equation*}
V_\times = {(v_i,v'_r) : v_i \in V, v'_r \in V'}, 
\end{equation*}
and edge set:
\begin{equation*}
E_\times = {((v_i,v'_r), (v_j,v'_s)) : (v_i,v_j) \in E \text{ and } (v'_r,v'_s) \in E'}
\end{equation*}
Observe that two nodes in $G_\times$ are neighbors only if the corresponding vertices are neighbors in $G$ and $G'$. Fig \ref{fig:graph_product} shows an example of a product graph.

\begin{figure*}[!t]
\centering
\subfloat[Individual graphs $G_1$ and $G_2$]{\includegraphics[width=2.25 in]{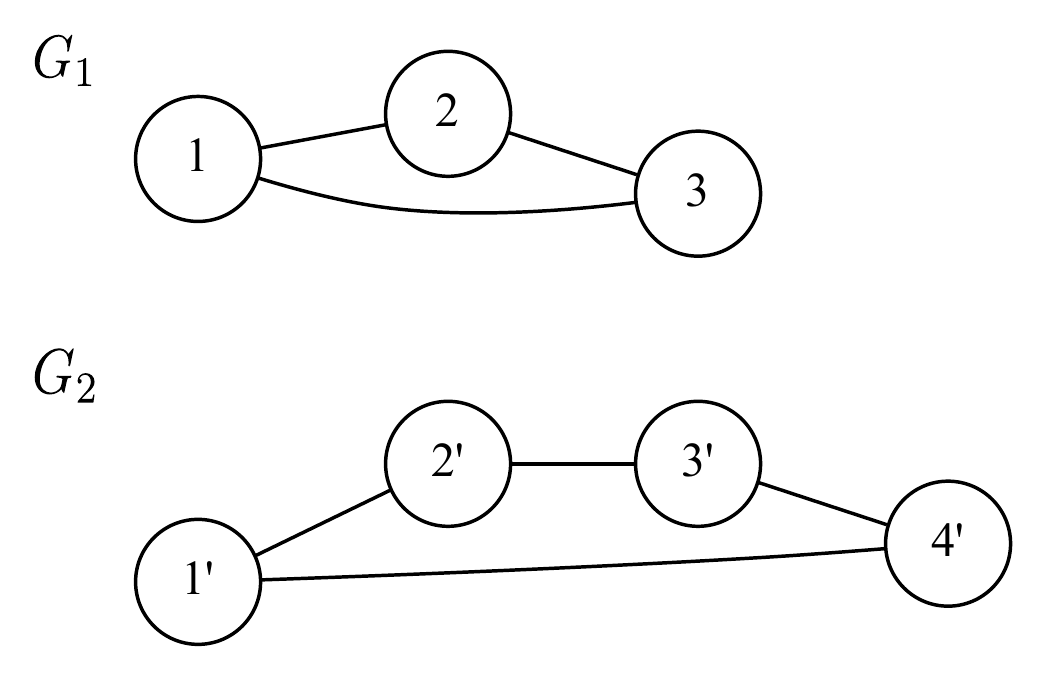}%
\label{fig_indv_graphs}}
\subfloat[Graph product, $G_1 \times G_2$]{\includegraphics[width=2.75in]{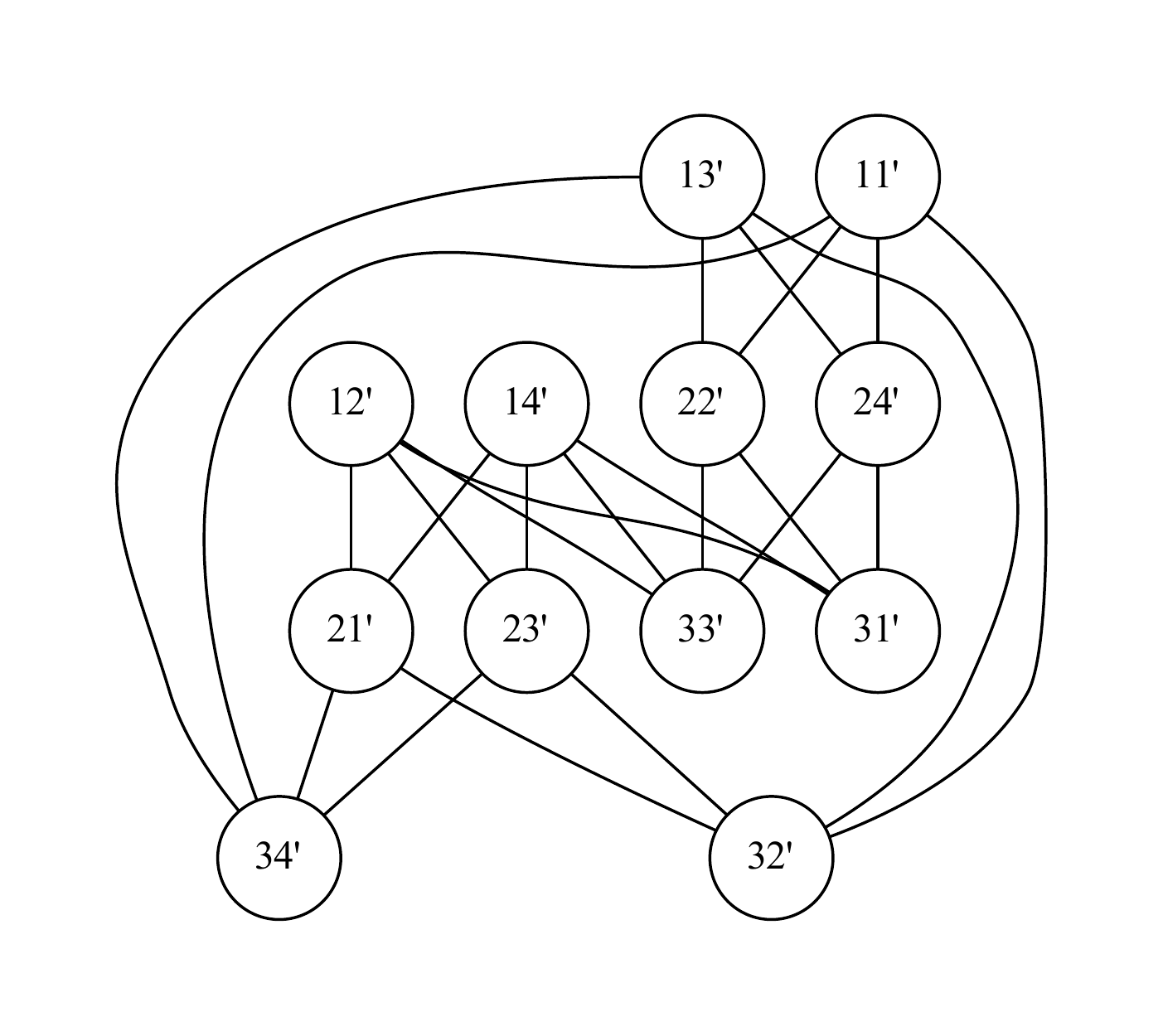}%
\label{fig_graph_product}}
\caption{The product of two graphs from (a) is shown in (b).} 
\label{fig:graph_product}
\end{figure*}

If $\widetilde{A}$ and $\widetilde{A'}$ are the adjacency matrices for $G$ and $G'$ respectively, the adjacency matrix for $G_\times$ is $\widetilde{A_\times} = \widetilde{A} \otimes \widetilde{A'}$. Similarly, $A_\times = A \otimes A'$. We also define $p_\times = p \otimes p'$ and $q_\times = q \otimes q'$, which respectively give us the starting and stopping probabilities on $G_\times$.

Performing a random walk on $G_\times$ is equivalent to performing a random walk on the individual graphs. 

We associate the following weight matrix $W_\times \in \mathbb{R}^{nn' \times nn'}$ with $G_\times$:
\begin{equation}
W_\times = \Phi(X) \otimes \Phi(X')
\end{equation}
Here $X, X'$ are the label matrices for $G, G'$ respectively.

What does $W_\times$ give us? Consider $A_\times$ first. We observed above that performing a walk on $G_\times$ is equivalent to performing simultaneous walks on $G$ and $G'$. Thus, $A^t_\times$ represents t-length random walks done simultaneously on the graphs; specifically, the $((i-1)n' + r, ( j-1)n' + s)^{th}$ entry of $A^t_\times$ represents the probability of simultaneous length $t$ random walks on $G$ (starting from vertex $v_j$ and ending in vertex $v_i$) and $G'$ (starting from vertex $v'_s$ and ending in vertex $v'_r$). In the same way, $W^t_\times$ tells us how similar, based on the kernel function $\kappa$, are random $t$ length walks performed simultaneously on $G$ and $G'$. Using $p_\times$ and $q_\times$ we can compute $q_\times^T W_\times^t p_\times$, which is the expected similarity between simultaneous length $t$ random walks on $G$ and $G'$.

A specific representation for labels which we we would use in the next section is as follows: without loss of generality, assume the labels come from a finite set $\{1,2,...,d\}$, and we can let $\mathcal{H}$ be $\mathbb{R}^d$ endowed with the usual inner product. For each edge $(v_j,v_i) \in E$ we set $\phi(X_{ij}) = \mathbf{e_l} /d_i$ if the edge $(v_j,v_i)$ is labeled $l$; all other entries of $\Phi(X)$ are 0. Here, $\mathbf{e_l}$ is a vector of length $d$ with the $l^{th}$ entry set to $1$, and the rest set to $0$. Thus, $W_\times$ has a non-zero entry iff an edge exists in the direct product graph \emph{and} the corresponding edges in $G$ and $G'$ have the same label. Let $^l\!A$ denote the normalized adjacency matrix of the graph filtered by the label $l$, that is, $^l\!A_{ij} = A_{ij}$ if $X_{ij} = l$, and zero otherwise. It is easy to derive the following relationship:
\begin{equation}
W_\times = \sum\limits_{l=1}^{d} {}^l\!A \otimes {}^l\!A' \label{eqn:weight_mat_filtered_adj}
\end{equation}

One obvious way to define a kernel that computes the similarity between $G$ and $G'$ is to sum up $q_\times^T W_\times^t p_\times$ for all values of $t$. But, this has the potential shortcoming that the sum might not converge and the kernel would not be completely defined. This is addressed by modifying the expression only slightly and defining the kernel function $k$ as:
\begin{equation}
\label{eqn:graph_kernel}
k(G,G') = \sum\limits_{t=0}^\infty \mu(t) q_\times^T W_\times^t p_\times
\end{equation}
$\mu(t)$ provides a lot of flexibility during kernel design by allowing one to emphasize or de-emphasize paths of certain lengths, encode prior information, etc. Significantly, it can be shown that if $\mu(t)$ are chosen in a way that $k(G,G')$ converges, it is also PDS (see Theorem \ref{theorem:graph_kernel_PDS}, Appendix).

\subsubsection{Relationship with Rational Kernels}
We now show that rational kernels are random walk graph kernels in a specific setting. We  revisit (see eqn (\ref{eqn:kernel_automata_transducer})) 
the rational kernel definition for weighted automata $S,U$ and transducer $T$:
\begin{align}
\label{eqn:graph_kernels_automata}
k(S,U) &= \Psi\bigg(\bigoplus\limits_{\alpha,\beta} [\![ S ]\!] (\alpha) \otimes [\![ T ]\!] (\alpha, \beta) \otimes [\![ U ]\!] (\beta) \bigg) \nonumber \\
&=\sum\limits_{\alpha,\beta} \Psi([\![ S \circ T \circ U ]\!](\alpha,\beta))
\end{align}
Here, we use $(\mathbb{R},+,\cdot,0,1)$ as our semiring and $\Psi$ is the identity morphism. In a small deviation from our previous convention, for a transducer with $n$ states, we let $p \in \mathbb{R}^n$ denote \emph{initial weights}\footnote{similar to final weights, these are to be multiplied at the beginning of a path}, and $q \in \mathbb{R}^n$ denote final weights. We set the transducer $T$ to accept sequence pairs where the input sequence is the same as the output sequence. We further assume the weight assigned to such a sequence pair by $T$ is a function, represented by $\mu$, of sequence length i.e. $T(\alpha,\alpha)=\mu(|\alpha|) \geq 0$.
Rewriting eqn (\ref{eqn:graph_kernels_automata}):
\begin{align}
k(S,U) &= \sum\limits_{\alpha}\mu(|\alpha|)[\![ S \circ U]\!](\alpha) \nonumber \\
&= \sum\limits_{k}\mu(k)\bigg(\sum\limits_{a_1,a_2,...,a_k}[\![ S  \circ U ]\!](a_1,a_2,...,a_k) \bigg) \label{eqn:before_tensor}
\end{align}
To simplify further steps we represent the transition function the transition matrix
of a weighted automaton as a three-dimensional tensor in $\mathbb{R}^{n \times |\Sigma|\times n}$. We will use the shorthand $H_a$ to denote the $n\times n$ slice $H_{*a*}$ of the transition tensor, which represents all valid transitions on the symbol $a$. The transition tensors of $S$ and $U$ are represented by $H$ and $H'$ respectively. Rewriting eqn (\ref{eqn:before_tensor}):
\begin{align}
k(S,U) &= \sum\limits_k \mu(k) \bigg(\sum\limits_{a_1,a_2,..., a_k \in \Sigma^k}(q \otimes q')^T(H_{a_1} \otimes H'_{a_1})...(H_{a_k} \otimes H'_{a_k})(p \otimes p') \nonumber \\
&= \sum\limits_k \mu(k) (q \otimes q')^T \bigg(\sum\limits_{a_1,a_2,..., a_k \in \Sigma^k}(H_{a_1} \otimes H'_{a_1})...(H_{a_k} \otimes H'_{a_k}) \bigg) (p \otimes p') \nonumber \\
&= \sum\limits_k \mu(k) (q \otimes q')^T \bigg(\sum\limits_{a \in \Sigma}(H_a \otimes H'_a)\bigg)...\bigg(\sum\limits_{a \in \Sigma}(H_a \otimes H'_a)\bigg) (p \otimes p') \nonumber \\
&= \sum\limits_k \mu(k) (q \otimes q')^T \bigg(\sum\limits_{a \in \Sigma}(H_a \otimes H'_a)\bigg)^k (p \otimes p') \label{eqn:graph_rational_last_but_one}
\end{align} 
We observe that $H_a$ (resp. $H'_a$) is analogous to the label-filtered adjacency matrix $^a\!A$ (resp. $^a\!A'$) of a graph $G$ (resp. $G'$). The weight matrix of the direct product of $G$ and $G'$ is then given by $H_\times = \sum\limits_{a}H_a \otimes H'_a$ (see eqn (\ref{eqn:weight_mat_filtered_adj})). We set $p_\times = p \otimes p'$ and $q_\times = q \otimes q'$ and rewrite eqn (\ref{eqn:graph_rational_last_but_one}) as:
\begin{equation}
k(G, G') = \sum\limits_k \mu(k)q_\times^T H^k_\times p_\times
\end{equation}
Hence, the above rational kernel is equivalent to a random walk graph kernel with $W_\times = H_\times$.

\section{Other Applications}
This section looks at some other contexts where rational kernels have been used.
\subsection{Classifying Formal Languages using Rational Kernels}
\label{sec:ratk_classify_pt}
An extremely interesting relationship between automata theory and rational kernels is studied in \cite{Kontorovich:2008:KML:1411847.1411928}. It explores the notion of \emph{learning} a \emph{formal language} from positive and negative examples. A positive (negative) example is defined as a string contained (not contained) in the language. The paper proves that strings may be mapped to a high-dimensional feature space in a way that a hyperplane separating out the members of a \emph{Piecewise Testable} (PT) language, from ones not in the language, can be identified. A rational kernel can be used to obtain this mapping; and used with a SVM, a membership test for a PT language may be learned.

The proof uses the following sequence of logical steps:
\begin{enumerate}
\item We start with a convenient characterization of PT languages.
\item This characterization leads to a test of membership for the language using a particular kind of Boolean function known as a \emph{decision list}.
\item We point out that decision lists indicate separability.
\item The decision list may be encoded as a kernel, which in this case, is shown to be a rational kernel. 
\end{enumerate}
We now elaborate on these steps.
\subsubsection{Piecewise Testable Languages}
PT languages were first studied in \cite{Simon:1975:PTE:646589.697341} by Imre Simon. A language $L$ over the alphabet $\Sigma$ is said to be \emph{n-piecewise-testable} for $n \in \mathbb{N}$, if when $u, v \in \Sigma^*$ have the same subsequences of length at most $n$, then either $u,v \in L$ or $u,v \notin L$. Thus, membership to $L$ can be tested by comparing \emph{pieces}/subsequences.

We will use the symbol $\sqsubseteq$ to indicate the `subsequence' relationship i.e. $x \sqsubseteq y$ where $x,y \in \Sigma^*$, indicates the following: 
\begin{equation}
x \in \{A_0a_1A_1a_2... A_{l-1}a_lA_l| a_1a_2...a_l = x, a_i \in \Sigma, A_j \in \Sigma^*\}
\end{equation}
$x \sqsubseteq_n y$ denotes $x \sqsubseteq y$ such that $|x| \leq n$.
We write $u \sim_n v$ to denote the \emph{relation} that $u$ and $v$ have the same subsequences of length at most $n$. We can rewrite the above criteria for PT languages as:
\begin{equation}
\label{eqn:basic_criteria_PT}
u \sim_n v \implies (u \in L \iff v \in L)
\end{equation}
Note that $\sim_n$ has the following properties:
\begin{enumerate}
\item It is \emph{right congruent} i.e. $u \sim_n v \implies \forall z \in \Sigma^*, uz \sim_n vz$. This is obvious since the additional subsequences $y$ (of length at most $n$) produced due to concatenation with $z$ are either of the form (1) $y \sqsubseteq z, |y| \leq n$ or (2) $xy, \text{ where } x \sqsubseteq u \text{ or } x \sqsubseteq v, |x| \leq n, \text{ and } y \sqsubseteq z$. Since $u \text{ and } v$ originally had the same subsequences of length at most $n$, they yield the same subsequences in the latter case.
\item It \emph{refines} L i.e. $\forall u,v \in \Sigma^*, u \sim_n v \implies (u \in L \iff v \in L)$. This holds by definition for the PT language $L$.
\item It is an \emph{equivalence} relation. It is easy to see that it is \emph{reflexive}, \emph{symmetric} and \emph{transitive}.
\end{enumerate}
We define the \emph{shuffle ideal} of $x$, denoted by $\Psi(x)$, as the set of all strings $u \in \Sigma^*$, such that $x \sqsubseteq u$:
\begin{equation}
\Psi(x)=\{u \in \Sigma^*|x \sqsubseteq u\}
\end{equation} 

We now prove an alternate characterization of PT languages that is convenient for us. The proof presented here is borrowed from \cite{Kozen:1997:AC:549365}, \cite{Klima:2011:PTL:2645929.2646344}.

\begin{lemma}
\label{lemma:PT_boolean}
A language is PT iff it is a finite Boolean combination of shuffle ideals.
\end{lemma}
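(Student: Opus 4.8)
The plan is to prove the two implications separately, both by bookkeeping with the equivalence relation $\sim_n$ introduced above. Write $\mathrm{sub}_n(u) = \{w \in \Sigma^* : w \sqsubseteq u,\ |w| \le n\}$ for the set of length-$\le n$ subsequences of $u$, so that by definition $u \sim_n v$ exactly when $\mathrm{sub}_n(u) = \mathrm{sub}_n(v)$, and note the monotonicity $\mathrm{sub}_k(u) = \mathrm{sub}_n(u) \cap \Sigma^{\le k}$, which gives $u \sim_n v \implies u \sim_k v$ for all $k \le n$; hence a $k$-PT language is automatically $n$-PT for every $n \ge k$.

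For the \textbf{if} direction, I would first observe that a single shuffle ideal $\Psi(x)$ with $|x| = k$ is $k$-piecewise testable: if $u \sim_k v$ then $x \sqsubseteq u \iff x \sqsubseteq v$, i.e. $u \in \Psi(x) \iff v \in \Psi(x)$. Given a finite Boolean combination of shuffle ideals $\Psi(x_1),\dots,\Psi(x_m)$, set $n = \max_i |x_i|$; by the monotonicity remark each $\Psi(x_i)$ is $n$-PT, so if $u \sim_n v$ the membership of $u$ and $v$ agrees in every $\Psi(x_i)$, hence in any Boolean combination of them. Thus the combination is $n$-PT, hence PT. This direction is a direct unwinding of the definitions.

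For the \textbf{only if} direction, let $L$ be $n$-PT. The key structural fact is that $\sim_n$ has finite index: $\mathrm{sub}_n(u)$ is a subset of the finite set $\Sigma^{\le n}$ (finite because $\Sigma$ is finite), so there are at most $2^{|\Sigma^{\le n}|}$ classes. Since $\sim_n$ refines $L$ (property of $n$-PT), $L$ is a finite union of $\sim_n$-classes. It then remains to write each class as a Boolean combination of shuffle ideals; for a string $u$,
\[
[u]_{\sim_n} = \Big(\bigcap_{w \in \mathrm{sub}_n(u)} \Psi(w)\Big) \cap \Big(\bigcap_{w \in \Sigma^{\le n} \setminus \mathrm{sub}_n(u)} \big(\Sigma^* \setminus \Psi(w)\big)\Big),
\]
since $v$ lies in the right-hand side iff $w \sqsubseteq v$ for every $w \in \mathrm{sub}_n(u)$ and $w \not\sqsubseteq v$ for every other $w \in \Sigma^{\le n}$, i.e. iff $\mathrm{sub}_n(v) = \mathrm{sub}_n(u)$. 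All index sets here are finite, so this is a finite Boolean combination of shuffle ideals; taking the union over the finitely many classes comprising $L$ exhibits $L$ as one as well.

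The main obstacle — really the only place genuine content enters — is the finite-index claim for $\sim_n$ together with the per-class decomposition above: everything rests on $\Sigma^{\le n}$ being finite, which simultaneously bounds the number of classes and keeps each Boolean combination finite. Both implications are otherwise routine manipulations of the definitions of $\sqsubseteq$, $\sim_n$, and $\Psi$.
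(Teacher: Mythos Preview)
Your proof is correct and, for the ``only if'' direction, takes essentially the same route as the paper: write $L$ as a finite union of $\sim_n$-classes and express each class by the same intersection-of-shuffle-ideals-and-their-complements formula. Your version is in fact a bit more direct --- the paper first introduces the Myhill--Nerode relation $\sim_L$, shows $\sim_n \subseteq \sim_L$, and then argues via $\sim_L$-classes being unions of $\sim_n$-classes, whereas you go straight from ``$\sim_n$ refines $L$'' plus ``$\sim_n$ has finite index'' to the decomposition; the detour through $\sim_L$ is not needed for this lemma.

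One difference worth noting: the paper explicitly declines to prove the ``if'' direction (``We do not prove the other direction in the interest of brevity''), whereas you supply it. Your argument there --- each $\Psi(x)$ is $|x|$-PT, monotonicity in $n$ lets you pass to $n = \max_i |x_i|$, and Boolean operations preserve $n$-PT because membership in each building block is determined by $\mathrm{sub}_n$ --- is the standard one and is correct.
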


\begin{proof}
Lets define the relation $\sim_L$ as:
\begin{equation}
x \sim_L y \overset{def}{\iff} \forall z \in \Sigma^* (xz \in L \iff yz \in L)
\end{equation}

Note that $\sim_L$ is an equivalence relation. 

Clearly $\sim_n \subseteq \sim_L$, since if $x \sim_n y$ but $x \nsim_L y$, then $\exists z, xz \in L \text{ but } yz \notin L$. But $\sim_n$ is right congruent, so we have $\forall z \in \Sigma^*, xz \sim_n yz$. Since $\sim_n$ also refines $L$, $\forall z \in \Sigma^*$, either $xz, yz \in L$ or $xz, yz \notin L$. This is a contradiction.

Since $\sim_n \subseteq \sim_L$ and both are equivalence relations,  the equivalence classes induced by $\sim_L$, denoted by $L/\sim_L$, are composed of equivalence classes from $L/\sim_n$. Thus we need to only prove that each equivalence class in $L/\sim_n$ can be expressed as a finite Boolean combination of shuffle ideals; the language $L$ itself is a union of (some) classes from $L/\sim_L$, each of which is a union of classes from $L/\sim_n$.

Consider $v_{\sim_n} = \{w \in \Sigma^*| v \sim_n w\}$. 
\begin{equation} \label{eqn:partition_intersect}
v_{\sim_n} = \bigcap_{u \sqsubseteq_n v} \Psi(u)  \cap \bigcap_{u \not\sqsubseteq_n v} \Psi(u)^c
\end{equation} 
Here, $c$ denotes \emph{complement}:  $\Psi(u)^c =  \Sigma^* - \Psi(u)$.The first component in eqn (\ref{eqn:partition_intersect}) finds $w \in \Sigma^*$, such that $w$ at least has the same subsequences, of length at most $n$, as $v$. But $w$ may also have subsequences of length $\leq n$ that are not part of $v$; and to satisfy $\sim_n$ the set of subsequences of length $\leq n$ must \emph{exactly} be the same for $w$ and $v$. The second component weeds out $w$ whose subsequences are not part of $v$, by identifying such $w$ and only retaining their complement. Hence, $L$ can be written as a finite union of finite intersections of shuffle ideals.

We do not prove the other direction in the interest of brevity. Please refer \cite{Klima:2011:PTL:2645929.2646344} for details. 
\end{proof}

\subsubsection{PT languages as decision lists}

We start with the notion of \emph{decisive strings}, since they have an important role to play in the construction of decision lists for PT languages. 

We call $u \in \Sigma^*$ a \emph{decisive} string for $L$, if $\Psi(u) \subseteq L$ (\emph{positive-decisive}) or $\Psi(u) \subseteq L^c$ (\emph{negative-decisive}).
We prove:
\begin{lemma}
\label{lemma:decisive}
A PT language $L \subseteq \Sigma^*$ has at least one decisive string $u \in \Sigma^*$.
\end{lemma}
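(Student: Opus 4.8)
The plan is to argue by contradiction, using the characterization of PT languages from Lemma \ref{lemma:PT_boolean} — or more precisely, using the structural consequence that a PT language is determined by subsequences of bounded length, so fix $n$ with the property that $u \sim_n v \implies (u \in L \iff v \in L)$. Suppose, for contradiction, that $L$ has no decisive string: for \emph{every} $u \in \Sigma^*$, the shuffle ideal $\Psi(u)$ meets both $L$ and $L^c$. I would then build an infinite strictly increasing (under $\sqsubseteq$) sequence of strings $u_0 \sqsubseteq u_1 \sqsubseteq u_2 \sqsubseteq \cdots$ together with a matching behavior that forces a contradiction with the finiteness coming from $\sim_n$.

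Concretely, I would start with $u_0 = \epsilon$; since $\Psi(\epsilon) = \Sigma^*$ is not contained in $L$ or in $L^c$, $\epsilon$ is not decisive. Inductively, given a non-decisive $u_k$, the key step is to find a strict extension $u_{k+1}$ with $u_k \sqsubseteq u_{k+1}$, $|u_{k+1}| > |u_k|$, that is still non-decisive \emph{and} such that some membership value has been "pinned down" along the way. The mechanism: since $\Psi(u_k)$ contains both a string in $L$ and a string in $L^c$, and every string in $\Psi(u_k)$ contains $u_k$ as a subsequence, I can pick a witness $w \in \Psi(u_k)$ on one side, say $w \in L$; write $w$ as an interleaving exhibiting $u_k \sqsubseteq w$, and let $u_{k+1}$ be $u_k$ with one extra symbol inserted (an extra symbol drawn from the "gap" material of $w$), chosen so that $u_{k+1}$ is also non-decisive — this last choice is possible precisely because if \emph{every} one-symbol extension of $u_k$ were decisive, one can combine those finitely many decisive extensions (there are at most $|\Sigma| \cdot (|u_k|+1)$ of them) to conclude $u_k$ itself is decisive, a contradiction. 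Iterating, the lengths $|u_k|$ are unbounded, so eventually $|u_k| > n$, yet along the chain the strings $u_k$ realize, via their witnesses, both membership and non-membership in $L$ among strings sharing all subsequences of length $\le n$ with a common string — contradicting that $\sim_n$ refines $L$.

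The main obstacle I anticipate is making the "one extra symbol, still non-decisive" step both correct and clean: one must be careful that inserting a symbol into $u_k$ does not accidentally yield a decisive string on \emph{either} side, and the pigeonhole-style argument that not all extensions can be decisive needs the observation that $\Psi(u) = \bigcup_{a \in \Sigma, \text{positions } p} \Psi(u \text{ with } a \text{ inserted at } p)$ up to the strings that contain $u$ but no proper extension — equivalently, every string properly containing $u$ lies in $\Psi(u')$ for some one-symbol extension $u'$ of $u$, and $u$ itself and its "tight" occurrences must be handled separately. Once that combinatorial bookkeeping is pinned down, the contradiction with the bounded-length subsequence characterization is immediate, since an infinite $\sqsubseteq$-chain eventually exceeds any fixed $n$ while the witnesses keep both membership outcomes alive. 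An alternative, possibly slicker route is to use König's lemma / well-quasi-ordering (Higman's lemma) on $\Sigma^*$ under $\sqsubseteq$: the set of non-decisive strings, if nonempty and closed under the extension process above, would contain an infinite $\sqsubseteq$-antichain-free chain, again contradicting bounded-length testability; I would mention this as a remark but carry out the elementary pigeonhole version in the actual proof.
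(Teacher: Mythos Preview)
Your approach is quite different from the paper's. The paper argues constructively by structural induction on the Boolean formula: a shuffle ideal $\Psi(u)$ has $u$ itself as a positive-decisive string, and the proof then checks that decisiveness survives complement, intersection, and union (in each case exhibiting a concrete decisive string, typically of the form $u_1u_2$ or an element of $\Psi(u_1)\cap\Psi(u_2)$). There is no contradiction argument and no chain construction.

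Your contradiction strategy is a reasonable alternative in spirit, but as written it has two real problems. First, under your hypothesis ``$L$ has no decisive string'' every string is non-decisive, so the whole paragraph about one-symbol extensions and the pigeonhole step (``if every one-symbol extension of $u_k$ were decisive \ldots'') is vacuous --- you never need it, and the argument you sketch there (combining decisive extensions to force $u_k$ decisive) would in any case fail because $u_k$ itself lies in $\Psi(u_k)$ but in no $\Psi(u')$ for a proper extension $u'$. Second, and more seriously, the contradiction you claim at the end does not follow: reaching $|u_k|>n$ is not enough. Two witnesses in $\Psi(u_k)$, one in $L$ and one in $L^c$, need not be $\sim_n$-equivalent just because $u_k$ is long; your chain could be $a,aa,aaa,\ldots$ and never force any $\sim_n$-collision.

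What actually makes your idea work is not length but \emph{saturation}: take any single string $u$ that contains every word of length $\le n$ as a subsequence (for instance $u=(a_1a_2\cdots a_{|\Sigma|})^n$). Then for every $v\in\Psi(u)$ we have $u\sqsubseteq v$, hence $v$ has every subsequence of length $\le n$, hence $u\sim_n v$, hence $v\in L\iff u\in L$. Thus $\Psi(u)\subseteq L$ or $\Psi(u)\subseteq L^c$, so $u$ is decisive --- a direct one-line proof, no chain and no contradiction needed. This is the clean version of what you were reaching for; the paper's Boolean-closure proof gives instead a decisive string tied to the syntactic structure of the formula defining $L$.
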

\begin{proof}
Any shuffle ideal $\Psi(u)$ trivially has $u$ as a positive decisive string. Since a PT language $L$ is a finite Boolean combination of shuffle ideals, all we need to prove is the existence of decisive strings is guaranteed under Boolean operations i.e. negation, intersection, union.
\begin{enumerate}
\item Negation: Assume $u$ is positive-decisive for $L$, $\Psi(u) \subseteq L$. Hence, we have a negative-decisive string $u$ for the negated language $L^c$.
\item Intersection: Let's assume $u_1$ is decisive for $L_1$ and $u_2$ is decisive for $L_2$. We want to prove that $L_1 \cap L_2$ has at least one decisive string. Consider the following cases:
\begin{enumerate} 
\item $u_1, u_2$ are positive-decisive. Then $\Psi(u_1) \cap \Psi(u_2) \subseteq L_1 \cap L_2$. Since $u_1u_2 \in \Psi(u_1) \cap \Psi(u_2)$, we know $\Psi(u_1) \cap \Psi(u_2) \neq \emptyset$. For any $u \in \Psi(u_1) \cap \Psi(u_2)$, we have $\Psi(u) \subseteq \Psi(u_1) \cap \Psi(u_2)$. By transitivity, we have $\Psi(u) \subseteq  L_1 \cap L_2$. Thus $u$ is positive-decisive for $L_1 \cap L_2$.
\item $u_1, u_2$ are negative-decisive. For any $u \in \Psi(u_1) \cup \Psi(u_2), \Psi(u) \subseteq (L_1 \cap L_2)^c$. This is because $\forall x \in  \Psi(u), u_1 \sqsubseteq x \text{ or } u_2 \sqsubseteq x$, and the intersection $L_1 \cap L_2$ cannot have any string with $u_1 \text{ or } u_2$ as a subsequence. Hence, $u$ is negative-decisive for $L_1 \cap L_2$.
\item $u_1$ is positive-decisive, $u_2$ is negative-decisive. Since $L_2$ does not contain any string that has  subsequence $u_2$, and for any $u \in \Psi(u_2)$ we have $u_2 \sqsubseteq u$, and by extension $\forall x \in \Psi(u), u_2 \sqsubseteq x$, it follows $u$ is negative-decisive for $L_1 \cap L_2$.
\end{enumerate}
\item Union: Again, let's assume $u_1, u_2$ are decisive for $L_1, L_2$ respectively. We want to prove that there is at least one decisive string for $L_1 \cup L_2$. Consider the following cases:
\begin{enumerate} 
\item $u_1, u_2$ are positive-decisive. For any string $u \in \Psi(u_1) \cup \Psi(u_2)$, we have $\Psi(u) \subseteq L_1 \cup L_2$, since $u$, and by extension any $x\in \Psi(u)$, has $u_1$ or $u_2$ as a subsequence and such strings are contained in $L_1$ or  $L_2$. Hence $u$ is positive-decisive for $L_1 \cup L_2$.
\item $u_1, u_2$ are negative-decisive. Note that since $u_1u_2 \in \Psi(u_1) \cap \Psi(u_2)$, we have $\Psi(u_1) \cap \Psi(u_2) \neq \emptyset$. Let $u \in \Psi(u_1) \cap \Psi(u_2)$. Clearly, $u$ is negative-decisive for $L_1 \cup L_2$, since $u_1 \in \Psi(u),u_2 \in \Psi(u)$, and sequences containing $u_1\text{ or } u_2$ are not part of $L_1 \cup L_2$.
\item $u_1$ is positive-decisive, $u_2$ is negative-decisive. For any $u \in \Psi(u_1)$, we have $u_1 \sqsubseteq u$, and therefore, $\Psi(u) \subseteq \Psi(u_1)$. Since $\Psi(u_1) \subseteq L_1$, we have $\Psi(u) \subseteq L_1 \cup L_2$. 
\end{enumerate}
\end{enumerate}  
\end{proof}
We refer to $u$ as \emph{minimally decisive} for $L$, if $u$ is decisive for $L$ and there is no $v \sqsubseteq u$, such that $v$ is decisive for $L$. We also extend the definition of decisiveness to say $u$ is decisive for $L$ \emph{modulo $V$}, where $V \subseteq \Sigma^*$, if $V \cap \Psi(u) \subseteq L$ or $V \cap \Psi(u) \subseteq L^c$. In the former case, we refer to $u$ as positive-decisive modulo $V$, and in the latter, as negative-decisive modulo $V$. When $V = \Sigma^*$, these definitions coincide with out normal definitions of decisiveness.

We state the following theorem which would be used in later results. However, we do not provide a proof in the interest of brevity - see \cite{HAINES196994} for details.
\begin{theorem}
\label{theorem:finite}
Let $\Sigma$ be a finite alphabet and $L\subseteq \Sigma^*$ a language containing no two distinct strings $x$ and $y$ such that $x \sqsubseteq y$. Then $L$ is finite.
\end{theorem}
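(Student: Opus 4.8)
The plan is to prove the contrapositive, in the form: when $\Sigma$ is finite, the subsequence relation $\sqsubseteq$ admits no infinite antichain in $\Sigma^*$. It is convenient to phrase this via \emph{bad sequences}: call a sequence $(x_i)_{i\ge 1}$ of strings \emph{bad} if $x_i \not\sqsubseteq x_j$ for all $i<j$. If $L$ were infinite, then since $L$ contains no two distinct comparable strings, any enumeration of its (necessarily distinct) elements is a bad sequence. So it suffices to show no bad sequence exists — this is exactly Higman's lemma for the subsequence order over a finite alphabet.

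To rule out bad sequences I will run the Nash--Williams \emph{minimal bad sequence} argument. Assume a bad sequence exists and build one $(x_i)$ that is minimal as follows: let $x_1$ have minimal length among all first terms of bad sequences; having chosen $x_1,\dots,x_{k-1}$, let $x_k$ have minimal length among all strings $y$ such that $x_1,\dots,x_{k-1},y$ can be extended to a bad sequence. (This uses dependent choice; the relevant minima exist because lengths are natural numbers.) Every $x_i$ must be nonempty, since $x_i=\epsilon$ would force $\epsilon\sqsubseteq x_j$ for all $j>i$, contradicting badness. Write $x_i = a_i y_i$ with $a_i\in\Sigma$ the first letter and $y_i\in\Sigma^*$. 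Since $\Sigma$ is finite, some letter $a$ equals $a_i$ for infinitely many indices; fix $i_1 < i_2 < \cdots$ with $a_{i_1}=a_{i_2}=\cdots = a$.

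Now examine the sequence $x_1,\dots,x_{i_1-1},\,y_{i_1},y_{i_2},y_{i_3},\dots$. Its first $i_1-1$ terms coincide with the minimal bad sequence, and its $i_1$-th term $y_{i_1}$ has strictly smaller length than $x_{i_1}$, so by the minimality of $(x_i)$ this new sequence cannot be bad: it has a comparable pair. I will then check that every case yields a comparable pair inside $(x_i)$ itself, the desired contradiction. A comparable pair among $x_1,\dots,x_{i_1-1}$ is impossible (those terms lie in a bad sequence); a pair $x_p\sqsubseteq y_{i_q}$ with $p<i_1$ gives $x_p\sqsubseteq y_{i_q}\sqsubseteq a y_{i_q}=x_{i_q}$; a pair $y_{i_p}\sqsubseteq y_{i_q}$ with $p<q$ gives $x_{i_p}=a y_{i_p}\sqsubseteq a y_{i_q}=x_{i_q}$. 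Each contradicts badness of $(x_i)$, so no bad sequence exists and $L$ is finite.

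The main — essentially the only — subtle point is setting up the minimal bad sequence and then performing the "lift back" case analysis cleanly: one must be careful that the reduced sequence is genuinely smaller at position $i_1$ with respect to the ordering used for minimality, and that each type of comparable pair in the reduced sequence transfers to a comparable pair among the $x_i$ while preserving the index order. The remaining ingredients (nonemptiness of terms, pigeonhole on the finite alphabet, monotonicity of $\sqsubseteq$ under prepending a letter) are routine. If one prefers to avoid the minimal-bad-sequence machinery, an alternative is induction on $\lvert\Sigma\rvert$ following Higman's original argument, but the bookkeeping is heavier, so I would use the Nash--Williams route.
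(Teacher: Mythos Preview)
Your argument is correct: this is the standard Nash--Williams minimal bad sequence proof of Higman's lemma, and the case analysis lifting a comparable pair in the reduced sequence back to the original $(x_i)$ is carried out cleanly.

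The paper, however, does not prove this theorem at all; it merely states it and refers the reader to \cite{HAINES196994}. So there is no in-paper proof to compare against. Your proof is self-contained and would fill that gap. For what it is worth, Haines's original argument and Higman's are closer to an induction on $\lvert\Sigma\rvert$, whereas the Nash--Williams route you take is shorter and avoids that bookkeeping, at the cost of using dependent choice to build the minimal bad sequence.
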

We now state and prove the following results:
\begin{lemma}
\label{lemma:finite}
Let $L \subseteq \Sigma^*$ be a PT language and let $D \subseteq \Sigma^*$ be the set of all minimally decisive strings for $L$, then $D$ is a finite set.
\end{lemma}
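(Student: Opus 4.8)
The plan is to show that $D$ forms an \emph{antichain} under the subsequence order $\sqsubseteq$ and then invoke Theorem~\ref{theorem:finite} directly. Recall that $\sqsubseteq$ is a partial order on $\Sigma^*$: it is reflexive, transitive, and antisymmetric (if $x \sqsubseteq y$ and $y \sqsubseteq x$ then $|x| = |y|$, which forces $x = y$). Consequently the phrase ``$D$ contains no two distinct strings $x$ and $y$ with $x \sqsubseteq y$'' is precisely the hypothesis of Theorem~\ref{theorem:finite}, so once the antichain property is established, finiteness of $D$ follows immediately.

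First I would prove the antichain property. Suppose $u, v \in D$ are distinct and, towards a contradiction, $u \sqsubseteq v$. Since $u \neq v$, this is a proper subsequence relation. But $u \in D$ means $u$ is decisive for $L$, so $v$ has a proper subsequence that is decisive for $L$; this contradicts the defining property of minimal decisiveness of $v$, namely that there is no $w \sqsubseteq v$ with $w \neq v$ that is decisive for $L$. Hence no such pair exists, and $D$ is an antichain under $\sqsubseteq$.

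Then I would simply apply Theorem~\ref{theorem:finite} to the language $D$: since $D$ contains no two distinct strings $x$ and $y$ with $x \sqsubseteq y$, $D$ is finite. (As a side remark, Lemma~\ref{lemma:decisive} together with the fact that any string has only finitely many subsequences shows $D$ is moreover nonempty — every decisive string dominates a $\sqsubseteq$-minimal decisive one — but this observation is not needed for the finiteness claim.)

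There is essentially no hard step here; the only thing to be careful about is the edge case in the definition of ``minimally decisive,'' namely that it excludes \emph{proper} subsequences from being decisive, which is exactly what upgrades $D$ from an arbitrary set of strings to a $\sqsubseteq$-antichain. The substantive work was already done upstream: in Lemma~\ref{lemma:PT_boolean} (the shuffle-ideal characterization) and Lemma~\ref{lemma:decisive} (existence of decisive strings), and in Theorem~\ref{theorem:finite} itself, which supplies the combinatorial engine (a well-quasi-order / Haines-type argument). This lemma is the short bridge that packages those facts.
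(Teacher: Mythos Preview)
Your proposal is correct and follows essentially the same approach as the paper: the paper's proof simply observes that $D$, being a set of minimally decisive strings, is subsequence-free and then invokes Theorem~\ref{theorem:finite}. You have merely spelled out the antichain argument more carefully (and flagged the proper-subsequence edge case in the definition of minimal decisiveness), but the route is the same.
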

\begin{proof}
Since $D$ contains minimally decisive strings, it is \emph{subsequence free} i.e. there are no strings $u,v \in D$, such that $u \sqsubseteq v$. Hence, by Theorem \ref{theorem:finite}, $D$ is a finite set.
\end{proof}
\begin{lemma}
\label{lemma:decisive_modulo}
If $V,L \subseteq \Sigma^*$ be PT languages, and $V \neq \emptyset$, then there exists at least one $u \in \Sigma^*$, that is decisive modulo $V$ for $L$. 
\end{lemma}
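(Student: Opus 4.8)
The plan is to reduce this to Lemma~\ref{lemma:decisive}, the existence of an ordinary decisive string, by applying that lemma not to $L$ itself but to a Boolean combination of $L$ and $V$. The guiding observation is that ``decisive modulo $V$ for $L$'' is essentially just ``ordinary decisive for $L \cap V$'', provided one keeps track of which direction of decisiveness one ends up with.

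Concretely, I would set $M = L \cap V$. Since $L$ and $V$ are PT, and PT languages are closed under intersection (they are finite Boolean combinations of shuffle ideals, by Lemma~\ref{lemma:PT_boolean}), $M$ is again PT. Lemma~\ref{lemma:decisive} then supplies a string $u \in \Sigma^*$ that is decisive for $M$: either $\Psi(u) \subseteq M$ or $\Psi(u) \subseteq M^c$. A short case analysis finishes the argument. If $\Psi(u) \subseteq M = L \cap V$, then in particular $\Psi(u) \subseteq V$, hence $V \cap \Psi(u) = \Psi(u) \subseteq L$, so $u$ is positive-decisive modulo $V$ for $L$. If instead $\Psi(u) \subseteq M^c = L^c \cup V^c$, then $V \cap \Psi(u) \subseteq V \cap (L^c \cup V^c) = V \cap L^c \subseteq L^c$, so $u$ is negative-decisive modulo $V$ for $L$. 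Either way a decisive-modulo-$V$ string exists.

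There is essentially no obstacle: all the content has already been packed into Lemma~\ref{lemma:decisive}, and in fact the hypothesis $V \neq \emptyset$ is not needed for mere existence (it matters only later, to guarantee that iterating this lemma makes genuine progress when assembling a decision list for $L$). The one point to state carefully is the bookkeeping in the second case: an ordinary decisive string for $L \cap V$ that turns out to be \emph{negative}-decisive yields a \emph{negative}-decisive-modulo-$V$ string for $L$, not a positive one, and it is precisely the intersection with $V$ that absorbs the spurious component $V^c$ of $L^c \cup V^c$. An equally short alternative is to run the same argument with $M = L \cup V^c$ in place of $L \cap V$, which leads to the identical conclusion.
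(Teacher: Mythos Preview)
Your proof is correct and cleaner than the paper's. Both arguments ultimately rest on Lemma~\ref{lemma:decisive}, but they invoke it differently. The paper re-runs the structural induction from Lemma~\ref{lemma:decisive}: it first establishes the base case that every shuffle ideal $\Psi(u)$ admits a decisive string modulo $V$ (by applying Lemma~\ref{lemma:decisive} to the PT language $V \cap \Psi(u)$ and massaging the two outcomes with set-algebra identities), and then declares that the inductive steps for negation, intersection, and union carry over verbatim with $\Psi(x)$ replaced everywhere by $V \cap \Psi(x)$. You instead treat Lemma~\ref{lemma:decisive} as a black box applied once to $M = L \cap V$, and your two-line case split $\Psi(u) \subseteq M$ versus $\Psi(u) \subseteq M^c$ does all the work. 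This buys you a shorter argument with no need to revisit the Boolean induction, and your observation that the hypothesis $V \neq \emptyset$ is inessential for mere existence is also correct (it becomes relevant only for the later termination argument in the decision-list construction).
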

\begin{proof}
Lemma \ref{lemma:decisive} maybe modified to prove this case.  Since $V$ is a PT language it can be expressed as a finite boolean combination of shuffle ideals. Thus, $V \cap \Psi(s)$ is a  PT language by Lemma \ref{lemma:PT_boolean}, and has at least one decisive string $w$ by Lemma \ref{lemma:decisive}. If $w$ is positive decisive, we have $\Psi(w) \subseteq V \cap \Psi(u)$. Performing an intersection with $V$ on both sides, we get  $V \cap \Psi(w) \subseteq V \cap \Psi(u)$ (an additional $V$ in the RHS is redundant). This implies $V \cap \Psi(w) \subseteq \Psi(u)$, and hence we have a positive decisive string modulo $V$ for $\Psi(u)$. Alternatively, assume $w$ is negative decisive and $\Psi(w) \subseteq (V \cap \Psi(u))^c$. We again perform an intersection with $V$ on both sides, and obtain $V \cap \Psi(w) \subseteq V \cap (V \cap \Psi(u))^c$. This may be simplified as follows:
\begin{align*}
V \cap \Psi(w) &\subseteq V \cap (V \cap \Psi(u))^c \\
&\subseteq V \cap (V^c \cup \Psi(u)^c) \\
&\subseteq (V \cap V^c) \cup (V \cap \Psi(u)^c) \\
&\subseteq \emptyset \cup (V \cap \Psi(u)^c) \\
&\subseteq V \cap \Psi(u)^c\\
\implies V \cap \Psi(w) &\subseteq \Psi(u)^c
\end{align*}
Thus, we have a negative decisive string modulo $V$ for $\Psi(u)$. 

In either case, when $V$ is a PT language, there exists a decisive string modulo $V$ for $\Psi(u)$. This becomes the analogous starting point for the proof of Lemma \ref{lemma:decisive} - 
in the remaining proof,  we replace every instance of $\Psi(x)$ with $(V \cap \Psi(x))$.
%
\end{proof}
\begin{lemma}
Let $L, V \subseteq \Sigma^*$ be two PT languages and let $D \subseteq \Sigma^*$ be the set of all minimally decisive strings for $L$ modulo $V$, then $D$ is a non-empty finite set.
\end{lemma}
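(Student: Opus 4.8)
The plan is to assemble this statement from the pieces already in place: Lemma \ref{lemma:decisive_modulo} for existence, the ``descend to a minimal element'' argument used implicitly before Lemma \ref{lemma:finite}, and Theorem \ref{theorem:finite} for finiteness. In other words, I would run the proof of Lemma \ref{lemma:finite} again, but relativized to the ``modulo $V$'' setting.

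First I would establish non-emptiness. Since $L$ and $V$ are PT languages and $V \neq \emptyset$, Lemma \ref{lemma:decisive_modulo} produces at least one string $u_0 \in \Sigma^*$ that is decisive for $L$ modulo $V$. Let $\mathcal{D} \subseteq \Sigma^*$ be the (now non-empty) set of all strings decisive for $L$ modulo $V$. The proper-subsequence relation on $\Sigma^*$ is well-founded, because $u \sqsubseteq v$ with $u \neq v$ forces $|u| < |v|$, so there is no infinite strictly descending chain. Hence, starting from $u_0$ and repeatedly replacing the current string by a proper subsequence of it that still lies in $\mathcal{D}$ — which exists precisely when the current string is not yet minimal — the process terminates at some $u^\ast \in \mathcal{D}$ admitting no proper subsequence in $\mathcal{D}$. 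By definition $u^\ast$ is minimally decisive for $L$ modulo $V$, so $D \neq \emptyset$.

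For finiteness I would show $D$ is subsequence-free and invoke Theorem \ref{theorem:finite}: if $u, v \in D$ with $u \sqsubseteq v$ and $u \neq v$, then $u$ is a proper subsequence of $v$ that is decisive for $L$ modulo $V$, contradicting the minimality of $v$. Thus $D$ contains no two distinct strings $x, y$ with $x \sqsubseteq y$, and Theorem \ref{theorem:finite} immediately gives that $D$ is finite. I do not anticipate a real obstacle here; the only point deserving care is to note that we never need a proper subsequence of a decisive string to itself be decisive modulo $V$ — we only use that \emph{if} a string is not minimally decisive \emph{then} some proper subsequence of it is decisive, which is exactly what the definition of minimality supplies and exactly what the descent requires. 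Everything else is a verbatim reuse of the reasoning behind Lemma \ref{lemma:finite}.
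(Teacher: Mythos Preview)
Your proposal is correct and follows essentially the same route as the paper, which dispatches the lemma in one line by pointing to Lemma~\ref{lemma:decisive_modulo} for existence and to the argument of Lemma~\ref{lemma:finite} (subsequence-freeness plus Theorem~\ref{theorem:finite}) for finiteness. You are more explicit than the paper in spelling out the well-founded descent from an arbitrary decisive string to a minimally decisive one, and in noting the implicit hypothesis $V \neq \emptyset$ needed to invoke Lemma~\ref{lemma:decisive_modulo}; both are details the paper glosses over but that your write-up handles cleanly.
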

\begin{proof}
This is an application of the extension discussed in Lemma \ref{lemma:decisive_modulo}  to the arguments in Lemma \ref{lemma:finite}.
\end{proof}

The above results enable us to prove that a PT language $L$ may be represented by a specific kind of Boolean function known as a \emph{decision list}. In general, a decision list is a function $f: \mathbb{R}^n \rightarrow \{0,1\}$. $f$ is evaluated based on a further set of functions $f_i: \mathbb{R}^n \rightarrow \{0,1\}$ and scalars $c_i \in \{0,1\}$, $1\leq i \leq r$ in the following manner: \\

\begin{algorithm}[H]
\label{algo:generic_dl}
 \eIf{$f_1(y)==1$}{
  then set $f(y) = c_1$\;
}{
	\eIf{$f_2(y)==1$}{
		then set $f(y) = c_2$\;
	}{
		\ldots \\
		\ldots \\
		\indent \eIf{$f_r(y)==1$} {
			then set $f(y) = c_r$\;
		}{
		set $f(y) = 0$\;
		}
	}
}
\caption{Evaluating the decision list $f(y)$}
\end{algorithm}
The decision list $f$ is succinctly expressed as $f = (f_1, c_1), (f_2, c_2), (f_3, c_3), ..., (f_r, c_r)$. For a detailed discussion around decision lists please refer  \cite{Anthony96thresholdfunctions}.
We now prove the following theorem about PT languages. This is a significant contribution of \cite{Kontorovich:2008:KML:1411847.1411928}.
\begin{theorem}
If $L\subseteq \Sigma^*$ is PT then $L$ is equivalent to some finite decision list $\Delta$ over shuffle ideals.
\end{theorem}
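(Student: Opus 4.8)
The plan is to build the decision list $\Delta$ recursively, peeling off one ``layer'' of decisive strings at a time and recursing on the residual language. The starting point is the machinery already assembled: Lemma~\ref{lemma:decisive_modulo} guarantees that, for any nonempty PT language $V$, there is a string decisive for $L$ modulo $V$, and the lemma following it guarantees that the set $D$ of \emph{minimally} decisive strings modulo $V$ is a nonempty finite set. So the construction is: set $V_0 = \Sigma^*$; at stage $k$, let $D_k$ be the (finite, nonempty) set of minimally decisive strings for $L$ modulo $V_k$; for each $u \in D_k$ emit the decision-list item $(\,\mathbf{1}_{\Psi(u)},\, c_u\,)$, where $c_u = 1$ if $u$ is positive-decisive modulo $V_k$ and $c_u = 0$ if negative-decisive; then set $V_{k+1} = V_k \setminus \bigcup_{u \in D_k} \Psi(u)$ and recurse. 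The test function $f_i$ attached to shuffle ideal $\Psi(u)$ is the indicator ``$u \sqsubseteq y$?''. Finally append a default value (say $0$, consistent with Algorithm~\ref{algo:generic_dl}). One must also observe that each $V_{k+1}$ is again a PT language, since $\Psi(u)$ is a shuffle ideal, $V_k$ is a finite Boolean combination of shuffle ideals by Lemma~\ref{lemma:PT_boolean}, and PT languages are visibly closed under the Boolean operations used here.

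The correctness argument then has two halves. \textbf{Soundness of each emitted item:} if $u$ is decisive for $L$ modulo $V_k$, then every $y \in V_k \cap \Psi(u)$ lies in $L$ (resp.\ $L^c$) exactly according to $c_u$; and by the time the evaluation of $\Delta$ reaches the block of items coming from stage $k$, the input $y$ is known to have failed all earlier tests, which is precisely the statement $y \notin \bigcup_{j<k}\bigcup_{u\in D_j}\Psi(u)$, i.e.\ $y \in V_k$. So if $y$ additionally satisfies $u \sqsubseteq y$ for some $u \in D_k$, the value $c_u$ returned is the correct membership bit. (A small point to check: if $y$ satisfies $u \sqsubseteq y$ for two different $u, u' \in D_k$, both must give the same verdict; this holds because $u, u'$ are both decisive modulo $V_k$ and $y \in V_k$ is a common witness in $\Psi(u) \cap \Psi(u')$.) \textbf{Termination:} I need the recursion to stop, i.e.\ $V_k = \emptyset$ for some finite $k$. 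Here is where I'd lean on Theorem~\ref{theorem:finite} / Lemma~\ref{lemma:finite}-style reasoning: the sets $D_k$ consist of minimally decisive strings, and I claim the minimally decisive strings modulo $V_{k+1}$ must be strictly ``longer'' in the subsequence order than those removed — more precisely, no string $\sqsubseteq$ some $u \in D_k$ survives as decisive modulo $V_{k+1}$, so the union $D_0 \cup D_1 \cup \cdots$ stays subsequence-free, hence is finite by Theorem~\ref{theorem:finite}, forcing the process to halt after finitely many stages, at which point $V_k$ contains no decisive string modulo itself and therefore (by Lemma~\ref{lemma:decisive_modulo}, contrapositive) must be empty.

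The main obstacle is exactly this termination / finiteness argument: making rigorous the claim that peeling off $\bigcup_{u \in D_k}\Psi(u)$ genuinely makes progress and that the accumulated sets of decisive strings remain subsequence-free, so that Theorem~\ref{theorem:finite} applies to the \emph{whole} collection $\bigcup_k D_k$ rather than to each $D_k$ separately. Once $V_k = \emptyset$, every string has been classified by some item, so the default branch is only a formality. I would present the construction as an explicit recursion, state the invariant ``$V_k$ is PT and $\bigcup_{j\le k} D_j$ is subsequence-free'' and maintain it inductively, then invoke Theorem~\ref{theorem:finite} once at the end for the global finiteness; the length of $\Delta$ is then $\sum_k |D_k|$, finite. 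The other direction (a language equivalent to a finite decision list over shuffle ideals is PT) is not asked for here, and anyway follows from Lemma~\ref{lemma:PT_boolean} since such a decision list defines a finite Boolean combination of shuffle ideals.
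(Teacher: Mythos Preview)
Your construction and soundness argument match the paper's approach exactly: the recursive definition of $V_k$ and $D_k$, the observation that each $V_k$ stays PT, and the check that items emitted at stage $k$ are only reached by inputs in $V_k$. The gap is in the termination argument. You assert that $\bigcup_k D_k$ is subsequence-free, justified by ``no string $\sqsubseteq$ some $u \in D_k$ survives as decisive modulo $V_{k+1}$''. That is false. The construction guarantees only the \emph{one-sided} relation: for $j<k$, $u\in D_j$, $v\in D_k$, one has $u\not\sqsubseteq v$, because $v\in V_k$ and $\Psi(D_j)$ was already removed. The reverse direction $v\not\sqsubseteq u$ can fail. Concretely, take $\Sigma=\{a,b\}$ and $L=\Psi(ab)\cup(\Psi(b)\cap\Psi(a)^c)$. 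Then $ab$ is the unique minimally (positive-)decisive string, so $D_1=\{ab\}$; but $V_2=\Sigma^*\setminus\Psi(ab)=b^*a^*$, and $V_2\cap\Psi(a)=b^*a^+\subseteq L^c$, so $a$ is minimally negative-decisive modulo $V_2$ and $a\in D_2$. Yet $a\sqsubseteq ab$, so $D_1\cup D_2$ is not an antichain, and Theorem~\ref{theorem:finite} does not apply to the union directly.

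The paper does not claim the union is an antichain. It assumes non-termination, picks one $x_n\in D_n$ per stage, and uses only the one-sided property $n<m\Rightarrow x_n\not\sqsubseteq x_m$. From this it extracts (greedily) a maximal antichain inside the sequence $(x_n)$, which is finite by Theorem~\ref{theorem:finite}, and then argues that the remaining $x_k$ must all be subsequences of the finitely many extracted strings, contradicting the infinitude and distinctness of the $x_k$. Equivalently, this is Higman's lemma in disguise: $\sqsubseteq$ is a well-quasi-order, so any infinite sequence contains indices $i<j$ with $x_i\sqsubseteq x_j$, directly contradicting the one-sided property. Your invariant should be weakened to this one-sided form, and then termination needs either the extraction step the paper uses or a direct appeal to Higman's lemma.
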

\begin{proof}
Consider languages $V_1, V_2, ...$, all of which are PT languages (Lemma \ref{lemma:dl_subproof_PT}), identified in the following manner:
\begin{enumerate}
\item $V_1 = \Sigma^*$.
\item For $V_i \neq \emptyset$, $V_{i+1}$ is determined thus: Let $D_i \subseteq V_i$ (Lemma \ref{lemma:dl_subproof_DinV}) be the set of minimal decisive strings for $L$ modulo $V_i$. Lemma \ref{} proves all strings contained in $D_i$ are either positive-decisive or negative-decisive modulo $V_i$ for $L$.  In the former case, we define the variable $\sigma_i = 1$, and in the latter case, $\sigma_i = 0$.

We calculate $V_{i+1}$ as
\begin{equation}
\label{eqn:dl_next_v}
V_{i+1} = V_i - \Psi(D_i), \text{ where } \Psi(D_i) = \bigcup_{s \in D_i} \Psi(s)
\end{equation}
\end{enumerate}
To be represented as a decision list we need to show the above process terminates. But before that we explain some of the implicit assumptions we have made.

\begin{lemma}
\label{lemma:dl_subproof_PT}
$V_1, V_2, ...$ are PT languages.
\end{lemma}
\begin{proof}
This is proved by induction. We start with $V_1=\Sigma^*$, which is a PT language (by eqn (\ref{eqn:basic_criteria_PT})). This forms our base case. Assume $V_i$ is a PT language. $\Psi(D_i) = \bigcup_{s \in D_i} \Psi(s)$ is also a PT language since its a finite boolean combination of shuffle ideals. Since the set difference operation can be rewritten as a boolean combination e.g. $A-B=A \cap B^c$, $V_{i+1}$, as given by eqn (\ref{eqn:dl_next_v}), is a PT language.
\end{proof}

\begin{lemma}
If $V_i \neq \emptyset$, then  $D_i \neq \emptyset$.
\end{lemma}
\begin{proof}
Since each $V_i$ is a PT language, as long as $V_i \neq \emptyset$, there is at least one decisive string for $L$ modulo $V_i$ by Lemma \ref{lemma:decisive_modulo}. Hence, $V_i \neq \emptyset \implies  D_i \neq \emptyset$.
\end{proof}

\begin{lemma}
\label{lemma:dl_subproof_DinV}
$\forall i, D_i \subseteq V_i$
\end{lemma}
\begin{proof}
From eqn (\ref{eqn:dl_next_v}), it is clear that $V_{i} \subset V_{i-1}$, $V_{i-1} \subset V_{i-2}$, and so on. Each successive $V_i$ is obtained by removing set $\Psi(D_{i-1})$ from $V_{i-1}$ in the previous step. If indeed there were a string $u \in D_{i}$, that is not in $V_{i}$, then $u \in \Psi(D_{i-1}) \cup \Psi(D_{i-2}) \cup \Psi(D_{i-3}) ... \cup \Psi(D_{1})$.  Lets assume $u \in \Psi(D_{i-k}), 0 < k \leq i-1$. But $\Psi(D_{i-k})$ has already been removed in the process to create set $V_i$. Hence $u$ cannot exist. 
\end{proof}

\begin{lemma}
\label{lemma:dl_subproof_only_posneg}
$D_i$ has either only positive decisive strings or negative decisive strings. 
\end{lemma}
\begin{proof}
We provide an informal proof by induction.
 
The claim is easy to prove for $D_1$. If $u \in D_1$ is positive-decisive and $v \in D_i$ is negative-decisive, then we have $\Psi(u) \subseteq L \text{ and } \Psi(v) \subseteq L^c$. Thus, $\Psi(u) \cap \Psi(v)=\emptyset$. But $uv \in \Psi(u) \cap \Psi(v)$, which is a contradiction; thus $D_1$ can only have positive decisive strings or only negative decisive strings. This serves as our base case.

In the case of a general $D_i$, note that strings in $D_i$ get included in this step because of the intersection with $V_i$ i.e. the modulo operation makes them decisive where they originally were not. Essentially, these decisive strings are created in this step. If this were not the case i.e. $V_i$ does not affect the decisive nature of these strings, then these would have been decisive in some earlier step too (if the intersection with $V_i$ don't affect them, intersection with $V_k, k < i$ cannot surely affect them since $V_i \subset V_k, k < i$). Hence, these would have been already eliminated by now, as part of $D_k$, for some $k <i$.

Let's assume we are at step $i+1$, and $D_i$ has only positive decisive strings modulo $V_i$ for $L$. The removal of $\Psi(D_i)$ could have only created negative-decisive strings modulo $V_{i+1}$ for $L$. This is because the only difference between $V_{i+1}$ and $V_{i}$ is a missing set of strings $\in L$. Some strings in $\Sigma^*$, whose shuffle ideals, in the previous step overlapped with both strings in and outside $L$, can now have their shuffle ideals entirely outside $L$. No positive-decisive strings could have been created because strings $\notin L$ were not removed.

Given our base case has only positive or negative decisive strings but not both, the above argument establishes the property for a general $D_i$ (modulo $V_i$). 
\end{proof}

We will now show that $V_{N+1} = \emptyset, \text{ for some } N>0$ i.e. the process terminates. Assume the contrary. Then the above process generates an infinte sequence $D_1, D_2, D_3, ...$. Construct an infinte sequence $X = (x_n)_{n \in \mathbb{N}} $, where $x_n \in D_n$. Since, $D_{n+1} \subseteq V_{n+1} \text{ and } V_{n+1} = V_n - \Psi(D_n)$, we have all $x_n$ in $X$ are necessarily distinct. Define a new sequence $Y= (y_n)_{n \in \mathbb{N}} $, where $y_1=x_1$, and $y_{n+1} = x_{\xi(n)}$. The function $\xi:\mathbb{N} \rightarrow \mathbb{N}$ is defined thus:
\begin{align*} 
\xi(n+1) &= \min_{k\in \mathbb{N}}\{y_1, y_2, ..., y_n, x_k\} \text{ is subsequence-free, if such a }k \text{ exists} \\
&= \infty \text{ otherwise }
\end{align*}

Observe that if $\xi(i) =\infty, i \in \mathbb{N}$, then $\xi(j) =\infty, \forall j > i$; this is because $\xi$ doesn't put any restriction on which $x_k$ to pick - if a suitable $k$ cannot be found in the infinite sequence $X$ in a particular step $i$, then it cannot be found in any subsequent step $j$ either. 

We also note that there indeed is a $i$ where $\xi(i) = \infty$; because if this were not the case, then $Y$ would have an infinite number of subsequence-free strings, which is not possible as per Theorem \ref{theorem:finite}.

Thus, for $n=1,2, 3,..., \text{ where }n \in \mathbb{N}$, $\xi(n)$ seems to evaluate to non-infinite natural numbers till a particular point, followed by only the value $\infty$ beyond this point. When $\xi(i) = \infty$ the first time, there is no suitable $x_k$ to be picked, which means every $x_k$ is a subsequence of some string in the set $\{y_1, y_2, ..., y_{i-1}\}$. But since this set is finite in size, the number of subsequences of strings from the set are also finite. Since, we assumed $X$ is infinite, with each $x_k$ distinct, this is a contradiction. Hence, we conclude the above process for determining $V_i$ terminates. 

We define a decision list $\Delta = (D_1, \sigma_1),(D_2, \sigma_2),...,(D_N, \sigma_N)$. For $x \in \Sigma^*$, we evaluate $\Delta(x)$ in the following manner:

\begin{algorithm}[H]
 \eIf{$x \in \Psi(D_1)$}{
  then set $\Delta(x) = \sigma_1$\;
}{
	\eIf{$x \in \Psi(D_2)$}{
		then set $\Delta(x) = \sigma_2$\;
	}{
		\ldots \\
		\ldots \\
		\indent \eIf{$x \in \Psi(D_N)$} {
			then set $\Delta(x) = \sigma_N$\;
		}{
		set $\Delta(x) = \sigma_N$\;
		}
	}
}
\caption{Evaluating $\Delta(x), x \in \Sigma^*$. The final \emph{else} clause is different from Algorithm \ref{algo:generic_dl}.}
\end{algorithm}
Observe that $\Delta$ acts as the \emph{characterestic function} for the PT language $L$ i.e. $ \forall x \in \Sigma^*, \Delta(x) = 1 \iff x \in L$. Why? It is easy to see this graphically. See Fig \ref{fig:DL}\footnote{not part of the original paper; this was added by us.}.
\begin{figure*}[!t]
\centering
\subfloat[$D_1, D_2$ are positive and negative decisive respectively. $\Psi(D_1), \Psi(D_2)$ are represented by dotted regions around $D_1,D_2$ - note that these can  overlap.]{\includegraphics[width=2.0 in]{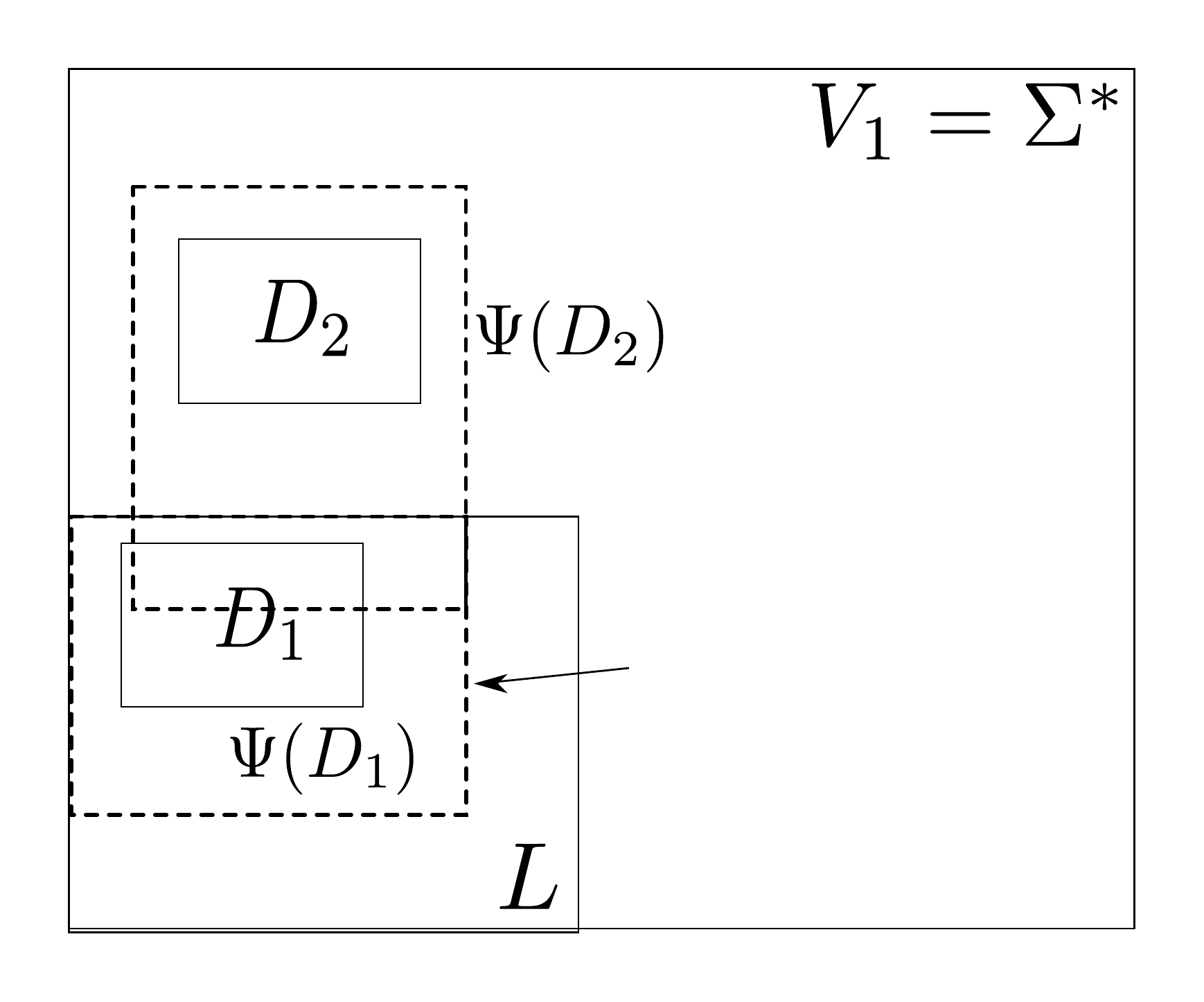}%
\label{fig_first_case}}
\hfil
\subfloat[$\Psi(D_1)$ has been tested for. In the new universe $V_2$, we ignore the grayed out region. The dotted region around $D_2$ in this universe is equivalent to $V_2 \cap \Psi(D_2)$.]{\includegraphics[width=2.0in]{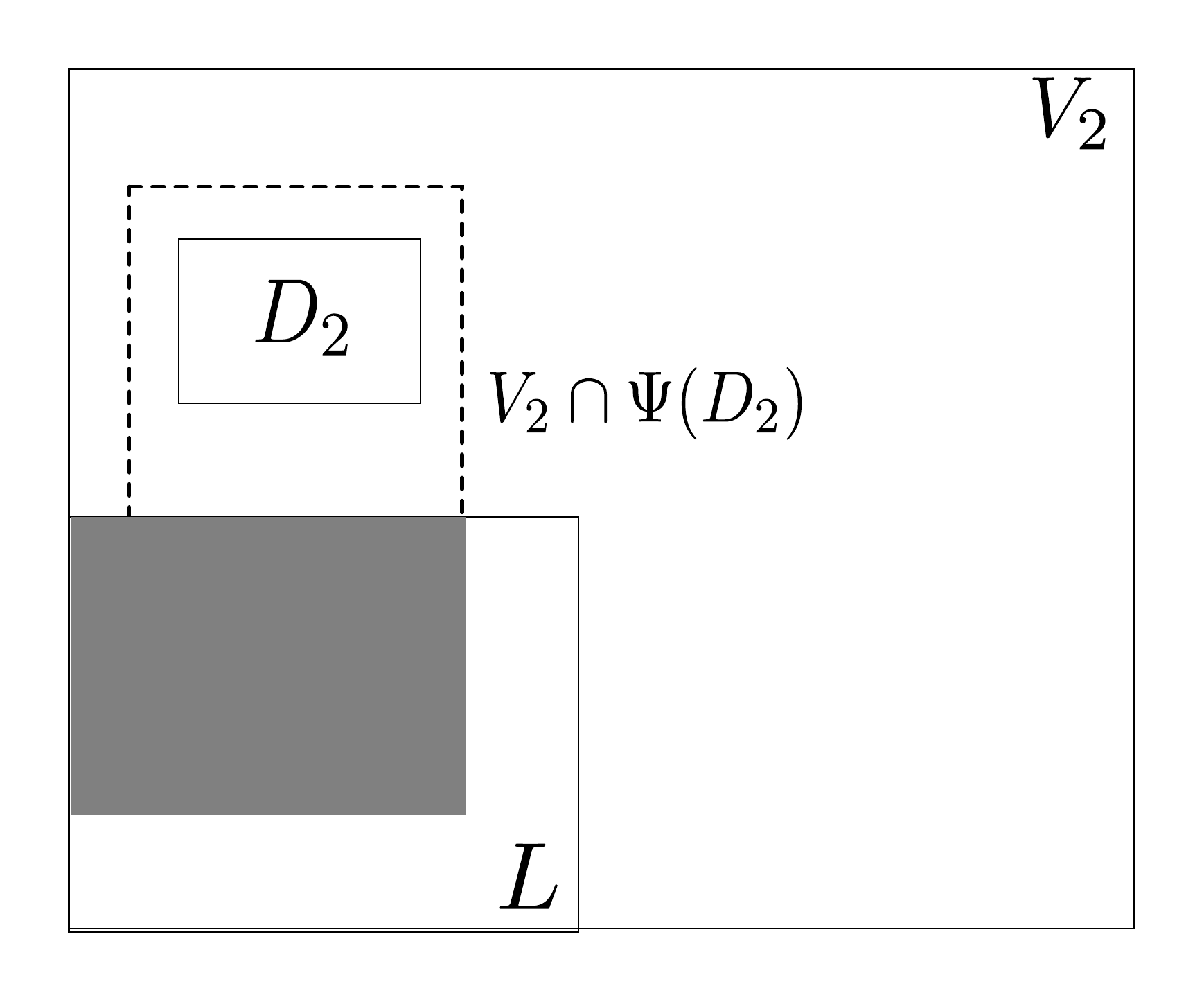}%
\label{fig_second_case}}
\hfil
\subfloat[$V_2 \cap \Psi(D_2)$ has been tested for. The new universe $V_3$ ignores the grayed out region. $D_3, D_4,...$ etc are not shown.]{\includegraphics[width=2.0in]{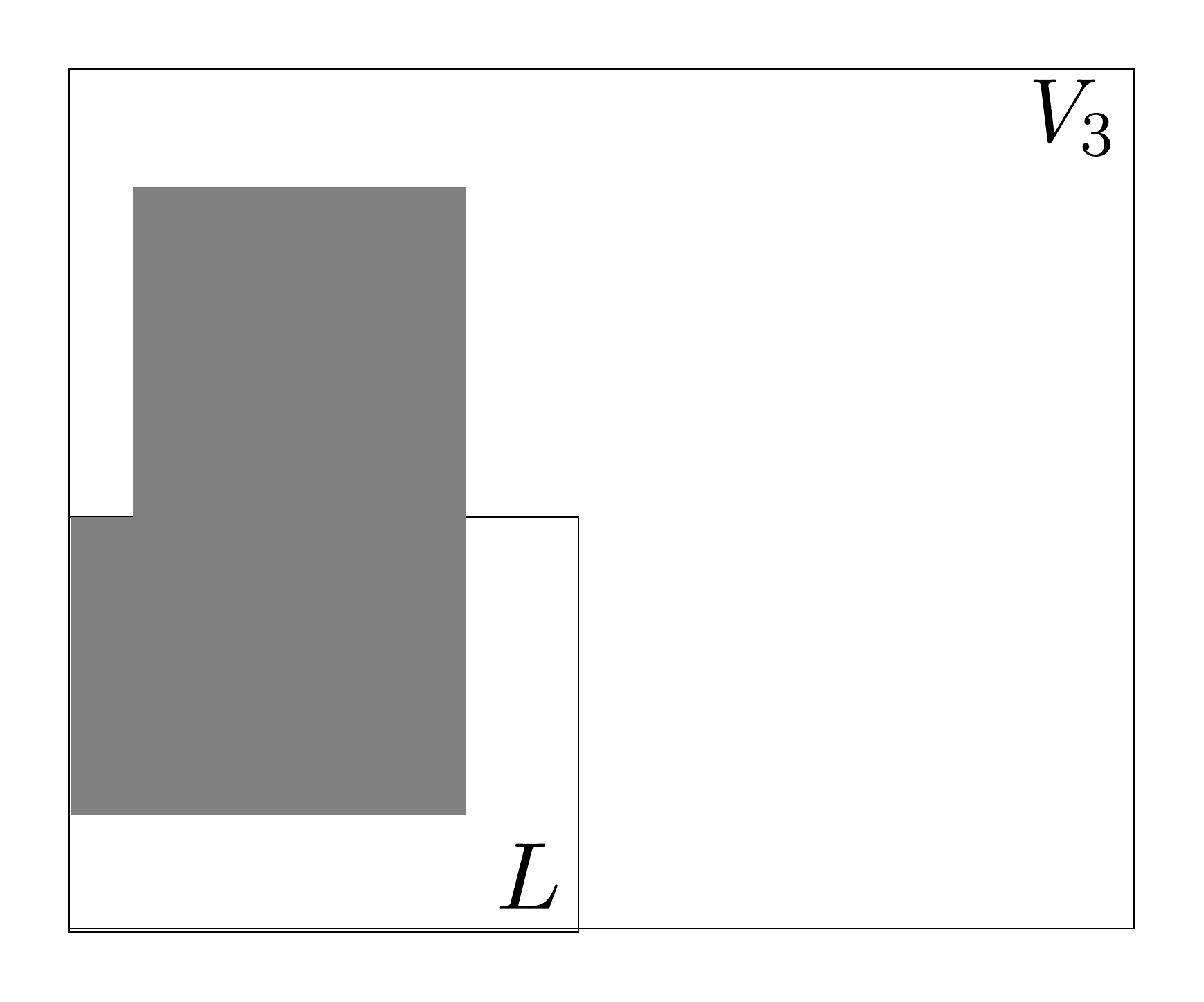}%
\label{fig_second_case}}
\caption{Decision list $\Delta$ to test membership to $L$.}
\label{fig:DL}
\end{figure*}
Each set $R_i=V_i \cap \Psi(D_i)$ lies either completely within $L$ or outside it; the way we calculate $V_i$ ensures this. At a step $i$, we check whether the given string $x \in \Sigma^*$ is a member of $R_i$ - if it is, we decide $x\in L$ or not depending on whether $R_i$ lies within $L$ i.e. whether it is positive or negative decisive. If it is not a member, we remove $R_i$ from consideration, and move on to $D_{i+1}$ and test for $R_{i+1}$. This is why the concept of ``decisive modulo'' is important - it limits the membership test to only the region under consideration. Not surprisingly, \cite{Anthony96thresholdfunctions} colloquially refers to the algorithm as a ``chopping procedure'' because of how we progressively ``chop'' off regions from the initial universe.
\end{proof}

\subsubsection{Linear Separability}
Quite remarkably, \cite{Kontorovich:2008:KML:1411847.1411928} also proves that with the right feature mapping $\phi(x), x \in \Sigma^*$, the decision list $\Delta$ might be represented as a weight vector $w$ such that the dot-product $\langle w \cdot \phi(x) \rangle > 0$ only when $x \in L$.

The feature mapping we use here is the \emph{subsequence feature mapping} $\phi: \Sigma^* \rightarrow \mathbb{R}^n$, defined as:
\begin{align*}
\phi(x) = &(y_u)_{u \in \Sigma^*}\\
\text{where }& y_u = 1, \text{ if } u \sqsubseteq x,\\
&y_u = 0, \text{ if } u \not\sqsubseteq x
\end{align*}
Let $\Delta= (D_1, \sigma_1), (D_2, \sigma_2), ..., (D_N, \sigma_N)$. We set the weight vector $w = 0$ to begin with, and update its coordinates once for each $D_n$, going in the order $n=N, N-1, ..., 1$:
\begin{align}
\forall u \in D_n,\; w_u &= +\bigg(|\sum\limits_{v\in V^-}w_v| + 1\bigg),\text{ if } \sigma_n=1, \label{eqn:sub_fv_pos}\\
&= -\bigg(|\sum\limits_{v\in V^+}w_v| + 1\bigg),\text{ otherwise } \label{eqn:sub_fv_neg}\\
\text{Here, } V^- = \{v\in \bigcup_{i=n+1}^{N}D_i:  w_v & < 0  \}, \; V^+ = \{v\in\bigcup_{i=n+1}^{N}D_i: w_v > 0  \} \nonumber
\end{align}
$w$, when constructed in this manner, is equivalent to $\Delta$. To see why this is, note the following:
\begin{enumerate}
\item $\forall i,j, i \neq j \text{ and } 1 \leq i,j \leq N, D_i \cap D_j = \emptyset$. This is by construction of $D_i$. As a result, the coordinates $w_u$ modified at an iteration $n$ are not modified during any other iteration.
\item Consider the coordinates $w_u$ modified for $D_i$. The only way these can contribute to $\langle w \cdot \phi(x) \rangle$ is when $\exists u \in D_i,u \sqsubseteq x$. In effect, a non-zero contribution by $D_i$ implies $x \in \Psi(D_i)$. 
\item Consider $D_i, D_j, D_k \text{ where } 1 \leq i < j < k \leq N$. Assume $\sigma_j = 1$ and $x \in V_j \cap \Psi(D_j)$.

\indent Since $x \notin V_i \cap \Psi(D_i)$ for $u \in D_i, u \not\sqsubseteq x$. Analogously, for $w$ we note that the coordinates $w_u$ updated while at $D_i$, result in a $0$ in $\langle w \cdot \phi(x) \rangle$, since $\phi(x)_u=0$.

For $v \in D_k$, it is possible that $v \sqsubseteq x$. Hence all such corresponding coordinates $w_v$ would contribute a non-zero value to the dot-product. Since our test for membership to $L$ is $\langle w \cdot \phi(x) \rangle > 0$, these contributions are fine as long as this inequality holds. This is definitely true when $\sigma_k = 1$, since by construction (eqn (\ref{eqn:sub_fv_pos})) $w_v > 0$. However, when $\sigma_k=-1$, we must ensure that the total negative value of such terms don't outweigh the sum of positive values in the dot-product. A simple way to achieve this is: 
\begin{enumerate}
\item Add all such negative values to obtain the sum  $\sum\limits_{v\in V^-}w_v$. Here $V^- = \{v\in \bigcup_{k > j}^{N}D_k:  w_v  < 0  \}$.
\item Add $1$ to the magnitude to get $t = \bigg(|\sum\limits_{v\in V^-}w_v| + 1\bigg)$.
\item Update all $w_u, u \in D_j$ with this value.
\end{enumerate}
Now, even if there is just one sequence $u \sqsubseteq x, u \in D_j$, its value in the dot-product would be greater than the sum of the negative values contributed by any number of subsequence matches $v, v \in D_k \text{ where } k > j \text{ and } \sigma_k=-1$. A similar argument applies when $\sigma_j=-1 \text{ and } \sigma_k=1$.
\end{enumerate}
\subsubsection{Using Rational Kernels} 
\label{sec:subsequence_kernels_PT_test}
We now show that the subsequence-feature mapping may be obtained by using a rational kernel.

\begin{figure*}[!t]
\centering
\subfloat[$T_0$, counts occurrences.]{\includegraphics[width=1.0 in]{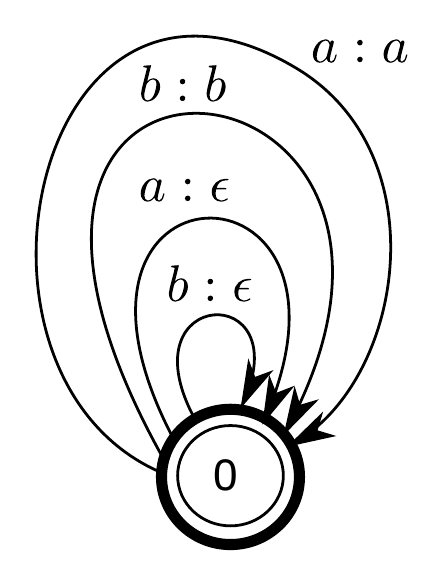}%
\label{fig_single_state}}
\hfil
\subfloat[$R$, paths to be rejected.]{\includegraphics[width=3.0in]{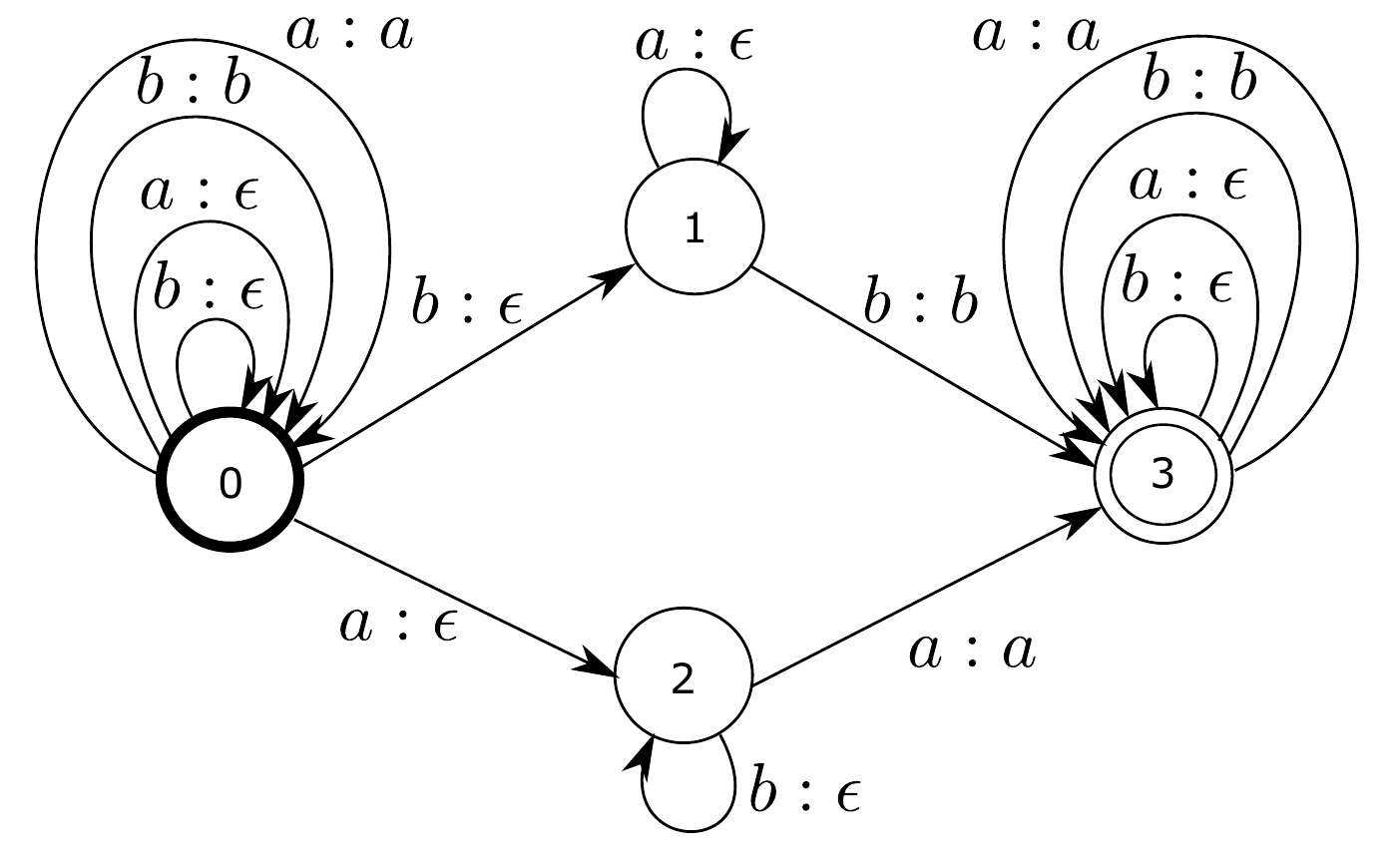}%
\label{fig_reject_paths}}
\hfil

\subfloat[$T = T_0 - R$.]{\includegraphics[width=3.0in]{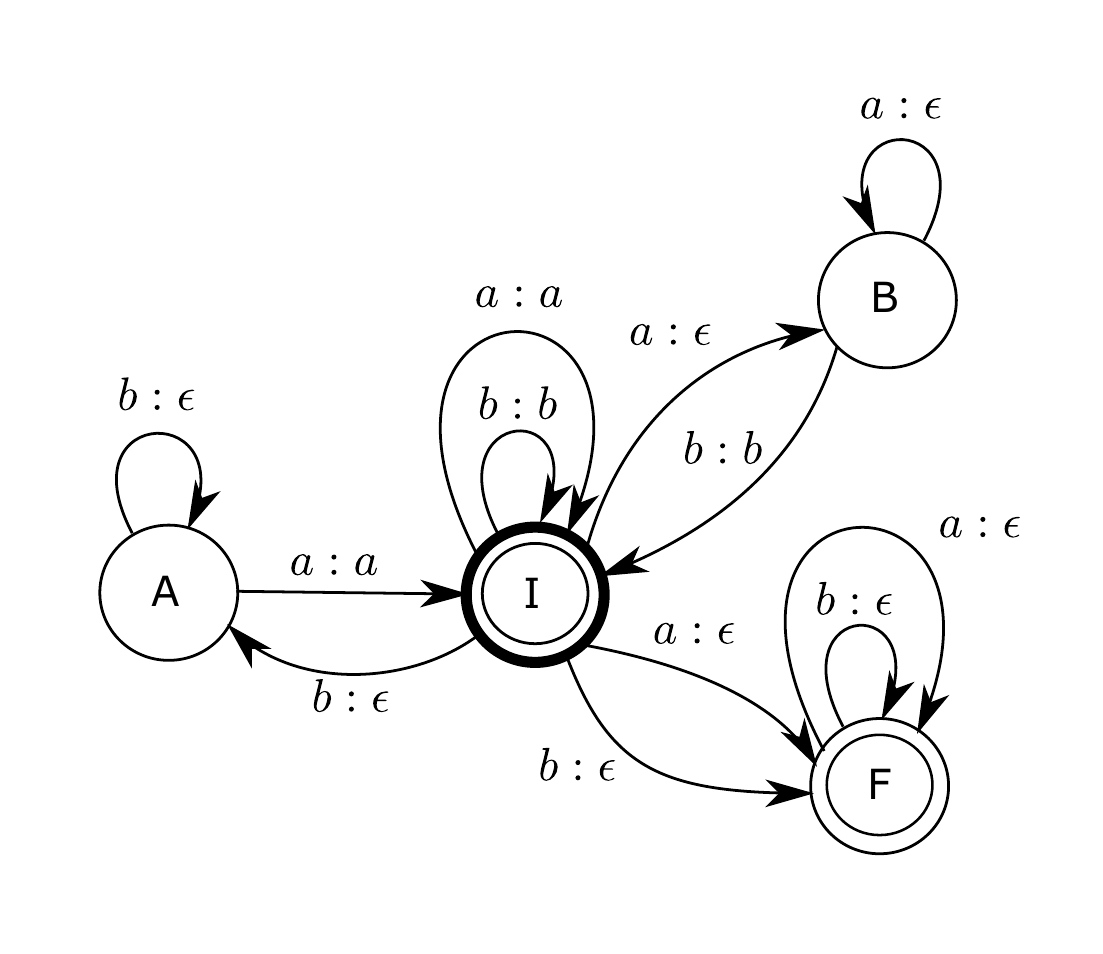}%
\label{fig_subseq_final}}
\caption{Determining a transducer $T$ to indicate presence of subsequences.}
\label{fig:transducer_PT}
\end{figure*}

We are interested in the following kernel function: 
\begin{equation}
\label{eqn:subsequence_kernel}
K(x,y)=\langle \phi(x), \phi(y) \rangle= \sum\limits_{u \in \Sigma^*} [\![ u \sqsubseteq x]\!] [\![  u \sqsubseteq y]\!]
\end{equation}
Here $[\![ P ]\!]$ represents the $\{0,1\}$ truth value of the predicate $P$. For a corresponding rational kernel we need:
\begin{enumerate}
\item The underlying transducer $T$ must match subsequences of all lengths. This might be contrasted with the behavior of the bigram counter where only bigram sequences are matched. Fig \ref{fig_single_state} shows transducer $T_0$ which possesses acceptable paths for all sequences as the output label.
\item \emph{Multiplicity} of subsequences must be ignored. For example, if $u \sqsubseteq x$, and $u$ occurs twice in $x$, we still want $\phi(x)_u=1$. $T_0$ does not satisfy this property since it \emph{counts} occurrences.
\end{enumerate}

It turns out that some simple modifications to $T_0$ can give us the transducer $T$ that meets our needs. We start by inspecting a particular structure in $T_0$ that captures multiplicity. Then we prove that eliminating just this structure gives us $T$.

Consider the section of a path corresponding to the regular expression on symbol pairs: $(a,\epsilon)(b, \epsilon)^* (a,a)$. This consumes $ab^*a$ in the input sequence, and generates $a$ as the output sequence. An equivalent regular expression is $(a,a)(b, \epsilon)^* (a,\epsilon)$. Having both kind of paths in $T$ would double count the occurrence of $a$ in the output, so we can safely drop one - we decide to drop the first expression. Similarly, we drop $(b,\epsilon)(a, \epsilon)^* (b,b)$ and retain $(b, b)(a, \epsilon)^* (b,\epsilon)$. Fig \ref{fig_reject_paths} shows transducer $R_0$ that only has the paths we want to drop. Considering $T_0$ and $R_0$ as \emph{automata} over an alphabet of symbol pairs $(\Sigma \cup \{\epsilon\} \times \Sigma \cup \{\epsilon\}) - \{\epsilon, \epsilon\}$, we can apply automata complementation to obtain $T=T_0-R_0$, as shown in Fig \ref{fig_subseq_final}. Note that $T$ accepts everything $T_0$ does since the expressions dropped were redundant.

We need to prove that if the pair $x,y$ is accepted by $T$, then there is exactly one accepting path. Let's assume we have two accepting paths $\pi_1$ and $\pi_2$ corresponding to $T(x,y)$, and $\pi$ is the the longest
prefix-path shared by $\pi_1$ and $\pi_2$. Consider the following:
\begin{enumerate}
\item $\pi$ cannot end in states $A$ or $B$. The input labels on the outgoing transitions from these states uniquely determine the transition. Thus, the next symbol (there is at least one symbol left since $T(x,y)$ has accepting paths and $A,B$ are not final states) after $\pi$ in $x$, which is identical for both $\pi_1$ and $\pi_2$, would use the same transition. This contradicts the claim that $\pi$ is the longest common prefix path.
\item $\pi$ cannot end in $F$ with symbols left to read for similar reasons - the next input symbol uniquely determines the transition.
\item $\pi$ cannot end in $I$ with non-empty symbols to read. Assume the next input symbol is $a$. If the next output symbol is also $a$, the only possible transition is the loop $(a,a)$ in which case $\pi$ cannot be the longest shared prefix. If the next output symbol is $b$, then the only possibility is going to state $B$ and returning to $I$, which again implies a longer prefix is possible. An analogous argument holds for the case when the next input symbol is $b$ - except that we now we go to state $A$ and return.
\end{enumerate}
What if $\pi$ has length $0$? Since we start at state $I$, arguments similar to 3) apply.
Therefore, there cannot be any more symbols to read when $\pi$ ends at $I$ or $F$. Hence, $\pi_1 = \pi_2$ and we have exactly one path for $T(x,y)$.

The composition $T \circ T^{-1}$ is equivalent to subsequence kernel:
\begin{align*}
T \circ T^{-1}(x,y) &= \sum\limits_{u \in \Sigma^*} T(x,u) T(y, u) \\
&=\sum\limits_{u \sqsubseteq x, u \sqsubseteq y} T(x,u) T(y, u) \\
&=\sum\limits_{u \sqsubseteq x, u \sqsubseteq y} 1 \cdot 1 \\
& = K(x,y)\;\;\;\;\;\;\text{ \emph{-- (ref. eqn (\ref{eqn:subsequence_kernel}))}}
\end{align*}
This proves that we have a rational kernel $T \circ T^{-1}$ that may be used with SVMs to separate out sequences belonging to a PT language $L$.

\subsection{Identifying Metabolic Pathways}
\label{sec:identify_metabolic}
The field of computational biology deals with a fair amount of sequence learning problems, and  as \cite{Roche-Lima2014} shows, rational kernels find use in this context.
We discuss the particular application described in \cite{Roche-Lima2014}, where they are used with \emph{pairwise SVMs}. We begin with a brief description of our objective, then provide an overview of pairwise SVMs (adopting notation and terminology from \cite{Brunner:2012:PSV:2503308.2503316}),  and finally outline their application to our problem.
\subsubsection{Objective}
Metabolic pathways are a series of chemical reactions where the output of one stage feeds as input into the next. Empirically determining such reactions is an active area of research. This paper specifically looks at enzyme-enzyme interactions. If an enzyme $A$ can create a product that can be used as a substrate by another enzyme $B$, we identify $A$ and $B$ with nodes in a graph, with an edge between them to denote they interact. Given a dataset of such known pairwise interactions, our objective is to model and predict whether any two given enzymes interact.

Each enzyme maybe represented by either its \emph{Enzyme Commission} (EC) number (ex. ``5.3.1.9'') or by its \emph{gene nomenclature} (ex. ``YAR071W'').

\subsubsection{Pairwise Kernels and SVMs}
In the case of classification by standard SVMs our training data is comprised of pairs $(x_i, y_i)$ where $x_i \in X$, the set of possible data instances, and $y_i \in C,\text{ the set of class labels}$. In the case of pairwise SVMs our training data is of the form $(x_i, x_j, y_{ij}), \text{ where } x_i, x_j \in X, y_{ij} \in \{+1,-1\}$. If $x_i \text{ and } x_j$ belong to the same class, $y_{ij}=+1$, else $y_{ij}=-1$. Thus, we need not know the label of each data point per se; the information whether or not a pair of points belongs to the same class is used.

The objective is to learn a decision function $f: X \times X \rightarrow \mathbb{R}$ such that $(a,b,+1) \implies f(a,b) > 0$ and $(a,b,-1) \implies f(a,b) < 0$.

The extension of standard SVMs to pairwise SVMs is actually quite straight forward: if $z_{ij}$ denotes the tuple $(x_i, x_j)$, then our training data is of the form $(z_{ij}, +1/-1)$ - this is identical to the case of standard SVMs, with the two classes $\{-1,+1\}$. The key is to redefine the various operations on $z_{ij}$ since its a tuple now, instead of a feature vector. This is done by modifying the kernel function to handle tuples; such kernels are known as \emph{pairwise kernels} $K: (X \times X) \times (X \times X) \rightarrow \mathbb{R}$. It is easy to see that our decision function may be expressed in the familiar form: 
\begin{align}
\label{eqn:dec_fn_pairwise}
f(a,b) =&  \sum\limits_{(x_i,x_j) \in S} \alpha_{ij}y_{ij}K((x_i, x_j),(a,b)) + \gamma \\ 
\text{ where } &\alpha_{ij} \geq 0, \nonumber\\
 &\gamma \in \mathbb{R}, \nonumber\\ 
 &S = \text{set of support vector \emph{pairs}} \nonumber
\end{align}
We want the following properties to be satisfied by pairwise kernels:
\begin{enumerate}
\item They should be \emph{positive definite} and \emph{symmetric} (PDS) i.e. $\forall a,b,c,d \in X, K((a,b), (c,d)) = K((c,d),(a,b))$, and the kernel matrix should be positive semidefinite. This is \emph{Mercer's condition}. It is easy to satisfy this property by ``composing'' pairwise kernels with standard PDS kernels. For ex, the following kernels, where $k: x \times x \rightarrow \mathbb{R}$ are assumed to be PDS, satisfy this property:
\begin{align*}
K_D((a,b),(c,d)) &= k(a,c) + k(b,d)\\ 
K_T((a,b),(c,d)) &= k(a,c) \cdot k(b,d)
\end{align*}
Because of the closure properties of kernels, $K_D$ and $K_T$ are PDS. They are known as the \emph{direct sum pairwise kernel} and the \emph{tensor pairwise kernel} respectively.
\item The decision function should be \emph{balanced}: $f(a,b) = f(b,a)$. Intuitively, this makes sense since switching the order of arguments shouldn't change the fact whether or not $a,b$ belong to the same class. Because of eqn (\ref{eqn:dec_fn_pairwise}), this property needs to be enforced by the kernel function. Essentially, we need $K((a,b),(c,d)) = K((a,b),(d,c))$. Note that $K_D$ and $K_T$ \emph{don't} satisfy this property. Some examples of balanced kernels, that are also PDS, from the literature:
\begin{align*}
K_{DL}((a,b),(c,d)) =& \frac{1}{2}(k(a,c) + k(a,d) +k(b,c)+k(b,d)) \\
K_{TL}((a,b),(c,d)) =& \frac{1}{2}(k(a,c)k(b,d) + k(a,d)k(b,c)) \\
K_{ML}((a,b),(c,d)) =& \frac{1}{4}(k(a,c) -k(a,d) -k(b,c) +k (b,d))^2 \\
K_{TM}((a,b),(c,d)) =&  K_{TL}((a,b)(c,d)) +K_{ML}((a,b),(c,d))
\end{align*}
$K_{DL}$ is known as the \emph{direct sum learning pairwise kernel}, $K_{TL}$ is the \emph{tensor learning pairwise kernel}, $K_{ML}$ is the \emph{metric learning pairwise kernel} and  $K_{TM}$ is known as the \emph{tensor metric learning pairwise kernel}.
\end{enumerate}
\subsubsection{Application}
For a pair of enzymes $x_i, x_j$, we include the triple $(x_i,x_j,+1)$ in the training data when they are known to interact, and $(x_i,x_j,-1)$, when they don't interact. Each enzyme is represented as a string - which can be its EC number or the gene nomenclature. We learn a pairwise SVM on this data. The pairwise kernels we use are the ones listed before - $K_{DL}, K_{TL}, K_{ML}, K_{TM}$ - with the composing kernels, $k$, being \emph{n-gram} rational kernels with $n=3$. Additonally, the authors use other kernels popular in the the bioinformatics domain. See \cite{Roche-Lima2014} for details.

\section{Large Scale SVM training using Rational Kernels}
\label{sec:large_scale}
The popularity of SVMs and the availability of increasingly large datasets has motivated lots of research around training SVMs faster. We present techniques from \cite{Allauzen2011} that specifically look at SVMs used with rational kernels. Since the work derives substantially from \cite{Hsieh:2008:DCD:1390156.1390208}, which discusses scalability in the case of linear kernels, we present that first.

\subsection{Large-scale training for linear kernels}
Given a set of instance-label pairs $(\mathbf{x_i},y_i), 1 \leq i \leq l, x_i \in \mathbb{R}^m, y_i \in \{-1,+1\}$	, SVM training solves the following optimization problem:
\begin{align}
&\min_{\mathbf{w}}\frac{1}{2}\mathbf{w}^T\mathbf{w} + C \sum\limits_{i=1}^l \xi_i \label{eqn:primal_linear} \\
&\text{ where } \xi_i = max(1-y_i \mathbf{w}^Tx_i,0) \nonumber
\end{align}
This formulation is known as the \emph{L1-SVM}\footnote{\cite{Hsieh:2008:DCD:1390156.1390208} also discusses the \emph{L2-SVM}, but we don't mention it since \cite{Allauzen2011} exclusively focuses on the L1-SVM.}. We often use a \emph{bias} term $b$, which can be included in the feature/weight vectors:
\begin{equation*}
\mathbf{x}^T_i \leftarrow [\mathbf{x}^T_i , 1],\;\;\mathbf{w}^T \leftarrow [\mathbf{w}^T , b]
\end{equation*}
Eqn (\ref{eqn:primal_linear}) is known as the \emph{primal} problem. One may solve the equivalent \emph{dual} problem:
\begin{align}
&\min_{\boldsymbol{\alpha}} f(\boldsymbol{\alpha}) = \frac{1}{2} \boldsymbol{\alpha}^TQ\boldsymbol{\alpha} - \boldsymbol{e}^T\boldsymbol{\alpha} \label{eqn:dual}\\
&\text{ subject to } 0 \leq \alpha_i \leq C, \forall i \nonumber \\
&\text{ where } Q_{ij} = y_iy_j \mathbf{x_i}^T\mathbf{x_j}\nonumber
\end{align}
We solve the dual optimization problem using \emph{coordinate descent}. We start with the initial value $\boldsymbol{\alpha}^0 \in \mathbb{R}^l$ and iteratively generate the vectors $\{\boldsymbol{\alpha}^k\}_{k=0}^\infty$. Each of these iterations is known as an \emph{outer iteration}. Every outer iteration consists of $l$ \emph{inner iterations}, where the coordinates $\alpha_i, 1 \leq i \leq l$, of $\boldsymbol{\alpha}$ are updated. We denote this by saying each outer iteration generates the vectors $\boldsymbol{\alpha}^{k,i} \in \mathbb{R}^l, i=1,2, ..., l+1$ such that $\boldsymbol{\alpha}^{k,1} = \boldsymbol{\alpha}^k$, $\boldsymbol{\alpha}^{k,l+1} = \boldsymbol{\alpha}^{k+1}$ and 
\begin{equation}
\boldsymbol{\alpha}^{k,i} = [\alpha_1^{k+1}, ..., \alpha_{i-1}^{k+1}, \alpha_i^k,..., \alpha_l^k]^T, \forall i =2,...,l
\end{equation}
In coordinate descent, we update each $\alpha_i$ individually to the best value possible. In moving from $\boldsymbol{\alpha}^{k,i}$ to $\boldsymbol{\alpha}^{k,i+1}$, we identify the incremental update $d$ in the following manner:
\begin{align}
&\min_d f(\boldsymbol{\alpha}^{k,i} + d \boldsymbol{e_i}), \text{ such that } 0 \leq \alpha_i^k + d\leq C \\
&\text{ where } \boldsymbol{e_i} = [0, ..., 0, 1, 0, ...,0]^T, \text{ the $i^{th}$ unit basis vector}. \nonumber
\end{align}
Substituting $\boldsymbol{\alpha}^{k,i} + d \boldsymbol{e_i}$ in eqn (\ref{eqn:dual}), we obtain:
\begin{align}
f(\boldsymbol{\alpha}^{k,i} + d \boldsymbol{e_i}) &= \frac{1}{2} (\boldsymbol{\alpha}^{k,i} + d \boldsymbol{e_i})^TQ(\boldsymbol{\alpha}^{k,i} + d \boldsymbol{e_i}) - \boldsymbol{e}^T(\boldsymbol{\alpha}^{k,i} + d \boldsymbol{e_i}) \nonumber \\
&=\frac{1}{2}[(\boldsymbol{\alpha}^{k,i})^TQ\boldsymbol{\alpha}^{k,i} + d \boldsymbol{\alpha}^{k,i}Q \boldsymbol{e_i} + d\boldsymbol{e_i}^TQ\boldsymbol{\alpha}^{k,i} + d^2 \boldsymbol{e_i}^TQ\boldsymbol{e_i}] - \boldsymbol{e}^T\boldsymbol{\alpha}^{k,i}-d\boldsymbol{e}^T\boldsymbol{e_i} \nonumber \\
&=\frac{1}{2}d^2Q_{ii} + \frac{d}{2}((\boldsymbol{\alpha}^{k,i})^TQ\boldsymbol{e}_i + \boldsymbol{e}_i^TQ\boldsymbol{\alpha}^{k,i}) -d + A \nonumber\\
&\text{ ...(where $A$ groups all terms that do not depend on $d$)} \nonumber \\
&=\frac{1}{2}d^2Q_{ii} + \frac{d}{2}((\boldsymbol{\alpha}^{k,i})^TQ_{*i} + Q_{i*}\boldsymbol{\alpha}^{k,i}) -d + A \nonumber\\
&=\frac{1}{2}d^2Q_{ii} + d((\boldsymbol{\alpha}^{k,i})^TQ_{*i})) -d + A \text{ \;\;...(since $Q$ is symmetric, and thus, $(\boldsymbol{\alpha}^{k,i})^TQ_{*i} = Q_{i*}\boldsymbol{\alpha}^{k,i}$)} \nonumber\\
&=\frac{1}{2}d^2Q_{ii} + d\nabla f(\boldsymbol{\alpha}^{k,i})\boldsymbol{e_i} + A \label{eqn:d_grad}
\end{align}
where $\nabla f(\boldsymbol{\alpha}^{k,i})\boldsymbol{e_i} = (Q\boldsymbol{\alpha}^{k,i})_i-1$ is the $i^{th}$ component of the gradient of $f$ at $\boldsymbol{\alpha}^{k,i}$. This is also written as $\nabla_i f (\boldsymbol{\alpha}^{k,i})$.

We define the \emph{projected gradient} $\nabla_i^Pf(\boldsymbol{\alpha})$ as:
\[
    \nabla_i^Pf(\boldsymbol{\alpha})= 
\begin{cases}
    \nabla_if(\boldsymbol{\alpha})& \text{ if } 0 < \alpha_i < C,\\
    \min(0,\nabla_if(\boldsymbol{\alpha}))& \text{ if } \alpha_i = 0,\\
    \max(0,\nabla_if(\boldsymbol{\alpha}))& \text{ if } \alpha_i = C.
\end{cases}
\]

It is easy to see that when $\nabla_i^Pf(\boldsymbol{\alpha}^{k,i}) = 0$, there is an optimum at $d=0$ i.e. $\alpha^{k}_i=\alpha^{k+1}_i$. Otherwise, we solve eqn (\ref{eqn:d_grad}) by taking the derivative of the RHS wrt $d$ and equating it to $0$. Accounting for the constraints on $\alpha_i$, the solution may be written as:
\begin{equation}
\alpha_i^{k+1} = \min \bigg(\max \bigg( \alpha_i^{k} - \frac{\nabla_i f(\boldsymbol{\alpha}^{k,i})}{Q_{ii}},0\bigg),C\bigg) \label{eqn:alpha_soln}
\end{equation}
Thus, we need to calculate $Q_{ii}$ and $\nabla_i f(\boldsymbol{\alpha}^{k,i})$ per update. $Q_{ii}$ does not change during the optimization procedure, so it can be stored in memory for all $i$ (space complexity $O(l)$). The gradient is given by:
\begin{equation}
\nabla_i f(\boldsymbol{\alpha}) = (Q\boldsymbol{\alpha})_i - 1 = \sum\limits_{j=1}^{l} Q_{ij} \alpha_{j} - 1 \label{eqn:vanilla_gradient}
\end{equation}
To calculate eqn (\ref{eqn:vanilla_gradient}), we need to calculate the $i^{th}$ row of $Q$. $Q$ has a space complexity of $O(l^2)$; we assume that $Q$ is large enough so that it cannot be stored in the memory. Let $n$ be the non-zero elements in a data instance on an average. Computing each product $\boldsymbol{x_i}^T\boldsymbol{x_j}$ needs $O(n)$ operations, and calculating the $i^{th}$ row thus takes $O(ln)$ operations. 

However, since our SVM is linear\footnote{implicit in the definition of $Q$ since we don't use a kernel function}, we can define:
\begin{equation}
\boldsymbol{w} = \sum\limits_{j=1}^{l} y_i \alpha_j \boldsymbol{x_j} \label{eqn:large_scale_linear_w}
\end{equation}
Note that $\boldsymbol{w}$ is a sum of vectors in $\mathbb{R}^m$. Now eqn (\ref{eqn:vanilla_gradient}) may be written as:
\begin{equation}
\nabla_i f(\boldsymbol{\alpha}) = y_i \boldsymbol{w}^T\boldsymbol{x_i}-1 \label{eqn:w_gradient}
\end{equation}
If we already have $\boldsymbol{w}$, computing eqn (\ref{eqn:w_gradient}) takes only $O(n)$ operations. We can avoid computing $\boldsymbol{w}$ from scratch in each iteration by maintaining it throughout the procedure. $\boldsymbol{w}$ has a space complexity of $O(m)$; hence storing it in memory is feasible. We start with setting $\boldsymbol{\alpha}^0=\boldsymbol{0}$, so $\boldsymbol{w}=0$. Every time some $\overline{\alpha_i}$ is updated to $\alpha_i$, we can make the following $O(n)$ update to $\boldsymbol{w}$:
\begin{equation}
\bm{w} \leftarrow \bm{w} + (\alpha_i -\overline{\alpha_i}) y_i \boldsymbol{x_i} \label{eqn:update_w}
\end{equation}
Because the update requires $\boldsymbol{x_i}$, we need to store these vectors in the memory too. The total time complexity, from eqns (\ref{eqn:w_gradient}) and (\ref{eqn:update_w}), is $O(n)$. This is much better than the complexity of $O(ln)$ we saw for eqn (\ref{eqn:vanilla_gradient}). The space complexity is $O(m) + O(ml) = (ml)$ (the former for $\boldsymbol{w}$ and the latter for all $\boldsymbol{x_i}$), which is also better than storing $Q$, that has a space complexity of $O(l^2)$. Of course, we assume $l \gg m$. When the procedure terminates we directly obtain the value of $\boldsymbol{w}$ as defined in the primal.

We need to account for the boundary case $Q_{ii}=0$ in eqn (\ref{eqn:alpha_soln}). This implies $\mathbf{x_i}^T\mathbf{x_i} =0$, and hence, $\mathbf{x_i}=0$. From eqn (\ref{eqn:w_gradient}), we have $\nabla_i f(\boldsymbol{\alpha}) = -1$. Substituting $Q_{ii}=0, \nabla_i f(\boldsymbol{\alpha})=-1$ in eqn (\ref{eqn:d_grad}), we see that $d$ can be infinitely large to minimize $f$. But $\alpha_i \leq C$, so we set $\alpha_i^{k+1}=C$. This case can be included in eqn (\ref{eqn:alpha_soln}) by defining $\frac{1}{Q_{ii}} = \infty$ when $Q_{ii}=0$. Algorithm \ref{algo:dual_coordinate_descent_linear} presents a summary of the steps discussed. 

\begin{algorithm}[H]
\label{algo:dual_coordinate_descent_linear}
\DontPrintSemicolon
\KwData{Given $\boldsymbol{\alpha}$ and corresponding $\boldsymbol{w}=\sum\limits_iy_i\alpha_i \boldsymbol{x_i}$}
\KwResult{Find optimal  $\boldsymbol{\alpha}$}\;
\While{$\boldsymbol{\alpha}$ is not optimal}{
\For{$i\leftarrow 1$ \KwTo $l$}{
	$G = y_i\boldsymbol{w}^T \boldsymbol{x_i} - 1$\;

    $G'= 
	\begin{cases}
    G& \text{ if } 0 < \alpha_i < C,\\
    \min(0,G)& \text{ if } \alpha_i = 0,\\
    \max(0,G)& \text{ if } \alpha_i = C.
	\end{cases}
	$\;
	\If{$|G'|\neq0$}{
		$\overline{\alpha_i} \leftarrow \alpha_i$\;
		$\alpha_i \leftarrow \min(\max(\alpha_i - G'/Q_{ii}, 0), C)$\;
		$\boldsymbol{w} \leftarrow \boldsymbol{w} + (\alpha_i- \overline{\alpha_i})y_i\boldsymbol{x_i}$
	}

}
}
\caption{Dual coordinate descent for linear SVMs}
\end{algorithm}

\subsection{Large-scale training for rational kernels}
We now extend algorithm \ref{algo:dual_coordinate_descent_linear} to rational kernels. Some notation we use in this section:
\begin{enumerate}
\item $\mathit{\Pi_2}$ denotes the \emph{output projection} of a transducer $T$. $\mathit{\Pi_2(T)}$ gives us a \emph{weighted automaton} $A$ that drops the input symbols from transitions on $T$; the output symbols and weights are retained. Fig \ref{fig:output_projection} shows an example of output projection.

\begin{figure*}[!t]
\centering
\subfloat[$T$]{\includegraphics[width=1.75 in]{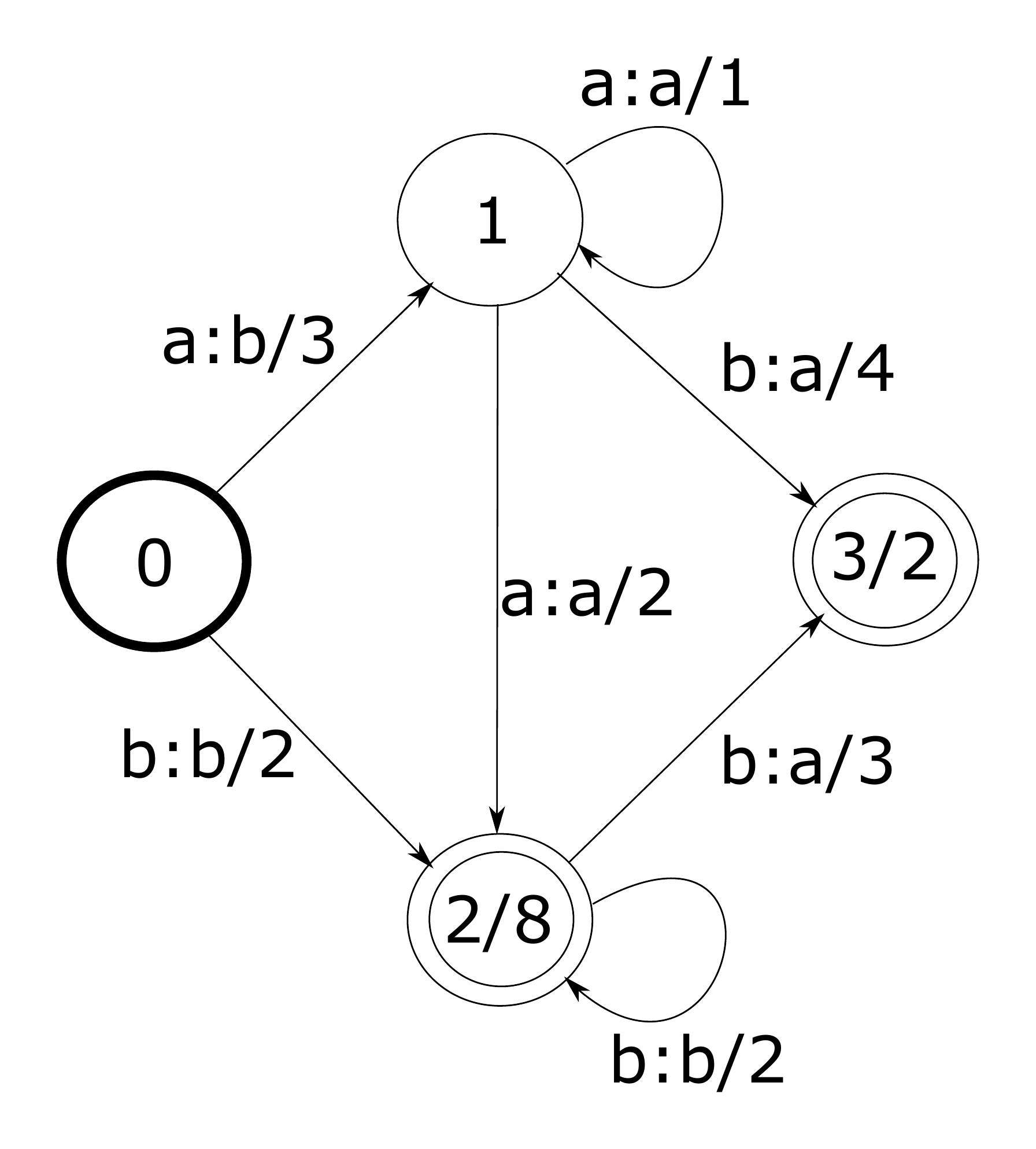}%
\label{fig:transducer_to_be_projected}}
\hfil
\subfloat[$A=\mathit{\Pi_2(T)}$]{\includegraphics[width=1.75in]{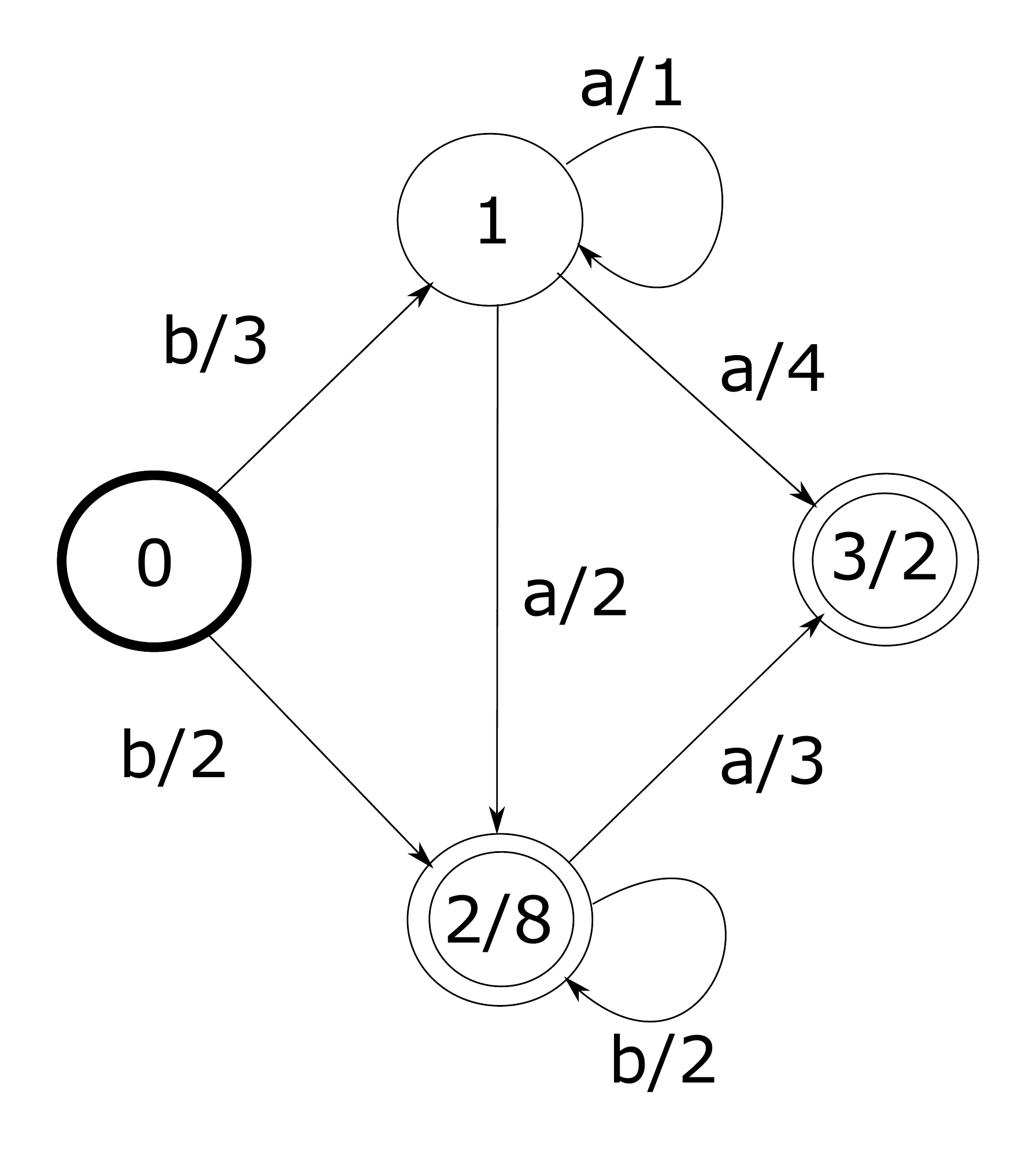}%
\label{fig:projected_automaton}}
\caption{$A$ is the output projection of $T$.\\$T(aab, baa)= A(baa)= 3\times1\times4\times2+3\times2\times3\times2$}
\label{fig:output_projection}
\end{figure*}

\item The linear operator $D$ denotes the sum of \emph{all} accepting paths for a transducer $T$.
\item We use a \emph{generic} form of the matrix $Q$ (eqn (\ref{eqn:dual})) here: $Q_{ij} = y_i y_j k(\boldsymbol{x_i},\boldsymbol{x_j})$ where $k$ is a kernel function. Let $\boldsymbol{U}$ be the weighted transducer associated with the kernel $k$.
\item We will assume the weighted transducers we use are do not admit any $\epsilon$-transitions i.e. transitions where both the input and output symbol\footnote{\cite{Allauzen2011} insists that there shouldn't be $\epsilon$-cycles on the input symbol, but the inverse of such a transducer, which violates this condition, is acyclic too.} are $\epsilon$ . This property holds for all rational kernels we have seen so far.
\item As before, $T(x,y)$ denotes the weight $T$ associates with the sequence pair $(x,y)$. 
\end{enumerate}
We exclusively look at transducers defined on the \emph{real semiring} $(\mathbb{R}_+, +, \times, 0, 1)$ here.

The first extension to Algorithm \ref{algo:dual_coordinate_descent_linear} is quite straightforward. We now have sequences instead of the vectors $\boldsymbol{x_i}$. We replace vectors $\boldsymbol{x_i}$ with weighted automata $\boldsymbol{X_i}$ that only accept the sequence $\boldsymbol{x_i}$ and returns a weight of $1$ for such a path. $\boldsymbol{W}$ denotes a weighted automaton that represents $\boldsymbol{w} = \sum\limits_{j=1}^{l}y_j\alpha_j\boldsymbol{x_j}$ i.e. it only accepts sequences $\boldsymbol{x_j}, j=1,2...,l$ and returns the weight $y_j\alpha_j$ for a sequence $\boldsymbol{x_j}$ it accepts. Note that $D(\boldsymbol{X_j}) =1, D(\boldsymbol{W}) =\boldsymbol{w}$.

Using the linearity property of $D$, we have:
\begin{align}
(Q\boldsymbol{\alpha})_i &= \sum\limits_{j=1}^{l} y_iy_j k(\boldsymbol{x_i},\boldsymbol{x_j}) \alpha_j =\sum\limits_{j=1}^{l} y_iy_j D(\boldsymbol{X_i} \circ \boldsymbol{U} \circ \boldsymbol{X_j} ) \nonumber \\
&=	D(y_i \boldsymbol{X_i} \circ \boldsymbol{U} \circ \sum_{j=1}^{l} y_j\alpha_j \boldsymbol{X_j}) \label{eqn:large_scale_first_move_in}\\
& = D(y_i \boldsymbol{X_i} \circ \boldsymbol{U} \circ \boldsymbol{W})
\end{align}

Step (\ref{eqn:large_scale_first_move_in}) can be thought of as replacing the addition of paths over different automata $\boldsymbol{X_j}$ in the previous step with using a single automata with many accepting paths, one each for $\boldsymbol{X_j}, 1\leq j \leq l$.

Given a kernel, since $\boldsymbol{U}$ is constant, the complexity of the composition is $O(|\boldsymbol{X_i}||\boldsymbol{W}|)$. To calculate $D$, paths can be found on this transducer using a shortest distance algorithm. This complexity is linear in the size of  the transducer. The size of the composed transducer is $O(|\boldsymbol{X_i}||\boldsymbol{W}|)$ (we again consider $|\boldsymbol{U}|$ as a constant), hence the complexity to compute $D$ is also $O(|\boldsymbol{X_i}||\boldsymbol{W}|)$. Hence, the overall time complexity for calculating $(Q\boldsymbol{\alpha})_i$ is $O(|\boldsymbol{X_i}||\boldsymbol{W}|)$. We can maintain $\boldsymbol{W}$ in an efficient datastructure where the update takes the form:
\begin{equation}
\boldsymbol{W}\leftarrow \boldsymbol{W} + \Delta(\alpha_i)y_i\boldsymbol{X_i}
\end{equation}

Although the above arguments seem similar to the ones we have seen for the linear kernel, it turns out how much we save on time-complexity is extremely implementation-dependent. Superficially, we seem to have avoided computing the individual kernel values $k(\boldsymbol{X_i}, \boldsymbol{X_j}), 1 \leq j \leq l$, which have a time complexity of $O(|\boldsymbol{X_i}||\boldsymbol{X_1}|) + O(|\boldsymbol{X_i}||\boldsymbol{X_2}|) + ... + O(|\boldsymbol{X_i}||\boldsymbol{X_l}|)$. Whether this is better depends on how $\boldsymbol{W}$ is implemented. If $|\boldsymbol{W}| \sim \sum_{j=1}^{l}|\boldsymbol{X_j}|$, which would be the case if we naively build $\boldsymbol{W}$ with a separate path for each of $\boldsymbol{X_j}$, this is not an improvement. However, if $\boldsymbol{W}$ is a minimal DFA, we could do much better in terms of time complexity. In the case of linear kernels, the difference in time complexities was explicitly clear.

Fortunately, $\cite{Allauzen2011}$ shows that there is more we can do with rational kernels.

Before we proceed, we note the following:
\begin{enumerate}
\item 
Consider the weighted automaton $X$ and transducer $T$. We think of $X$ as a transducer with identical input and output symbols for each transition. Hence, $X$ only possesses accepting paths of the form $(u,u), \forall u \in L_X$, where $L_X$ is the language accepted by $X$.
\begin{align*}
&X \circ T(u,v) =X(u,u)T(u,v) + \sum\limits_{z \neq u}X(u,z)T(u,v) \\
&\text{Since }X(u,z) = 0\text{ when } u \neq z, \text{ and } X(u,u) = 0\text{ when } u \neq L_X,\\\
&X \circ T(u,v) = \begin{cases}
				X(u)T(u,v)\text{, if } u \in L_X \\
				0 \text{, if } u \notin L_X 
			\end{cases}
\end{align*}
We may think of composition with a weighted automaton as applying a \emph{filter} on the transducer: the set of accepting paths in the transducer $T$, post composition/filtering, is a subset of the original set of accepting paths, such that each path has $u \in L_X$ as its input sequence. 
\item Composition of automata works differently than composition of two transducers, or that of an automaton and a transducer. Technically, this is known as \emph{intersection}, and was mentioned in Section \ref{sec:ratk_theory_algo}. We denote this operation by $\diamond$. If $A=(Q_A, \Sigma, \delta_A, s_A, F_A), B=(Q_B, \Sigma, \delta_B, s_B, F_B)$, then the composed automaton is $A \diamond B =(Q_A \times Q_B, \Sigma, \delta, s_A\cdot s_B, F_A \times F_B)$, where $\delta$ is given by:
\begin{equation*}
\delta = \begin{cases}
		\delta_A(q_A, e)\cdot \delta_B(q_B, e) \text{ if } \delta_A(q_A, e) \text{ and } \delta_B(q_B, e) \text{ are defined, where } q_A \in Q_A, q_B \in Q_B  \\
		\text{undefined otherwise}
\end{cases} 
\end{equation*}
Only sequences accepted by both $A$ and $B$ are accepted by $A \diamond B$.
\end{enumerate}
We will also assume that the weighted transducer $\boldsymbol{U}$ can be expressed as $\boldsymbol{T} \circ \boldsymbol{T^{-1}}$, which is true for all PDS sequence kernels seen in practice. Then,
\begin{align}
(Q \boldsymbol{\alpha})_i &= D(y_i \boldsymbol{X_i} \circ (\boldsymbol{T} \circ \boldsymbol{T^{-1}}) \circ \boldsymbol{W}) \label{eqn:large_rational_derivation_1}\\
&= D((y_i \boldsymbol{X_i} \circ \boldsymbol{T}) \circ (\boldsymbol{T^{-1}} \circ \boldsymbol{W})) \label{eqn:large_rational_derivation_2} \\
&= D((y_i \boldsymbol{X_i} \circ \boldsymbol{T}) \circ (\boldsymbol{W} \circ \boldsymbol{T})^{-1}) \label{eqn:large_rational_derivation_3} \\
&= D(\mathit{\Pi_2}(y_i \boldsymbol{X_i} \circ \boldsymbol{T}) \diamond \mathit{\Pi_2}(\boldsymbol{W} \circ \boldsymbol{T})) \label{eqn:large_rational_derivation_4} \\
&=D(\boldsymbol{\Phi'}_i \diamond \boldsymbol{W'}) \label{eqn:large_rational_derivation_5} \\
\text{ where } & \boldsymbol{\Phi'}_i=\mathit{\Pi_2}(y_i \boldsymbol{X_i} \circ \boldsymbol{T}), \boldsymbol{W'} = \mathit{\Pi_2}(\boldsymbol{W} \circ \boldsymbol{T}) \nonumber
\end{align}

Eqn (\ref{eqn:large_rational_derivation_2}) uses the \emph{associativity} of $\circ$. 

Going from eqn (\ref{eqn:large_rational_derivation_2}) to eqn (\ref{eqn:large_rational_derivation_3}), we have $(\boldsymbol{T^{-1}} \circ \boldsymbol{W}) = (\boldsymbol{W} \circ \boldsymbol{T})^{-1}$. To see why, consider the filtering effect of $\boldsymbol{W}$. $(\boldsymbol{T^{-1}} \circ \boldsymbol{W})(u,v)$ has accepting paths from $\boldsymbol{T^{-1}}$ such that $v \in L_{\boldsymbol{W}}$. Equivalently, these are paths in  $\boldsymbol{T}(v,u)$ with $v \in L_{\boldsymbol{W}}$; such paths can be produced by $\boldsymbol{W} \circ \boldsymbol{T}$. Accounting for the order of accepted sequences, $(u,v)$, we have $(\boldsymbol{T^{-1}} \circ \boldsymbol{W}) = (\boldsymbol{W} \circ \boldsymbol{T})^{-1}$. 

Eqn (\ref{eqn:large_rational_derivation_4}) is slightly tricky. In the previous eqn, we consider the composition of two transducers $A=\boldsymbol{X_i} \circ \boldsymbol{T} ,B=(\boldsymbol{W} \circ \boldsymbol{T})^{-1}$ (we ignore $y_i$ for now since it determines weights, not accepting paths). Because of the filtering effects of $\boldsymbol{X_i}$ and $\boldsymbol{W}$, the only cases with non-zero path weights are:
\begin{align*}
A \circ B(u,v) &= \sum\limits_{z \in \Sigma^*} \boldsymbol{T}(u,z)  \boldsymbol{T}^{-1}(z,v) \text{ where } u \in L_{\boldsymbol{X_i}}, v \in L_{\boldsymbol{W}} \\
&=\sum\limits_{z \in \Sigma^*} \boldsymbol{T}(u,z) \boldsymbol{T}(v,z) 
\end{align*}
Thus, we are interested in accepting paths in $T$, that have output sequences produced by \emph{both} the following kinds of input: (1) the input is filtered by $\boldsymbol{X_i}$ (2) the input is filtered by $\boldsymbol{W}$. In eqn (\ref{eqn:large_rational_derivation_4}), we identify these by filtering the inputs on the transducers first, and then using the output projection $\mathit{\Pi_2}$ automata, with the operator $\diamond$, to screen these common output sequences.

%
%
%
%
%
%
%

$\boldsymbol{\Phi'}_i$ can be precomputed for $i=1,2,..,l$, and we can maintain $\boldsymbol{W'}$ using the update rule:
\begin{equation}
\boldsymbol{W'} \leftarrow \boldsymbol{W'} + \Delta(\alpha_i)\boldsymbol{\Phi'}_i \label{eqn:update_W'}
\end{equation}
The gradient (eqn (\ref{eqn:vanilla_gradient})) can be written as
\begin{equation}
\nabla_i f(\boldsymbol{\alpha}) = (Q\boldsymbol{\alpha})_i - 1 = D(\boldsymbol{\Phi'}_i \diamond \boldsymbol{W'})-1 \label{eqn:grad_update_rational}
\end{equation}


This is similar to the case of linear kernels (eqn (\ref{eqn:w_gradient})), and we may modify Algorithm \ref{algo:dual_coordinate_descent_linear} for the case of rational kernels to obtain Algorithm \ref{algo:dual_coordinate_descent_rational}.\\

\begin{algorithm}[H]
\label{algo:dual_coordinate_descent_rational}
\DontPrintSemicolon
\KwData{Given $\boldsymbol{\alpha}$ and corresponding $\boldsymbol{W'}$}
\KwResult{Find optimal  $\boldsymbol{\alpha}$}\;
\While{$\boldsymbol{\alpha}$ is not optimal}{
\For{$i\leftarrow 1$ \KwTo $l$}{
	$G  = D(\boldsymbol{\Phi'}_i \diamond \boldsymbol{W'})-1$\;

    $G'= 
	\begin{cases}
    G& \text{ if } 0 < \alpha_i < C,\\
    \min(0,G)& \text{ if } \alpha_i = 0,\\
    \max(0,G)& \text{ if } \alpha_i = C.
	\end{cases}
	$\;
	\If{$|G'|\neq0$}{
		$\overline{\alpha_i} \leftarrow \alpha_i$\;
		$\alpha_i \leftarrow \min(\max(\alpha_i - G'/Q_{ii}, 0), C)$\;
		$\boldsymbol{W'} \leftarrow \boldsymbol{W'} + (\alpha_i- \overline{\alpha_i})\boldsymbol{\Phi'}_i$
	}

}
}
\caption{Dual coordinate descent for SVMs using rational kernels}
\end{algorithm}

As we have seen before, time complexities in the case of rational kernels are depend on how we represent $\boldsymbol{W'}$. We will assume $\boldsymbol{\Phi_i}$s and hence, $\boldsymbol{W'}$, are acyclic; this is true for all rational kernels used in practice.  Given an acyclic weighted automaton $A$, we denote by $s(A)$ the maximal length
of an accepting path in $A$ and by $n(A)$ the number of accepting paths in $A$.

A simple way to represent $\boldsymbol{W'}=\sum_{i=1}^{l}\alpha_i\boldsymbol{\Phi'}_i$ is to have an initial state, with $l$ outgoing $\epsilon-$transitions, where the $i^{th}$ edge is weighted $\alpha_i$ and directed at the start state of the transducer $\boldsymbol{\Phi}_i$. Here $|\boldsymbol{W'}| = l + \sum_{i=1}^{l}|\boldsymbol{\Phi'}_i|$. Accommodating updates to $\alpha$ in eqn (\ref{eqn:update_W'}) are $O(1)$, since this involves only changing the weight on one of the $\epsilon-$transitions. However, calculating the gradient using eqn (\ref{eqn:grad_update_rational}) is $O(|\boldsymbol{\Phi'}_i|\sum_{j=1}^{l}|\boldsymbol{\Phi'}_j|)$.

We may represent $\boldsymbol{W'}$ as a \emph{weighted trie}. A weighted trie is a rooted tree where each edge is labeled and each node is weighted. To calculate the composition $\boldsymbol{\Phi'}_i \diamond \boldsymbol{W'}$, by finding sequences accepted by both $\boldsymbol{\Phi'}$ and $\boldsymbol{W'}$, and hence the gradient by eqn (\ref{eqn:grad_update_rational}), we need to look-up paths in the trie corresponding to each accepting path of $n(\boldsymbol{\Phi'}_i)$, and match $s(\boldsymbol{\Phi'}_i)$ nodes - for a total complexity of $O(n(\boldsymbol{\Phi'}_i))(s(\boldsymbol{\Phi'}_i))$. Updates to eqn (\ref{eqn:update_W'}) are slower - at most $n(\boldsymbol{\Phi'}_i)$ node weights need to be updated. Hence, the update complexity is $O(n(\boldsymbol{\Phi'}_i))$. 

There is a lot of benefit to representing $\boldsymbol{W'}$ by a minimal automaton - the number of states can be much lower than the trie representation. The time complexity of the gradient computation is $O(|\boldsymbol{\Phi'}_i \diamond \boldsymbol{W'}|)$. Unfortunately, it is hard to determine the size of the minimal automaton in terms of the component $\boldsymbol{\Phi'}_i$s, hence the expression for the complexity cannot be further simplified. Typically, we expect this complexity to be much better than what we have seen for the trie representation. For the same reason it is hard to determine the update cost for eqn (\ref{eqn:update_W'}).

Table \ref{tab:large_scale_complexities} summarizes the complexities for various representations.

\begin{table}[!t]
\centering
\begin{tabular}{|c|c|c|c|}
\hline
&\multicolumn{2}{c|}{Time Complexity}&\\
\cline{2-3}
 Representation of $\mathbf{W'}$ &gradient&update&Space Complexity\\
\hline \hline
naive ($\mathbf{W'_n}$)&$O(|\boldsymbol{\Phi'_i}|\sum\limits_{i=1}^{l}|\boldsymbol{\Phi'_i}|)$&$O(1)$&$O(l)$\\
trie ($\mathbf{W'_t}$)&$O(n(\boldsymbol{\Phi'_i})l(\boldsymbol{\Phi'_i}))$&$O(n(\boldsymbol{\Phi'_i}))$&$O(|\boldsymbol{W'_t}|)$\\
min. automaton ($\mathbf{W'_m}$)&$O(|\boldsymbol{\Phi'_m} \circ \boldsymbol{W'_m}|)$&NA&$O(|\boldsymbol{W'_m|})$\\
\hline
\end{tabular}
\caption{Time and space complexities for different representations of $\boldsymbol{W'}$}

\label{tab:large_scale_complexities}
\end{table}

Table \ref{tab:large_scale_runtimes} shows the runtimes for a classification task on the \emph{Reuters} dataset. Different n-gram kernels were used with $n=4,5,6,7$. The ``New Algo.'' column displays runtimes for the  trie-based representation. We notice that we do significantly better than the standard SMO training algorithm. A comparison of the sizes of   $\boldsymbol{W'}$ for a trie based representation and a minimal automaton are shown. The savings in space complexity is significant in all cases.

\begin{table}[!t]
\centering
\begin{tabular}{|c|c|c|c|c|c|}
\hline
& \multicolumn{2}{c|}{Runtimes} & \multicolumn{3}{c|}{$|\boldsymbol{W'}|$}\\
\cline{2-6}
 Kernel&SMO-like&New Algo. with Trie&Trie&min. aut.& \% reduction\\
\hline \hline
4-gram & $618m43s$ & $16m30s$ & $242,570$ & $106,640$ & $56$\%\\
5-gram& $>2000m$ & $23m17s$ & $787,514$ & $237,783$ & $69.8$\% \\
6-gram&$>2000m$ & $31m22s$ & $1,852,634$ & $441,242$ & $76.2$\%\\
7-gram&$>2000m$ & $37m23s$ & $3,570,741$ & $727,743$ & $79.6$\% \\
\hline
\end{tabular}
\caption{Runtimes and sizes, when training on Reuters dataset.}

\label{tab:large_scale_runtimes}
\end{table}

\section{Learning Rational Kernels for classification}
\label{sec:learning_kernels}
All rational kernels we have seen so far embody some standard notion of similarity between sequences. An interesting question is, can we play around with the transition weights to obtain a kernel that is better in some sense? Since each transition has an associated weight, we have many ``degrees of freedom'' to set these weights - how do decide on an optimal set of weights?

\cite{4685446} explores the possibility of \emph{learning} these weights given data. The objective is to obtain better performance with certain kernel-based methods. We focus on using SVMs in this section. The discussion is structured thus: we define the kernel matrix corresponding to rational kernels in way that is convenient for further analysis. We then formulate the kernel learning problem using this representation. Finally, we provide a feasible way to learn an optimal kernel matrix.

\subsubsection{The kernel matrix} \cite{4685446} looks only at \emph{count-based rational kernels} i.e. kernels that count the number of occurrences of specific pattern in a string. The bigram-counter we have seen before belongs to this family. In general, a count-based kernel uses a transducer as in Fig \ref{fig:count_based_kernels}. The portion ``$A:A/1$'' counts the subsequences we are interested in. A bigram counter is shown for comparison.

\begin{figure*}[!t]
\centering
\subfloat[General count based rational kernel]{\includegraphics[width=1.5 in]{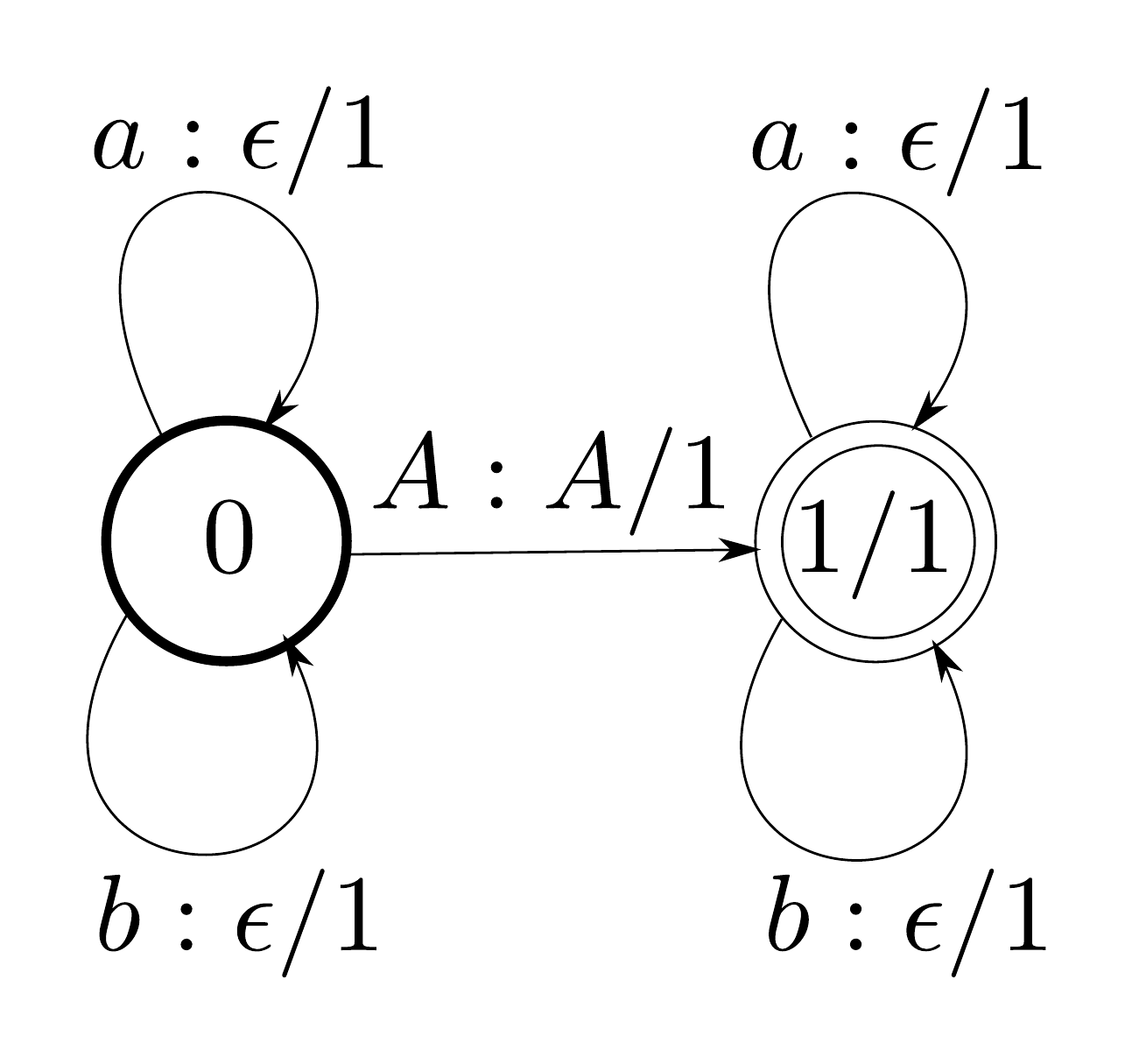}%
\label{fig:count_based_kernel}}
\hfil
\subfloat[A bigram counter. $A:A/1$ is the dashed box.]{\includegraphics[width=2.5in]{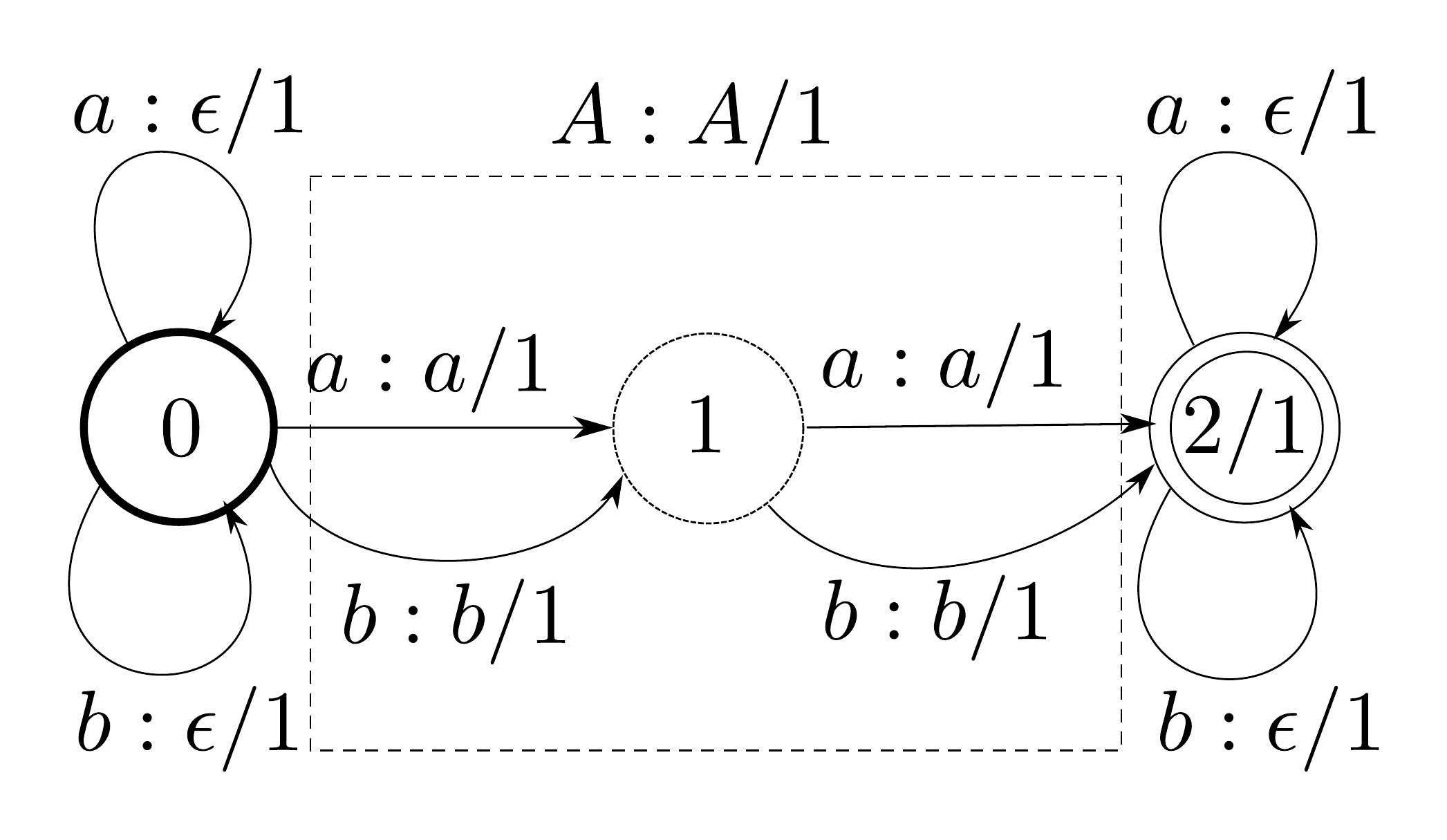}%
\label{fig:bigram_as_count_based}}
\caption{(a) shows a general count-based kernel. The bigram counter, shown in (b), is a specific instance.}
\label{fig:count_based_kernels}
\end{figure*}

We assume we can write the kernel as $k(x_i, x_j) = T\circ T^{-1}(x_i, x_j)$, for instances in our data set $x_i, x_j, 1 \leq i,j \leq l$. This is true for all PDS rational kernels we have seen. We have:
\begin{align}
T\circ T^{-1}(x_i, x_j) &= \sum\limits_{k=1}^{p} T(x_i, z_k) T(x_j, z_k) \nonumber \\
&= \sum\limits_{k=1}^{p} w_k^2 \left|x_i \right|_k \left|x_j\right|_k \label{eqn:count_kernel}
\end{align}
where $|x_i|_k$ denotes the number of occurrences of the sequence $z_k$ in $x_i$ (similarly for $x_j$), and  $w_k$ is the weight associated by $T$ to each occurrence of $z_k$. We assume we are interested in a total of $p$ subsequences to be matched, hence $k \in [1,p]$.

Let $\boldsymbol{X} \in \mathbb{R}^{l \times p}$ denote the matrix defined by $\boldsymbol{X}_{ik} = |x_i|_k$ for $i \in [1, l] \text{ and } k \in [1, p]$, and let $\boldsymbol{X}_k, k \in [1, p]$, denote the $k^{th}$ column of $\boldsymbol{X}$. Using eqn (\ref{eqn:count_kernel}), we can write the kernel matrix as:
\begin{align}
T \circ T^{-1} &= \sum\limits_{k=1}^{p} \mu_k \boldsymbol{X}_k \boldsymbol{X}_k^T \\
&\text{where } \mu_k = w^2_k \nonumber
\end{align}

\subsubsection{Problem formulation}
We use a \textbf{different notation} to denote the labels: $\boldsymbol{Y}\in \mathbb{R}^{l \times l}$ is a diagonal matrix where $\boldsymbol{Y}_{ii} = y_i,$ the label for $x_i$. We denote the column vector of labels with $\boldsymbol{y}$. By $\boldsymbol{C}$, we denote the vector $[C,C, ..., C]^T$, where $C$ is a constant. We will work with a modified form of the SVM dual optimization problem. First, here's the original problem (this is the negated form of eqn (\ref{eqn:dual})):

\begin{align}
&\max_{\boldsymbol{\alpha}} f(\boldsymbol{\alpha}) = 2 \boldsymbol{\alpha}^T \boldsymbol{1} - \sum\limits_{k=1}^{p} \mu_k \boldsymbol{\alpha}^T \boldsymbol{Y}^T \boldsymbol{X}_k \boldsymbol{X}_k^T \boldsymbol{Y} \boldsymbol{\alpha} \label{eqn:kernel_learning_naive_dual}\\
&\text{subject to} \nonumber \\ 
& \;\;\;\;\;\boldsymbol{0} \leq \boldsymbol{\alpha} \leq \boldsymbol{C},
\;\;\;\; \boldsymbol{\alpha}^T \boldsymbol{y} = \boldsymbol{0} \nonumber \\
& \;\;\;\;\;\boldsymbol{\mu} \geq \boldsymbol{0}, \text{ where } \boldsymbol{\mu}^T = [\mu_1, \mu_2, ..., \mu_p] 
\end{align}

Instead of considering this as only a function of $\boldsymbol{\alpha}$, we want to optimize the above wrt the kernel too. The kernel values are decided by $\boldsymbol{\mu}$, hence, we could simply impose an additional minimization wrt $\boldsymbol{\mu}$ in eqn (\ref{eqn:kernel_learning_naive_dual}). Unfortunately, this does not lead to a standard optimization problem.

However, adding a specific constraint helps. For the kind of \emph{thresholding classifier}\footnote{such a classifier decides the label based on whether the decision function is greater than a predetermined value.} we use in SVMs, \cite{Lanckriet:2004:LKM:1005332.1005334} shows the upper-bound of the generalization error is directly proportional to the \emph{trace} of the kernel matrix (see Section 5 in \cite{Lanckriet:2004:LKM:1005332.1005334}). We fix the trace of the kernel matrix in our problem; we would see later that this allows us to present the optimization as a \emph{Quadratic Programming(QP)} problem.


Our optimization problem now becomes:
\begin{align}
&\min_{\boldsymbol{\mu} \in \mathcal{M}} \max_{\boldsymbol{\alpha} \in \mathcal{A}} f(\boldsymbol{\mu}, \boldsymbol{\alpha}) = 2 \boldsymbol{\alpha}^T \boldsymbol{1} - \sum\limits_{k=1}^{p} \mu_k \boldsymbol{\alpha}^T \boldsymbol{Y}^T \boldsymbol{X}_k \boldsymbol{X}_k^T \boldsymbol{Y} \boldsymbol{\alpha} \label{eqn:kernel_learning_final_opt} \\
&\text{Here,} \nonumber \\ 
&\;\;\;\;\;\mathcal{M} = \{\boldsymbol{\mu}: \boldsymbol{\mu} \geq \boldsymbol{0}, \sum\limits_{k=1}^{p}\mu_k \left|\left| \boldsymbol{X}_k \right|\right|^2 = \Lambda \}, \text{ where $\Lambda$ is the trace} \nonumber \\
&\;\;\;\;\;\mathcal{A} = \{ \boldsymbol{0} \leq \boldsymbol{\alpha} \leq \boldsymbol{C}, \boldsymbol{\alpha}^T\boldsymbol{y}=0\} \nonumber 
\end{align}


\subsubsection{Learning the kernel}
We note the following properties:
\begin{enumerate}
\item $\mathcal{M}$ is convex and compact.
\item $\mathcal{A}$ is convex and compact.
\item $f(\boldsymbol{\mu}, \boldsymbol{\alpha})$ is a convex function in $\boldsymbol{\mu}$.
\item $f(\boldsymbol{\mu}, \boldsymbol{\alpha})$ is a concave function in $\boldsymbol{\alpha}$.
\end{enumerate}
See Lemmas  \ref{lemma:m_conv}, \ref{lemma:a_conv}, \ref{lemma:mu_conv}, \ref{lemma:alpha_conv} in the Appendix section for proofs\footnote{the original paper, \cite{4685446}, only mentions these properties; the proofs have been added by us.}.

These properties allow us to apply an extension of \emph{von Neumann's minimax theorem} (proposed and proved in \cite{Fan1953}) to our problem:
\begin{equation} 
\min_{\boldsymbol{\mu} \in \mathcal{M}} \max_{\boldsymbol{\alpha} \in \mathcal{A}} f(\boldsymbol{\mu}, \boldsymbol{\alpha}) = \max_{\boldsymbol{\alpha} \in \mathcal{A}}  \min_{\boldsymbol{\mu} \in \mathcal{M}} f(\boldsymbol{\mu}, \boldsymbol{\alpha})
\end{equation}
Since the term $2 \boldsymbol{\alpha}^T\boldsymbol{1}$ does not depend on $\boldsymbol{\mu}$, this can be further written as:
\begin{equation}
\max_{\boldsymbol{\alpha} \in \mathcal{A}}  \min_{\boldsymbol{\mu} \in \mathcal{M}} f(\boldsymbol{\mu}, \boldsymbol{\alpha}) = \max_{\boldsymbol{\alpha} \in \mathcal{A}} \bigg( 2 \boldsymbol{\alpha}^T\boldsymbol{1} - \max_{\boldsymbol{\mu} \in \mathcal{M}} \sum\limits_{k=1}^{p} \mu_k(\boldsymbol{\alpha}^T \boldsymbol{Y}^T\boldsymbol{X}_k)^2 \bigg)
\end{equation}
Since each term in the summation (last term above) is non-negative, the optimal $\boldsymbol{\mu}$ can be obtained by setting $\mu_k = 0$ for all but the summand with the highest value. Considering the constraint $\sum_{k=1}^{p}\mu_k \left|\left| \boldsymbol{X}_k \right|\right|^2 = \Lambda$, we write the optimization problem as:
\begin{align}
&\max_{\boldsymbol{\alpha} \in \mathcal{A}} \bigg( 2 \boldsymbol{\alpha}^T\boldsymbol{1} - \Lambda \max_{k \in [1,p]} \bigg( \frac{\boldsymbol{\alpha}^T \boldsymbol{Y}^T\boldsymbol{X}_k}{\left|\left| \boldsymbol{X}_k \right|\right|} \bigg)^2\bigg) \nonumber \\
&=\max_{\boldsymbol{\alpha} \in \mathcal{A}} \bigg( 2 \boldsymbol{\alpha}^T\boldsymbol{1} - \Lambda \max_{k \in [1,p]} ( \boldsymbol{\alpha}^T \boldsymbol{u}'_k )^2\bigg) \text{\;\; where $\boldsymbol{u}'_k=\frac{\boldsymbol{Y}^T \boldsymbol{X}_k}{\left|\left| \boldsymbol{X}_k \right|\right|}$} \nonumber \\
&= \max_{\boldsymbol{\alpha} \in \mathcal{A}} \min_{k \in [1,p]}  \bigg( 2 \boldsymbol{\alpha}^T\boldsymbol{1} -\Lambda ( \boldsymbol{\alpha}^T \boldsymbol{u}'_k )^2\bigg) 
\end{align}
Introducing a new variable $t$ we rephrase our optimization problem as follows:
\begin{align}
&\min_{\boldsymbol{\alpha},t} \;\;\;\; -2 \boldsymbol{\alpha}^T\boldsymbol{1} +\Lambda t^2 \\
&\text{subject to}\nonumber \\
& \;\;\;\; \boldsymbol{0} \leq  \boldsymbol{\alpha} \leq \boldsymbol{C}, \boldsymbol{\alpha}^T\boldsymbol{y} = 0 \nonumber \\
& \;\;\;\; -t \leq \boldsymbol{\alpha}^T\boldsymbol{u}'_k \leq t, \forall k \in [1, p] \nonumber   
\end{align}
Since the second constraint wrt $t$ applies to all $k \in [1,p]$, minimizing $t$ across them makes it fit the maximum value of  $\boldsymbol{\alpha}^T\boldsymbol{u}'_k$. It is used here because it helps us pose the problem entirely as a minimization problem.

Let $\boldsymbol{U}' \in \mathbb{R}^{l\times p}$ be the matrix whose $k^{th}$ column is $\boldsymbol{u}'_k$ and introduce the Lagrange  variables $\boldsymbol{\beta}, \boldsymbol{\beta}' \in \mathbb{R}^{p \times 1}, \boldsymbol{\eta}, \boldsymbol{\eta}' \in \mathbb{R}^{l \times 1}$ and $\delta \in \mathbb{R}$ to write the Lagrangian:
\begin{equation}
\mathcal{L}(\boldsymbol{\alpha}, t, \boldsymbol{\beta}, \boldsymbol{\beta}', \boldsymbol{\eta}, \boldsymbol{\eta}', \delta) = -2 \boldsymbol{\alpha}^T \boldsymbol{1} + \Lambda t^2 - \boldsymbol{\eta}^T \boldsymbol{\alpha} + \boldsymbol{\eta}'(\boldsymbol{\alpha}-\boldsymbol{C}) + \delta \boldsymbol{\alpha}^T \boldsymbol{y} - \boldsymbol{\beta}^T(\boldsymbol{U}'^T \boldsymbol{\alpha} + t \boldsymbol{1}) + \boldsymbol{\beta}'^T(\boldsymbol{U}'^T \boldsymbol{\alpha} - t \boldsymbol{1}) \label{eqn:lagrangian}
\end{equation}
Differentiating wrt primal variables $t \text{ and } \boldsymbol{\alpha}$, we have:
\begin{align}
&\nabla_{t} \mathcal{L} =  2t \Lambda - (\boldsymbol{\beta} + \boldsymbol{\beta})^T\boldsymbol{1} = 0 \label{eqn:primal_kernel_learning_t} \\
&\nabla_{\boldsymbol{\alpha}} \mathcal{L} =  -2 \boldsymbol{1} - \boldsymbol{\eta} + \boldsymbol{\eta}' + \delta\boldsymbol{y} + \boldsymbol{U'}(\boldsymbol{\beta}-\boldsymbol{\beta}') = 0 \label{eqn:primal_kernel_learning_alpha} 
\end{align}
Substituting values from eqn (\ref{eqn:primal_kernel_learning_t}) and eqn (\ref{eqn:primal_kernel_learning_alpha}) into the primal eqn (\ref{eqn:lagrangian}), we have the following dual optimization problem:
\begin{align}
&\max_{\boldsymbol{\beta}, \boldsymbol{\beta}', \boldsymbol{\eta}, \boldsymbol{\eta}', \delta} \frac{-1}{4\Lambda} (\boldsymbol{\beta}'+\boldsymbol{\beta})^T(\boldsymbol{1}\boldsymbol{1}^T) (\boldsymbol{\beta}'+\boldsymbol{\beta}) - \boldsymbol{\eta}'^T	\boldsymbol{C} \label{eqn:final_qp_dual} \\
&\text{subject to}\nonumber \\
& \;\;\;\; \boldsymbol{U}'(\boldsymbol{\beta}'-\boldsymbol{\beta}) + (\boldsymbol{\eta}'-\boldsymbol{\eta}) + \delta\boldsymbol{y}-2 \boldsymbol{1} = 0 \nonumber \\
& \;\;\;\; \boldsymbol{\beta}, \boldsymbol{\beta}', \boldsymbol{\eta}, \boldsymbol{\eta}'\geq \boldsymbol{0}, \delta \geq 0 \nonumber
\end{align}
This is a QP problem and standard solvers can be used to obtain a solution. 

Table \ref{tab:kernel_learning_results} shows some results from \cite{4685446} on multiple classification tasks. A task is identified by the name of the dataset used. The last column shows the ratio of error seen with a bigram kernel learned according to eqn (\ref{eqn:final_qp_dual}) to the standard bigram kernel. Results have been averaged over 10 trials and the standard deviation has been reported. We see upto $\sim 15\%$ error reduction.

\begin{table}[!t]
\centering
\begin{tabular}{|l|c|c|}
\hline
Dataset &\# bigrams&Normalized Error\\
\hline \hline
acq&1500&0.9161 $\pm$ 0.0633\\
crude&1200&0.8448 $\pm$ 0.0828\\
earn&900&0.9196 $\pm$ 0.0712\\
grain&1200&0.9707 $\pm$ 0.0294\\
money-fx&1500&0.9682 $\pm$ 0.0396\\
\hline
\end{tabular}
\caption{Performance of kernels that have been learned. All error rates are normalized by the baseline error rate with standard deviation shown over 10 trials.}
\label{tab:kernel_learning_results}
\end{table}

\appendix
\section{Graph Kernels*}

%


\begin{lemma}
Identities:
\begin{align}
&vec(ABC) = (C^T \otimes A)vec(B) \label{eqn:kronecker_vec_identities_1} \\
&(A \otimes B)(C \otimes D) = AC \otimes BD \label{eqn:kronecker_vec_identities_2}
\end{align}
\begin{proof}
We don't prove these identities here,  see \cite{opac-b1116623} for details.
\end{proof}
\end{lemma}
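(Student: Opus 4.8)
The plan is to prove both identities by elementary block-matrix bookkeeping, reducing everything to how $vec$ behaves on a rank-one matrix. I would fix dimensions once and for all: in (\ref{eqn:kronecker_vec_identities_1}) take $A$ of size $p\times m$, $B$ of size $m\times n$, $C$ of size $n\times q$, and in (\ref{eqn:kronecker_vec_identities_2}) assume the sizes are such that $AC$ and $BD$ are defined. The one auxiliary fact I need is that $vec(\boldsymbol{u}\boldsymbol{v}^T) = \boldsymbol{v}\otimes\boldsymbol{u}$ for column vectors $\boldsymbol{u}, \boldsymbol{v}$: this is immediate from Definition \ref{defn:kronecker}, since the $(i,j)$ entry of $\boldsymbol{u}\boldsymbol{v}^T$ is $u_iv_j$, so its $j$-th column is $v_j\boldsymbol{u}$, and stacking the columns yields the block vector whose $j$-th block is $v_j\boldsymbol{u}$ — exactly $\boldsymbol{v}\otimes\boldsymbol{u}$.

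For (\ref{eqn:kronecker_vec_identities_2}) I would multiply the block forms directly. By Definition \ref{defn:kronecker} the $(i,k)$ block of $A\otimes B$ is $A_{ik}B$ and the $(k,j)$ block of $C\otimes D$ is $C_{kj}D$, so the $(i,j)$ block of the product is $\sum_k (A_{ik}B)(C_{kj}D) = \big(\sum_k A_{ik}C_{kj}\big)BD = (AC)_{ij}\,BD$, which is precisely the $(i,j)$ block of $AC\otimes BD$. That settles the second identity.

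For (\ref{eqn:kronecker_vec_identities_1}) I would write $B$ as a sum of rank-one matrices over its columns, $B=\sum_{i=1}^{n} B_{*i}\boldsymbol{e}_i^T$ with $\boldsymbol{e}_i$ the standard basis vectors of $\mathbb{R}^n$, so that $ABC = \sum_i (AB_{*i})(\boldsymbol{e}_i^TC) = \sum_i (AB_{*i})(C^T\boldsymbol{e}_i)^T$. Taking $vec$ of each rank-one summand by the auxiliary fact, then invoking (\ref{eqn:kronecker_vec_identities_2}) on each term, gives
\begin{align*}
vec(ABC) = \sum_i (C^T\boldsymbol{e}_i)\otimes(AB_{*i}) = \sum_i (C^T\otimes A)(\boldsymbol{e}_i\otimes B_{*i}) = (C^T\otimes A)\sum_i (\boldsymbol{e}_i\otimes B_{*i}) = (C^T\otimes A)\,vec(B),
\end{align*}
where the final step uses that $\sum_i \boldsymbol{e}_i\otimes B_{*i}$ is the block vector whose $i$-th block is $B_{*i}$, i.e. the column-stacking $vec(B)$.

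I do not expect a genuine obstacle here: both statements are routine, which is presumably why the cited reference is deferred to. The only thing requiring care is the bookkeeping — keeping every matrix product conformable and checking that the block decomposition implicit in $vec$ lines up with the block decomposition of the Kronecker product in Definition \ref{defn:kronecker}. One could alternatively prove (\ref{eqn:kronecker_vec_identities_1}) by a bare entry-by-entry index chase, but the rank-one reduction above is cleaner and reuses (\ref{eqn:kronecker_vec_identities_2}).
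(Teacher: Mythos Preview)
Your argument is correct. The auxiliary fact $vec(\boldsymbol{u}\boldsymbol{v}^T)=\boldsymbol{v}\otimes\boldsymbol{u}$ follows from Definition~\ref{defn:kronecker} exactly as you say, the block computation for (\ref{eqn:kronecker_vec_identities_2}) is clean, and the rank-one decomposition $B=\sum_i B_{*i}\boldsymbol{e}_i^T$ together with (\ref{eqn:kronecker_vec_identities_2}) yields (\ref{eqn:kronecker_vec_identities_1}) without any gaps.

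As for comparison with the paper: there is nothing to compare against. The paper does not give a proof at all --- it simply cites an external reference and moves on. Your write-up is therefore strictly more self-contained than what appears in the text, and the route you chose (prove the mixed-product identity first, then reduce the $vec$ identity to it via rank-one terms) is a standard and economical way to handle both at once.
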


\begin{lemma}
\label{lemma:pds_graph_kernel_1}
$\forall k \in \mathbb{N}: W_\times^t p_\times = vec[\Phi(X')^t p'(\Phi(X)^T p)^T]$
\begin{proof}
By induction over $t$. Base case: $t = 0$. Using eqn (\ref{eqn:kronecker_vec_identities_1}) we find
\begin{equation*}
W_\times^0 p_\times = p_\times = (p \otimes p')vec(1) = vec(p' 1 p^T) = vec[\Phi(X')^0 p' (\Phi(X)^0 p)^T]
\end{equation*}
We use the inductive assumption for $t: W_\times^t p_\times = vec[\Phi(X')^t p'(\Phi(X)^T p)^T]$. 

For $t+1$:
\begin{align*}
W^{t+1}_\times p_\times = W_\times W_\times^t p_\times &= (\Phi(X) \otimes \Phi(X')) vec[\Phi(X')^t p'(\Phi(X)^t p)^T] \\
&= vec[\Phi(X')\Phi(X')^t p'(\Phi(X)^t p)^T \Phi(X)^T] \text{\emph{\;\;\;\;...(using eqn (\ref{eqn:kronecker_vec_identities_1}))}} \\
&= vec[\Phi(X')^{t+1} p'(\Phi(X)^{t+1} p)^T] 
\end{align*}
\end{proof}
\end{lemma}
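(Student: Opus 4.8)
The plan is to prove the identity by induction on $t$, using nothing beyond the two Kronecker/vec identities of eqns (\ref{eqn:kronecker_vec_identities_1}) and (\ref{eqn:kronecker_vec_identities_2}). (I read the right-hand side as $vec[\Phi(X')^t p'(\Phi(X)^t p)^T]$, matching the exponents in the base case and in the inductive conclusion.) Although those identities are stated for real matrices, they should be invoked here for matrices whose entries live in $\mathcal{H}$, and the first thing I would pin down is that this extension is legitimate: since $\Phi(A)\otimes\Phi(B)$ was defined entrywise through the inner product $\langle\cdot,\cdot\rangle_\mathcal{H}$, a product such as $(\Phi(X)\otimes\Phi(X'))\,vec(M)$ is by definition the same bookkeeping of $\mathcal{H}$-contractions as $vec(\Phi(X')\,M\,\Phi(X)^T)$, so all subsequent steps are purely formal manipulations of Kronecker products.

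For the base case $t=0$, I would note that $W_\times^0 p_\times = p_\times = p\otimes p'$, while the claimed right-hand side is $vec[\Phi(X')^0 p'(\Phi(X)^0 p)^T] = vec[p'\,1\,p^T]$ because $\Phi(X)^0$ and $\Phi(X')^0$ are identity matrices. Applying (\ref{eqn:kronecker_vec_identities_1}) with $A=p'$, $B=1$, $C=p^T$ gives $vec(p'\,1\,p^T) = (p\otimes p')\,vec(1) = p\otimes p'$, which matches, closing the base case.

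For the inductive step, assume $W_\times^t p_\times = vec[\Phi(X')^t p'(\Phi(X)^t p)^T]$. Then $W_\times^{t+1} p_\times = W_\times\bigl(W_\times^t p_\times\bigr) = (\Phi(X)\otimes\Phi(X'))\,vec[\Phi(X')^t p'(\Phi(X)^t p)^T]$, where I have used that $W_\times = \Phi(X)\otimes\Phi(X')$ together with $(\ref{eqn:kronecker_vec_identities_2})$ to justify reading $W_\times^{t+1}$ as $W_\times$ applied to $W_\times^{t}$. Now apply $(\ref{eqn:kronecker_vec_identities_1})$ in the form $(C^T\otimes A)\,vec(B)=vec(ABC)$ with $A=\Phi(X')$, $C=\Phi(X)^T$, and $B=\Phi(X')^t p'(\Phi(X)^t p)^T$, obtaining $vec\bigl[\Phi(X')\Phi(X')^t\,p'\,(\Phi(X)^t p)^T\Phi(X)^T\bigr]$. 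Collapsing $\Phi(X')\Phi(X')^t=\Phi(X')^{t+1}$ and $(\Phi(X)^t p)^T\Phi(X)^T=(\Phi(X)\Phi(X)^t p)^T=(\Phi(X)^{t+1}p)^T$ yields $vec[\Phi(X')^{t+1} p'(\Phi(X)^{t+1} p)^T]$, which closes the induction.

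The one genuinely delicate point — and the step I expect to be the main obstacle — is not the algebra but the bookkeeping over $\mathcal{H}$: one must verify that "$\Phi(X')\Phi(X')^t$" and "$(\Phi(X)^t p)^T\Phi(X)^T$" really are the correct descriptions of the $\mathcal{H}$-inner-product contractions hidden inside $W_\times^t$, i.e.\ that $vec[\Phi(X')^t p'(\Phi(X)^t p)^T]$ is the honest value of $W_\times^t p_\times$ under the definition of $\Phi(A)\otimes\Phi(B)$ given in the excerpt. Once that consistency is established, each inductive step is a single application of an identity and the rest is routine.
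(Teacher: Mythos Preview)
Your proof is correct and follows essentially the same route as the paper: induction on $t$, base case via $(\ref{eqn:kronecker_vec_identities_1})$ applied to $p'\,1\,p^T$, and inductive step via $(\ref{eqn:kronecker_vec_identities_1})$ with $A=\Phi(X')$, $C=\Phi(X)^T$. Your additional remarks about the $\mathcal{H}$-valued bookkeeping and the invocation of $(\ref{eqn:kronecker_vec_identities_2})$ are more careful than the paper itself, which simply writes $W_\times^{t+1}=W_\times W_\times^t$ and applies $(\ref{eqn:kronecker_vec_identities_1})$ without comment.
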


\begin{theorem}
\label{theorem:graph_kernel_PDS}
If the coefficients $\mu(t)$ are such that eqn (\ref{eqn:graph_kernel}) converges, then eqn (\ref{eqn:graph_kernel}) defines a valid PDS kernel.
\begin{proof}
Using lemma \ref{lemma:pds_graph_kernel_1} we can write:
\begin{align*}
q^T_\times W_\times^t p_\times &= (q \otimes q')^T vec[\Phi(X')^t p'(\Phi(X)^t p)^T]\\
&= vec[q'^T\Phi(X')^t p'(\Phi(X)^t p)^T q] \text{\emph{\;\;\;\;...(using eqn (\ref{eqn:kronecker_vec_identities_1}))}}\\
&= \underbrace{(q^T\Phi(X)^t p)^T}_{\rho_t(G)^T} \underbrace{(q'^T\Phi(X')^t p')}_{\rho_t(G')}
\end{align*}
Each individual term of the kernel is the product $\rho_t(G)^T \rho_t(G')$ for some function $\rho_t$, and is therefore a valid PDS kernel. The sum i.e. $k(G,G')$ is also PDS given closure properties of PDS kernels.
\end{proof}
\end{theorem}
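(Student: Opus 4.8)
The plan is to recognize \eqref{eqn:graph_kernel} as a pointwise limit of finite nonnegative combinations of PDS kernels, and then close the argument with the PDS closure properties of Theorem \ref{theorem:pds_kernels}. Concretely, write $k_t(G,G') := q_\times^T W_\times^t p_\times$ for each fixed $t\in\mathbb{N}$, so that $k(G,G')=\sum_{t=0}^\infty \mu(t)\,k_t(G,G')$. The proof then has three stages: (i) show each $k_t$ is a PDS kernel; (ii) since $\mu(t)\ge 0$, conclude $\mu(t)k_t$ is PDS (absorb $\sqrt{\mu(t)}$ into the feature map, or invoke scaling) and hence every partial sum $\sum_{t=0}^N\mu(t)k_t$ is PDS by closure under sum (Theorem \ref{theorem:pds_kernels}, part 1); (iii) because $\mu(t)$ is assumed chosen so that \eqref{eqn:graph_kernel} converges, $k(G,G')=\lim_{N\to\infty}\sum_{t=0}^N\mu(t)k_t(G,G')$ exists for every pair of graphs, so closure under pointwise limit (Theorem \ref{theorem:pds_kernels}, part 4) gives that $k$ is PDS. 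Stage (i) is where the real content lies.

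For stage (i) the idea is to exhibit $k_t$ as an inner product of feature vectors, which makes its Gram matrix on any finite family of graphs automatically a matrix of the form $RR^T$, hence symmetric positive semidefinite. Starting from Lemma \ref{lemma:pds_graph_kernel_1} I would write $q_\times^T W_\times^t p_\times=(q\otimes q')^T\,vec\!\big[\Phi(X')^t p'(\Phi(X)^t p)^T\big]$, and then apply the $vec$–Kronecker identity \eqref{eqn:kronecker_vec_identities_1} (the outer object being a scalar, which is its own $vec$) to collapse this to $\big(q^T\Phi(X)^t p\big)^{T}\big(q'^T\Phi(X')^t p'\big)$. Defining $\rho_t(G):=q^T\Phi(X)^t p$, a vector that depends on $G$ alone, this reads $k_t(G,G')=\langle \rho_t(G),\rho_t(G')\rangle$; any kernel of this form is PDS, since for graphs $G_1,\dots,G_m$ the matrix $[\langle\rho_t(G_i),\rho_t(G_j)\rangle]_{ij}=RR^T$ with $R$ the matrix whose rows are the $\rho_t(G_i)$, which is symmetric and positive semidefinite.

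The main obstacle, and the only step that truly needs care, is that the entries of $\Phi(X)$ live in the RKHS $\mathcal{H}$ rather than in $\mathbb{R}$, so that the "powers'' $\Phi(X)^t$ and the products appearing in the $vec$ identities \eqref{eqn:kronecker_vec_identities_1}--\eqref{eqn:kronecker_vec_identities_2} and in Lemma \ref{lemma:pds_graph_kernel_1} must be interpreted in the appropriate tensor sense — multiplying two $\mathcal{H}$-valued entries means tensoring in $\mathcal{H}$, and $\rho_t(G)$ then lands in $\mathcal{H}^{\otimes t}$ with the final pairing being the inner product on $\mathcal{H}^{\otimes t}$. I would handle this cleanly by first noting that $W_\times=\Phi(X)\otimes\Phi(X')$ is a genuine real matrix (its entries are the scalars $\langle\phi(X_{ij}),\phi(X'_{kl})\rangle_{\mathcal{H}}$), so each $k_t(G,G')$ is an honest real number; and by then specializing, if desired, to the finite-label feature representation of the main text, where $\mathcal{H}=\mathbb{R}^d$ and $\mathcal{H}^{\otimes t}=\mathbb{R}^{d^t}$, so that the whole computation reduces to ordinary finite-dimensional (multi)linear algebra and the identities apply verbatim. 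With that subtlety pinned down, stages (ii)--(iii) are immediate invocations of Theorem \ref{theorem:pds_kernels}.
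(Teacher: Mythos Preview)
Your proposal is correct and follows essentially the same route as the paper: use Lemma~\ref{lemma:pds_graph_kernel_1} together with the $vec$--Kronecker identity \eqref{eqn:kronecker_vec_identities_1} to write $q_\times^T W_\times^t p_\times = \rho_t(G)^T\rho_t(G')$ with $\rho_t(G)=q^T\Phi(X)^t p$, then invoke PDS closure. Your write-up is in fact more careful than the paper's --- you make explicit the roles of nonnegativity of $\mu(t)$, finite sums, and the pointwise limit (Theorem~\ref{theorem:pds_kernels}, parts 1 and 4), and you flag the $\mathcal{H}$-valued-entry subtlety that the paper's proof passes over in silence.
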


\begin{lemma}
\label{lemma:m_conv}
$\mathcal{M}$ is convex and compact.
\begin{proof}
$\mathcal{M}$ is a $(p+1)$-dimensional simplex defined by the points $\left|\left| \boldsymbol{X}_k\right|\right|, k=1,2,...,p$. A simplex is a convex set; hence, $\mathcal{M}$ is convex.

Since $\mathcal{M} \subset \mathbb{R}^p$, $\mathcal{M}$ is compact iff it is \emph{closed} and \emph{bounded}, by the \emph{Heine-Borel theorem}. A simplex is both closed and bounded, hence $\mathcal{M}$ is compact.

\end{proof}
\end{lemma}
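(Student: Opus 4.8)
The set $\mathcal{M} = \{\boldsymbol{\mu} \in \mathbb{R}^p : \boldsymbol{\mu} \geq \boldsymbol{0},\ \sum_{k=1}^{p}\mu_k \lVert \boldsymbol{X}_k \rVert^2 = \Lambda\}$ is the intersection of the closed nonnegative orthant $\mathcal{P} = \{\boldsymbol{\mu} : \mu_k \geq 0 \text{ for all } k\}$ with the affine hyperplane $\mathcal{H} = \{\boldsymbol{\mu} : \langle \boldsymbol{c}, \boldsymbol{\mu}\rangle = \Lambda\}$, where $c_k = \lVert \boldsymbol{X}_k \rVert^2 \geq 0$. The plan is to obtain convexity because both $\mathcal{P}$ and $\mathcal{H}$ are convex, and compactness via the Heine--Borel theorem, by checking that $\mathcal{M}$ is closed and bounded in $\mathbb{R}^p$. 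Throughout I will use the standing assumption that each counted pattern $z_k$ occurs in at least one training string, so that $\lVert \boldsymbol{X}_k \rVert > 0$ for every $k$ (patterns with $\boldsymbol{X}_k = \boldsymbol{0}$ contribute nothing to the kernel and may simply be discarded).

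For convexity, I would argue directly: given $\boldsymbol{\mu}, \boldsymbol{\mu}' \in \mathcal{M}$ and $\theta \in [0,1]$, the vector $\boldsymbol{\nu} = \theta \boldsymbol{\mu} + (1-\theta)\boldsymbol{\mu}'$ has $\nu_k = \theta \mu_k + (1-\theta)\mu_k' \geq 0$ since $\theta, 1-\theta \geq 0$ and $\mu_k, \mu_k' \geq 0$, and $\sum_k \nu_k \lVert \boldsymbol{X}_k \rVert^2 = \theta \sum_k \mu_k \lVert \boldsymbol{X}_k \rVert^2 + (1-\theta)\sum_k \mu_k' \lVert \boldsymbol{X}_k \rVert^2 = \theta \Lambda + (1-\theta)\Lambda = \Lambda$, so $\boldsymbol{\nu} \in \mathcal{M}$. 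Equivalently, $\mathcal{M} = \mathcal{P} \cap \mathcal{H}$ is an intersection of convex sets; geometrically it is the simplex with vertices $(\Lambda / \lVert \boldsymbol{X}_k \rVert^2)\, \boldsymbol{e}_k$, $k = 1, \dots, p$.

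For compactness, I would first show $\mathcal{M}$ is closed: the map $g(\boldsymbol{\mu}) = \sum_k \mu_k \lVert \boldsymbol{X}_k \rVert^2$ is linear, hence continuous, so $\mathcal{H} = g^{-1}(\{\Lambda\})$ is closed, and each half-space $\{\mu_k \geq 0\}$ is closed; a finite intersection of closed sets is closed. Next, boundedness: for any $\boldsymbol{\mu} \in \mathcal{M}$ every summand $\mu_k \lVert \boldsymbol{X}_k \rVert^2$ is nonnegative and the summands total $\Lambda$, so $0 \leq \mu_k \lVert \boldsymbol{X}_k \rVert^2 \leq \Lambda$, whence $0 \leq \mu_k \leq \Lambda / \lVert \boldsymbol{X}_k \rVert^2$ for every $k$; thus $\mathcal{M}$ lies in a bounded box. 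By Heine--Borel, a closed and bounded subset of $\mathbb{R}^p$ is compact, so $\mathcal{M}$ is compact.

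I do not expect a real obstacle here; the one point requiring care is the degenerate possibility $\lVert \boldsymbol{X}_k \rVert = 0$ for some $k$, which would leave $\mu_k$ unconstrained and destroy boundedness. This is exactly why I would invoke the assumption that the matched patterns actually appear in the data, or equivalently restrict attention to the sub-collection of patterns with $\boldsymbol{X}_k \neq \boldsymbol{0}$.
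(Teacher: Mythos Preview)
Your proof is correct and follows essentially the same route as the paper's: both identify $\mathcal{M}$ as a simplex (intersection of the nonnegative orthant with an affine hyperplane) and invoke Heine--Borel for compactness. Your version is more careful---you correctly locate the vertices at $(\Lambda/\lVert\boldsymbol{X}_k\rVert^2)\,\boldsymbol{e}_k$, spell out closedness and boundedness explicitly, and flag the degeneracy $\lVert\boldsymbol{X}_k\rVert=0$---whereas the paper simply asserts that a simplex is convex, closed, and bounded.
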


%
%

\begin{lemma}
\label{lemma:a_conv}
$\mathcal{A}$ is convex and compact.
\begin{proof}
We prove convexity first. 
Let $\boldsymbol{a}, \boldsymbol{b} \in \mathcal{A}$. We have,
\begin{align*}
\boldsymbol{0} \leq \boldsymbol{a} \leq \boldsymbol{C}, \;\;\boldsymbol{a}^T\boldsymbol{y}=0 \\
\boldsymbol{0} \leq \boldsymbol{b} \leq \boldsymbol{C}, \;\;\boldsymbol{b}^T\boldsymbol{y}=0
\end{align*}
Clearly,
\begin{align*}
\lambda\boldsymbol{a} + (1-\lambda)\boldsymbol{b} &= \lambda \boldsymbol{a}^T\boldsymbol{y} + (1-\lambda) \boldsymbol{b}^T\boldsymbol{y} \\
&= \lambda \cdot  0 + (1-\lambda) \cdot 0 = 0
\end{align*}
Also, $\boldsymbol{0} \leq \lambda\boldsymbol{a} + (1-\lambda)\boldsymbol{b} \leq \boldsymbol{C},\text{ since } \boldsymbol{0} \leq \boldsymbol{a,b} \leq \boldsymbol{C} \text{ and } 0 \leq \lambda \leq 1$. Hence, $\boldsymbol{a}, \boldsymbol{b} \in \mathcal{A} \implies \lambda\boldsymbol{a} + (1-\lambda)\boldsymbol{b} \in \mathcal{A}, \text{ for } 0 \leq \lambda \leq 1$. This proves $\mathcal{A}$ is convex.

Since $\boldsymbol{\alpha} \subset \mathbb{R}^l$, as in Lemma \ref{lemma:m_conv}, to prove compactness we need to only show $\mathcal{A}$ is closed and bounded.
That $\mathcal{A}$ is bounded is obvious, since the points in $\mathcal{A}$ is contained with the hypercube with edge length $C$. To see that $\mathcal{A}$ is closed note that $\boldsymbol{\alpha}^T\boldsymbol{y}=0$ is a hyperplane, which is a closed set.
\end{proof}
\end{lemma}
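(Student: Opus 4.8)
The plan is to treat $\mathcal{A}$ as the intersection of a hypercube, coming from the box constraints $\boldsymbol{0} \leq \boldsymbol{\alpha} \leq \boldsymbol{C}$, and a hyperplane, coming from the equality $\boldsymbol{\alpha}^T\boldsymbol{y} = 0$, and to establish convexity and compactness separately, in the same spirit as the treatment of $\mathcal{M}$ in Lemma \ref{lemma:m_conv}. Since $\boldsymbol{\alpha} \in \mathbb{R}^l$, this reduces both properties to elementary facts about boxes and hyperplanes in Euclidean space.

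For convexity, I would take two arbitrary points $\boldsymbol{a}, \boldsymbol{b} \in \mathcal{A}$ and a scalar $\lambda \in [0,1]$, and check that $\lambda\boldsymbol{a} + (1-\lambda)\boldsymbol{b}$ satisfies both defining constraints. The equality constraint follows immediately from linearity of the inner product: since $\boldsymbol{a}^T\boldsymbol{y} = \boldsymbol{b}^T\boldsymbol{y} = 0$, we obtain $(\lambda\boldsymbol{a} + (1-\lambda)\boldsymbol{b})^T\boldsymbol{y} = \lambda \cdot 0 + (1-\lambda)\cdot 0 = 0$. The box constraint follows coordinatewise: each coordinate of the convex combination is a convex combination of two numbers already lying in the interval $[0,C]$, hence again lies in $[0,C]$, so $\boldsymbol{0} \leq \lambda\boldsymbol{a} + (1-\lambda)\boldsymbol{b} \leq \boldsymbol{C}$.

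For compactness, because $\mathcal{A} \subseteq \mathbb{R}^l$ I would invoke the \emph{Heine-Borel theorem} and show that $\mathcal{A}$ is closed and bounded. Boundedness is immediate, since the box constraints confine $\mathcal{A}$ to the hypercube $[0,C]^l$. For closedness, I would write $\mathcal{A}$ as an intersection of closed sets: the hypercube $[0,C]^l$ is closed, and the hyperplane $\{\boldsymbol{\alpha} : \boldsymbol{\alpha}^T\boldsymbol{y} = 0\}$ is closed, being the preimage of the closed singleton $\{0\}$ under the continuous map $\boldsymbol{\alpha} \mapsto \boldsymbol{\alpha}^T\boldsymbol{y}$; a finite intersection of closed sets is closed.

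There is no substantive obstacle in this argument — every step is routine. The only point requiring a little care is keeping the two families of constraints cleanly separated: the equality constraint is what drives the hyperplane part of both arguments (linearity for convexity, preimage-of-closed for closedness), while the box constraints drive the coordinatewise convexity and the boundedness. I would also remark in passing that $\mathcal{A}$ is nonempty, since $\boldsymbol{\alpha} = \boldsymbol{0}$ trivially satisfies $\boldsymbol{0} \leq \boldsymbol{0} \leq \boldsymbol{C}$ and $\boldsymbol{0}^T\boldsymbol{y} = 0$; this nonemptiness is what makes the subsequent minimax application over $\mathcal{A}$ meaningful.
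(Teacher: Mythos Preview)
Your proposal is correct and follows essentially the same route as the paper: verify the two defining constraints are preserved under convex combinations, then invoke Heine--Borel with boundedness from the hypercube and closedness from the hyperplane. Your closedness argument is actually slightly tidier than the paper's, since you explicitly note that $\mathcal{A}$ is the intersection of the closed hypercube $[0,C]^l$ with the closed hyperplane, whereas the paper only mentions the hyperplane being closed.
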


%

\begin{lemma}
\label{lemma:mu_conv}
$f(\boldsymbol{\mu}, \boldsymbol{\alpha})$ is a convex function in $\boldsymbol{\mu}$.
\begin{proof}
To prove $f(\boldsymbol{\mu}, \boldsymbol{\alpha})$ is convex in $\boldsymbol{\mu}$, we need to prove:
\begin{equation*}
\forall \boldsymbol{\mu}_1, \boldsymbol{\mu}_2 \in \mathcal{M}, \forall t \in [0,1], \;\;\;\; f(t\boldsymbol{\mu}_1 + (1-t)\boldsymbol{\mu}_2, \boldsymbol{\alpha}) \leq t f(\boldsymbol{\mu}_1, \boldsymbol{\alpha}) + (1-t)f(\boldsymbol{\mu}_2, \boldsymbol{\alpha})
\end{equation*}
Expanding the RHS:
\begin{align*}
f(t\boldsymbol{\mu}_1 + (1-t)\boldsymbol{\mu}_2, \boldsymbol{\alpha}) &= 2 \boldsymbol{\alpha}^T \boldsymbol{1} - \sum\limits_{k=1}^{p} (t \mu_{1k} + (1-t)\mu_{2k}) \boldsymbol{\alpha}^T \boldsymbol{Y}^T \boldsymbol{X}_k \boldsymbol{X}_k^T \boldsymbol{Y} \boldsymbol{\alpha} \\
&=2 \boldsymbol{\alpha}^T \boldsymbol{1} - \sum\limits_{k=1}^{p} t \mu_{1k} \boldsymbol{\alpha}^T \boldsymbol{Y}^T \boldsymbol{X}_k \boldsymbol{X}_k^T \boldsymbol{Y} \boldsymbol{\alpha} - \sum\limits_{k=1}^{p} (1-t)\mu_{2k} \boldsymbol{\alpha}^T \boldsymbol{Y}^T \boldsymbol{X}_k \boldsymbol{X}_k^T \boldsymbol{Y} \boldsymbol{\alpha} \\
&=2 (t+(1-t))\boldsymbol{\alpha}^T \boldsymbol{1} - \sum\limits_{k=1}^{p} t \mu_{1k} \boldsymbol{\alpha}^T \boldsymbol{Y}^T \boldsymbol{X}_k \boldsymbol{X}_k^T \boldsymbol{Y} \boldsymbol{\alpha} - \sum\limits_{k=1}^{p} (1-t)\mu_{2k} \boldsymbol{\alpha}^T \boldsymbol{Y}^T \boldsymbol{X}_k \boldsymbol{X}_k^T \boldsymbol{Y} \boldsymbol{\alpha} \\
&=\bigg(2 t\boldsymbol{\alpha}^T \boldsymbol{1} - \sum\limits_{k=1}^{p} t \mu_{1k} \boldsymbol{\alpha}^T \boldsymbol{Y}^T \boldsymbol{X}_k \boldsymbol{X}_k^T \boldsymbol{Y} \boldsymbol{\alpha}\bigg) + \bigg(2 (t-1) \boldsymbol{\alpha}^T \boldsymbol{1} - \sum\limits_{k=1}^{p} (1-t)\mu_{2k} \boldsymbol{\alpha}^T \boldsymbol{Y}^T \boldsymbol{X}_k \boldsymbol{X}_k^T \boldsymbol{Y} \boldsymbol{\alpha} \bigg)\\
&=t\bigg(2 \boldsymbol{\alpha}^T \boldsymbol{1} - \sum\limits_{k=1}^{p}  \mu_{1k} \boldsymbol{\alpha}^T \boldsymbol{Y}^T \boldsymbol{X}_k \boldsymbol{X}_k^T \boldsymbol{Y} \boldsymbol{\alpha}\bigg) + (t-1)\bigg(2  \boldsymbol{\alpha}^T \boldsymbol{1} - \sum\limits_{k=1}^{p} \mu_{2k} \boldsymbol{\alpha}^T \boldsymbol{Y}^T \boldsymbol{X}_k \boldsymbol{X}_k^T \boldsymbol{Y} \boldsymbol{\alpha} \bigg)\\
&= tf(\boldsymbol{\mu}_1, \boldsymbol{\alpha}) + (1-t)f(\boldsymbol{\mu}_2, \boldsymbol{\alpha})
\end{align*}
Hence proved.
\end{proof}
\end{lemma}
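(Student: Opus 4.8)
The plan is to notice that for a \emph{fixed} $\boldsymbol{\alpha}$ the map $\boldsymbol{\mu} \mapsto f(\boldsymbol{\mu},\boldsymbol{\alpha})$ is \emph{affine}, and an affine function is convex (in fact both convex and concave). So essentially all the work is rewriting $f$ to expose that structure; there is no genuine analytic difficulty.

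Concretely, I would fix $\boldsymbol{\alpha} \in \mathcal{A}$ and set $c = 2\boldsymbol{\alpha}^T\boldsymbol{1}$ and, for each $k \in [1,p]$, $a_k = \boldsymbol{\alpha}^T \boldsymbol{Y}^T \boldsymbol{X}_k \boldsymbol{X}_k^T \boldsymbol{Y} \boldsymbol{\alpha} = \| \boldsymbol{X}_k^T \boldsymbol{Y} \boldsymbol{\alpha}\|^2 \ge 0$. The point to stress is that $c$ and every $a_k$ depend only on the frozen $\boldsymbol{\alpha}$ and on the fixed count matrix $\boldsymbol{X}$, and carry \emph{no} dependence on $\boldsymbol{\mu}$. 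Hence $f(\boldsymbol{\mu},\boldsymbol{\alpha}) = c - \sum_{k=1}^{p} a_k \mu_k = c - \boldsymbol{a}^T \boldsymbol{\mu}$, which is manifestly affine in $\boldsymbol{\mu}$.

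From there I would invoke the elementary fact that an affine function $g(\boldsymbol{\mu}) = c - \boldsymbol{a}^T\boldsymbol{\mu}$ satisfies $g(t\boldsymbol{\mu}_1 + (1-t)\boldsymbol{\mu}_2) = t\,g(\boldsymbol{\mu}_1) + (1-t)\,g(\boldsymbol{\mu}_2)$ for all $t \in [0,1]$, so the convexity inequality holds with equality. If one prefers a self-contained argument matching the style of the neighbouring lemmas, one instead expands $f(t\boldsymbol{\mu}_1 + (1-t)\boldsymbol{\mu}_2,\boldsymbol{\alpha})$ directly, splits the sum along $t\mu_{1k} + (1-t)\mu_{2k}$, and rewrites the constant term as $2(t + (1-t))\boldsymbol{\alpha}^T\boldsymbol{1}$; the two resulting pieces regroup into $t\,f(\boldsymbol{\mu}_1,\boldsymbol{\alpha}) + (1-t)\,f(\boldsymbol{\mu}_2,\boldsymbol{\alpha})$.

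The only thing requiring any care — and the nearest thing to an obstacle — is the bookkeeping showing that each coefficient multiplying a $\mu_k$ is a constant in $\boldsymbol{\mu}$, which is exactly what makes $f$ affine rather than merely quadratic in $\boldsymbol{\mu}$. Once that is observed, convexity in $\boldsymbol{\mu}$ (and concavity in $\boldsymbol{\mu}$ too, should it ever be needed) is immediate. This is in contrast to the companion statement on concavity in $\boldsymbol{\alpha}$, where the quadratic form $\sum_k \mu_k\,\boldsymbol{\alpha}^T\boldsymbol{Y}^T\boldsymbol{X}_k\boldsymbol{X}_k^T\boldsymbol{Y}\boldsymbol{\alpha}$ must actually be argued to be positive semidefinite.
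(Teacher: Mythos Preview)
Your proposal is correct and matches the paper's approach: the paper verifies convexity by the very direct expansion you describe as the ``self-contained'' alternative, splitting $t\mu_{1k}+(1-t)\mu_{2k}$ and rewriting $2\boldsymbol{\alpha}^T\boldsymbol{1} = 2(t+(1-t))\boldsymbol{\alpha}^T\boldsymbol{1}$ to obtain equality in the convexity inequality. Your framing via affinity is a cleaner way to say the same thing, and correctly identifies that equality (not just $\leq$) holds.
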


\begin{lemma}
\label{lemma:alpha_conv}	
$f(\boldsymbol{\mu}, \boldsymbol{\alpha})$ is a concave function in $\boldsymbol{\alpha}$.
\begin{proof}
To prove $f(\boldsymbol{\mu}, \boldsymbol{\alpha})$ is concave in $\boldsymbol{\alpha}$, we need to prove $-f(\boldsymbol{\mu}, \boldsymbol{\alpha})$ is convex in $\boldsymbol{\alpha}$, i.e.:
\begin{equation*}
\forall \boldsymbol{\alpha}_1, \boldsymbol{\alpha}_2 \in \mathcal{A}, \forall t \in [0,1], \;\;\;\; -f( \boldsymbol{\mu}, t\boldsymbol{\alpha}_1 + (1-t)\boldsymbol{\alpha}_2) \leq t (-f(\boldsymbol{\mu}, \boldsymbol{\alpha}_1)) + (1-t)(-f(\boldsymbol{\mu}, \boldsymbol{\alpha}_2))
\end{equation*}
Since $\sum_{k=1}^{p} \mu_k \boldsymbol{Y}^T \boldsymbol{X}_k \boldsymbol{X}_k^T$ is positive-definite and symmetric, and $\boldsymbol{Y}$ is invertible (as it has non-zero diagonal entries), we note that $ \sum_{k=1}^{p} \mu_k \boldsymbol{Y}^T \boldsymbol{X}_k \boldsymbol{X}_k^T \boldsymbol{Y}$ is positive-definite and symmetric (property 4 in the Section ``Further Properties'', \cite{wiki:PDS_properties}). We denote this quantity by $\boldsymbol{P}$.

LHS:
\begin{align}
-f( \boldsymbol{\mu}, t\boldsymbol{\alpha}_1 + (1-t)\boldsymbol{\alpha}_2) &=-2( t\boldsymbol{\alpha}_1 + (1-t)\boldsymbol{\alpha}_2)^T \boldsymbol{1} + ( t\boldsymbol{\alpha}_1 + (1-t)\boldsymbol{\alpha}_2)^T \boldsymbol{P} ( t\boldsymbol{\alpha}_1 + (1-t)\boldsymbol{\alpha}_2) \nonumber \\
&=-2t\boldsymbol{\alpha}_1^T \boldsymbol{1} -2(1-t)\boldsymbol{\alpha}_2^T \boldsymbol{1} + ( t\boldsymbol{\alpha}_1 + (1-t)\boldsymbol{\alpha}_2)^T \boldsymbol{P} ( t\boldsymbol{\alpha}_1 + (1-t)\boldsymbol{\alpha}_2) \nonumber \\
&=-2t\boldsymbol{\alpha}_1^T \boldsymbol{1} -2(1-t)\boldsymbol{\alpha}_2^T \boldsymbol{1} + t^2 \boldsymbol{\alpha}_1 ^T \boldsymbol{P} \boldsymbol{\alpha}_1 +(1-t)^2 \boldsymbol{\alpha}_2 ^T \boldsymbol{P} \boldsymbol{\alpha}_2 + t(1-t) (\boldsymbol{\alpha}_1 ^T \boldsymbol{P} \boldsymbol{\alpha}_2 +  \boldsymbol{\alpha}_2 ^T \boldsymbol{P} \boldsymbol{\alpha}_1)\nonumber
\end{align}

RHS:
\begin{align}
t (-f(\boldsymbol{\mu}, \boldsymbol{\alpha}_1)) + (1-t)(-f(\boldsymbol{\mu}, \boldsymbol{\alpha}_2)) &=  -2t\boldsymbol{\alpha}_1^T \boldsymbol{1} +t \boldsymbol{\alpha}_1 ^T \boldsymbol{P} \boldsymbol{\alpha}_1 -2(1-t)\boldsymbol{\alpha}_2^T \boldsymbol{1} + (1-t) \boldsymbol{\alpha}_2 ^T \boldsymbol{P} \boldsymbol{\alpha}_2 \nonumber
\end{align}
The terms deriving from $2\boldsymbol{\alpha}^T\boldsymbol{1}$ cancel out on both sides. We are left with determining the relationship between the remaining terms:
\begin{align*}
t^2 \boldsymbol{\alpha}_1 ^T \boldsymbol{P} \boldsymbol{\alpha}_1 +(1-t)^2 \boldsymbol{\alpha}_2 ^T \boldsymbol{P} \boldsymbol{\alpha}_2 + t(1-t) (\boldsymbol{\alpha}_1 ^T \boldsymbol{P} \boldsymbol{\alpha}_2 +  \boldsymbol{\alpha}_2 ^T \boldsymbol{P} \boldsymbol{\alpha}_1) &\lesseqgtr t \boldsymbol{\alpha}_1 ^T \boldsymbol{P} \boldsymbol{\alpha}_1  + (1-t) \boldsymbol{\alpha}_2 ^T \boldsymbol{P} \boldsymbol{\alpha}_2 \nonumber \\
\implies t(1-t) (\boldsymbol{\alpha}_1 ^T \boldsymbol{P} \boldsymbol{\alpha}_2 +  \boldsymbol{\alpha}_2 ^T \boldsymbol{P} \boldsymbol{\alpha}_1) &\lesseqgtr t(1-t) \boldsymbol{\alpha}_1 ^T \boldsymbol{P} \boldsymbol{\alpha}_1  + t(1-t) \boldsymbol{\alpha}_2 ^T \boldsymbol{P} \boldsymbol{\alpha}_2 \nonumber
\end{align*}
Since $t(1-t) \geq 0$, we cancel it out from both sides without changing the relationship.
\begin{align*}
&\boldsymbol{\alpha}_1 ^T \boldsymbol{P} \boldsymbol{\alpha}_2 +  \boldsymbol{\alpha}_2 ^T \boldsymbol{P} \boldsymbol{\alpha}_1 \lesseqgtr  \boldsymbol{\alpha}_1 ^T \boldsymbol{P} \boldsymbol{\alpha}_1  +  \boldsymbol{\alpha}_2 ^T \boldsymbol{P} \boldsymbol{\alpha}_2 \nonumber \\
\implies &\boldsymbol{\alpha}_1 ^T \boldsymbol{P} (\boldsymbol{\alpha}_2 - \boldsymbol{\alpha}_1) + \boldsymbol{\alpha}_2 ^T \boldsymbol{P} (\boldsymbol{\alpha}_1- \boldsymbol{\alpha}_2) \lesseqgtr 0 \nonumber\\
\implies & (\boldsymbol{\alpha}_2 ^T \boldsymbol{P}-\boldsymbol{\alpha}_1 ^T \boldsymbol{P})(\boldsymbol{\alpha}_1- \boldsymbol{\alpha}_2) \lesseqgtr  0 \\
\implies & (\boldsymbol{\alpha}_2 ^T -\boldsymbol{\alpha}_1 ^T )\boldsymbol{P} (\boldsymbol{\alpha}_1- \boldsymbol{\alpha}_2) \lesseqgtr  0 \\
\implies & -(\boldsymbol{\alpha}_1 -\boldsymbol{\alpha}_2  )^T\boldsymbol{P} (\boldsymbol{\alpha}_1- \boldsymbol{\alpha}_2) \lesseqgtr  0 \\
\implies & -(\boldsymbol{\alpha}_1 -\boldsymbol{\alpha}_2  )^T\boldsymbol{P} (\boldsymbol{\alpha}_1- \boldsymbol{\alpha}_2) \boldsymbol{<}  0 \text{\;\;\;\;...(since $\boldsymbol{P}$ is positive definite)} \\
\end{align*}
Thus, LHS $<$ RHS. Hence proved.
\end{proof}
\end{lemma}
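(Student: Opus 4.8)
The plan is to decompose $f(\boldsymbol{\mu},\boldsymbol{\alpha})$ into its affine part and its quadratic part in $\boldsymbol{\alpha}$, and to show each is concave in $\boldsymbol{\alpha}$ for a fixed $\boldsymbol{\mu}\in\mathcal{M}$. Writing $\boldsymbol{P}=\sum_{k=1}^{p}\mu_k\boldsymbol{Y}^T\boldsymbol{X}_k\boldsymbol{X}_k^T\boldsymbol{Y}$, we have $f(\boldsymbol{\mu},\boldsymbol{\alpha})=2\boldsymbol{\alpha}^T\boldsymbol{1}-\boldsymbol{\alpha}^T\boldsymbol{P}\boldsymbol{\alpha}$. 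The term $2\boldsymbol{\alpha}^T\boldsymbol{1}$ is affine, hence simultaneously convex and concave, so it suffices to show that $-\boldsymbol{\alpha}^T\boldsymbol{P}\boldsymbol{\alpha}$ is concave, equivalently that $\boldsymbol{\alpha}\mapsto\boldsymbol{\alpha}^T\boldsymbol{P}\boldsymbol{\alpha}$ is convex, equivalently that $\boldsymbol{P}$ is positive semidefinite.

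First I would observe that each summand factors as an outer product $\boldsymbol{Y}^T\boldsymbol{X}_k\boldsymbol{X}_k^T\boldsymbol{Y}=(\boldsymbol{Y}^T\boldsymbol{X}_k)(\boldsymbol{Y}^T\boldsymbol{X}_k)^T$, which is symmetric and positive semidefinite since $\boldsymbol{v}^T(\boldsymbol{Y}^T\boldsymbol{X}_k)(\boldsymbol{Y}^T\boldsymbol{X}_k)^T\boldsymbol{v}=\big((\boldsymbol{Y}^T\boldsymbol{X}_k)^T\boldsymbol{v}\big)^2\ge 0$ for every $\boldsymbol{v}\in\mathbb{R}^l$. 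Next, because membership in $\mathcal{M}$ forces $\mu_k\ge 0$ for all $k$, the matrix $\boldsymbol{P}$ is a nonnegative linear combination of positive semidefinite matrices and is therefore itself positive semidefinite; hence $\boldsymbol{\alpha}^T\boldsymbol{P}\boldsymbol{\alpha}$ is convex and, adding back the affine term and negating, $f(\boldsymbol{\mu},\cdot)$ is concave on all of $\mathbb{R}^l$, in particular on $\mathcal{A}$. If a self-contained argument via the definition is preferred, I would instead expand $f(\boldsymbol{\mu},t\boldsymbol{\alpha}_1+(1-t)\boldsymbol{\alpha}_2)$ and compare it with $t f(\boldsymbol{\mu},\boldsymbol{\alpha}_1)+(1-t)f(\boldsymbol{\mu},\boldsymbol{\alpha}_2)$: the affine contributions cancel and the required inequality collapses to $t(1-t)(\boldsymbol{\alpha}_1-\boldsymbol{\alpha}_2)^T\boldsymbol{P}(\boldsymbol{\alpha}_1-\boldsymbol{\alpha}_2)\ge 0$, which again follows from positive semidefiniteness of $\boldsymbol{P}$.

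There is no substantial obstacle; the only subtlety worth flagging is the distinction between positive \emph{semi}definiteness and positive definiteness of $\boldsymbol{P}$. For concavity (rather than strict concavity) PSD is all that is needed, and that holds unconditionally from $\mu_k\ge 0$ and the outer-product structure of the summands. Asserting strict concavity would additionally require the vectors $\{\boldsymbol{Y}^T\boldsymbol{X}_k:\mu_k>0\}$ to span $\mathbb{R}^l$ — together with invertibility of $\boldsymbol{Y}$, which does hold since its diagonal entries $y_i$ are nonzero — but this is not needed for the subsequent application of the minimax theorem, so I would not pursue it.
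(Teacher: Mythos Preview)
Your proposal is correct and follows essentially the same route as the paper: write $f(\boldsymbol{\mu},\boldsymbol{\alpha})=2\boldsymbol{\alpha}^T\boldsymbol{1}-\boldsymbol{\alpha}^T\boldsymbol{P}\boldsymbol{\alpha}$ with $\boldsymbol{P}=\sum_{k}\mu_k\boldsymbol{Y}^T\boldsymbol{X}_k\boldsymbol{X}_k^T\boldsymbol{Y}$, note the affine part is harmless, and reduce the concavity inequality to $t(1-t)(\boldsymbol{\alpha}_1-\boldsymbol{\alpha}_2)^T\boldsymbol{P}(\boldsymbol{\alpha}_1-\boldsymbol{\alpha}_2)\ge 0$. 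The paper carries out exactly this expansion term by term.

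The one place you are actually cleaner than the paper is the justification of the sign of the quadratic form. The paper argues that $\boldsymbol{P}$ is positive \emph{definite} by appealing to invertibility of $\boldsymbol{Y}$ and a cited property, and concludes with a strict inequality. You instead observe directly that each summand is an outer product $(\boldsymbol{Y}^T\boldsymbol{X}_k)(\boldsymbol{Y}^T\boldsymbol{X}_k)^T$, hence PSD, and that a nonnegative combination of PSD matrices is PSD; you also correctly flag that only semidefiniteness is needed for (non-strict) concavity and for the minimax theorem. That is the right level of care: positive definiteness of $\boldsymbol{P}$ is not guaranteed in general (the rank-one summands need not span $\mathbb{R}^l$), so your weaker claim is both sufficient and more accurate.
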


\label{app:graph_kernels}
\bibliographystyle{ieeetr}
\bibliography{ref}

\end{document}